\definecolor{red}{HTML}{F44336}
\definecolor{green}{HTML}{4CAF50}
\definecolor{yellow}{HTML}{FFEE58}
\definecolor{blue}{HTML}{0D47A1}
\newtheorem{theorem}{Theorem}
\newtheorem{proposition}{Proposition}
\newtheorem{lemma}{Lemma}
\theoremstyle{definition}
\newtheorem{definition}{Definition}
\newtheorem{remark}{Remark}
\newcommand{\0}{\bm{0}}
\renewcommand{\a}{\bm{a}}
\renewcommand{\b}{\bm{b}}
\newcommand{\x}{\bm{x}}
\newcommand{\y}{\bm{y}}
\newcommand{\z}{\bm{z}}
\newcommand{\g}{\bm{g}}
\newcommand{\I}{\bm{I}}
\newcommand{\W}{\bm{W}}
\newcommand{\X}{\bm{X}}
\newcommand{\Y}{\bm{Y}}
\newcommand{\Z}{\bm{Z}}
\newcommand{\G}{\bm{G}}
\newcommand{\bdel}{\boldsymbol{\delta}}
\newcommand{\beps}{\boldsymbol{\varepsilon}}
\newcommand{\bmu}{\boldsymbol{\mu}}
\newcommand{\bSigma}{\boldsymbol{\Sigma}}
\newcommand{\EE}{\mathbb{E}}
\newcommand{\PP}{\mathbb{P}}
\newcommand{\RR}{\mathbb{R}}
\newcommand{\bbmone}{\mathds{1}}
\newcommand{\cA}{\mathcal{A}}
\newcommand{\cE}{\mathcal{E}}
\newcommand{\cF}{\mathcal{F}}
\newcommand{\cH}{\mathcal{H}}
\newcommand{\cM}{\mathcal{M}}
\newcommand{\cN}{\mathcal{N}}
\newcommand{\cP}{\mathcal{P}}
\newcommand{\abs}[1]{\lvert #1 \rvert}
\newcommand{\inner}[2]{\langle #1, #2 \rangle}
\newcommand{\norm}[1]{\lVert #1 \rVert}
\newcommand{\RIG}{\mathrm{RIG}}
\newcommand{\NIRIG}{\mathrm{NIRIG}}
\newcommand{\Bern}{\mathrm{Bern}}
\newcommand{\Binom}{\mathrm{Binom}}
\newcommand{\TV}{\mathrm{TV}}
\newcommand{\KL}{\mathrm{KL}}
\renewcommand{\vec}{\mathrm{vec}}
\DeclareMathOperator*{\argmax}{arg\,max}
\DeclareMathOperator*{\argmin}{arg\,min}
\begin{document}

%

%

\twocolumn[

\aistatstitle{Perfect Recovery for Random Geometric Graph Matching with Shallow Graph Neural Networks}

\aistatsauthor{ Suqi Liu \And Morgane Austern }

\aistatsaddress{ Harvard University \And Harvard University } ]

\begin{abstract}%
We study the graph matching problem in the presence of vertex feature information using shallow graph neural networks. Specifically, given two graphs that are independent perturbations of a single random geometric graph with sparse binary features, the task is to recover an unknown one-to-one mapping between the vertices of the two graphs. We show under certain conditions on the sparsity and noise level of the feature vectors, a carefully designed two-layer graph neural network can, with high probability, recover the correct mapping between the vertices with the help of the graph structure. Additionally, we prove that our condition on the noise parameter is tight up to logarithmic factors. Finally, we compare the performance of the graph neural network to directly solving an assignment problem using the noisy vertex features and demonstrate that when the noise level is at least constant, this direct matching fails to achieve perfect recovery, whereas the graph neural network can tolerate noise levels growing as fast as a power of the size of the graph. Our theoretical findings are further supported by numerical studies as well as real-world data experiments.
\end{abstract}

\section{INTRODUCTION}
Graph neural networks~(GNNs)~\citep{kipf2016semi} have seen broad application
in many important domains involving graph-structured data since their inception,
including social networks~\citep{hamilton2017inductive},
computational biology~\citep{fan2019graph},
chemistry~\citep{gilmer2017neural},
and knowledge graphs~\citep{schlichtkrull2018modeling}.
A standard graph neural network model, also known as a message passing neural network,
consists of traditional multilayer perceptrons~(MLPs) injected with
a message passing step that aggregates the hidden representation from
neighboring vertices in the graph.
Although the message passing idea seems rather simple,
GNNs have achieved wide success in various tasks,
including node classification, link prediction, and learning graph properties
(see \cite{zhou2020graph} for a recent survey and references therein).

Despite their wide popularity and dominant performance in graph-based tasks,
the theoretical understanding of GNNs is only emerging
in recent years~\citep{jegelka2022gnn}.
Most of the works focus on traditional learning theory aspects
such as representation power~\citep{loukas2020graph}
and generalization~\citep{scarselli2018vapnik}.
A few of them touch on more classic graph-theoretic problems such as
graph isomorphism~\citep{morris2019weisfeiler}
and graph properties~\citep{garg2020generalization}.
Recently there has been a growing literature studying GNNs
using graphons~\citep{ruiz2021graphon,ruiz2023transferability,chung2024statistical}.
However, there is still a divide between the theory of GNNs
and how they are used in the real world.
The benefits and limitations of GNNs in many scenarios
to which they are commonly applied remain enigmatic.
Our goal is to expand the theoretical understanding by investigating
a common use case of GNNs.

To this end, we focus on the problem of aligning two geometric graphs 
together with noisy observations of their vertex features.
This simplified setup resembles several practical situations.
For example, suppose that we have access to two different
social networks that are actually built around the same group of people,
together with inaccurate features of each node,
while the node correspondence is not known to us
(think of Instagram and Facebook).
The task is to recover the underlying node correspondence from the networks
and node features.
We remark that this seemingly simplistic graph alignment task
has numerous instantiations, such as
cross-lingual knowledge graph alignment~\citep{wang2018cross} and
molecular network comparison~\citep{sharan2006modeling}.
Traditional methods include structure-based and iteration-based approaches
(see \cite[Section~3]{zeng2021comprehensive} for a comprehensive survey).
More recently, entity alignment based on representation learning, 
including the application of GNNs, has become mainstream~\citep{zeng2021comprehensive}.

The purpose of this work is not to add new methods to this already abundant literature,
but instead to examine the performance of GNNs for the alignment problem
through the lens of probability theory.
Specifically, suppose that we observe two incomplete copies, $G$ and $G'$,
of the same random geometric graph with sparse binary features.
Additionally, for each graph, we also observe their vertex features perturbed
by independent Gaussian noise.
The goal is to match each vertex of $G$ with a vertex of $G'$.
In this work, we characterize how GNNs can facilitate this task leveraging both the noisy features
and the graph structure.

\subsection{Contributions}
Our main contribution is to analyze the performance of GNNs for graph alignment
tasks on a random geometric graph model and theoretically prove the benefit of
message passing in the presence of vertex features.
Specifically, the contribution is threefold:
\begin{enumerate}
\itemsep0em
\item We propose a random geometric graph model that generalizes many existing
models and closely resembles real-life settings, allowing for the formal study of
the graph alignment problem with the presence of vertex features.
\item We prove that in certain parameter regimes of the model,
perfect recovery is possible with a specially-designed two-layer graph neural
network.
We also show that the dependence on the noise parameter is tight up to
logarithmic factors.
\item Meanwhile, we demonstrate that
directly aligning the vertices with noisy features fails to recover
the vertex correspondence perfectly under certain conditions,
while it remains possible with the help of the graph neural network.
\end{enumerate}

\subsection{Limitations} \label{sc:limit}
Although the current work is theoretical in nature and aims to explain
the effectiveness of GNNs with probability tools,
we discuss the limitations from both theoretical
and application perspectives:
\begin{enumerate}
\itemsep0em
\item The random geometric graph model studied in this paper is still rather
idealistic,
even though we attempt to incorporate intuitions
from real-world applications as much as possible.
\item Only perfect recovery is considered in the current work for clarity of
the theoretical message,
while in practice, other notions of alignment performance could also be relevant.
\item In the same way classical neural network literature has been able to gain
valuable insights,
we learn some interesting phenomena of GNNs from studying this simple setting.
Notably we can highlight the role of message passing and nonlinearity,
which other works fail to do.
\end{enumerate}

\subsection{Related work}
Network alignment problem is usually approached through graph representation
learning methods in modern machine learning literature~\citep{yan2021bright}.
Earlier works focus on learning the low-dimensional vector representation of the entity
by simple similarity metrics~\citep{bordes2013translating}
or probabilistic models~\citep{grover2016node2vec}.
Later on, many variants of GNNs have been successfully applied,
achieving state-of-the-art performance.
Listing all relevant papers would be impossible,
so instead, we direct interested readers to
the survey article~\citep{zeng2021comprehensive}.
However, most of the advances are made from purely application-based considerations
and lack theoretical justification.

On the other hand, random graph matching has attracted considerable
theoretical investigation in recent years~\citep{ding2021efficient,mao2021random,wu2022settling,racz2023matching,fan2023spectralI,fan2023spectralII}.
However, the majority of the literature focuses on the correlated Erd\H{o}s--R\'enyi model
where two samples are independently created from the same Erd\H{o}s--R\'enyi graph
through edge-resampling.
This method of introducing noise is similar to our subsampling process of the geometric graph.
GNNs have recently been introduced to this line of research by~\citet{yu2023seedgnn},
which studies seeded graph matching with a carefully-designed architecture.
The study of matching random graphs with latent geometric structure has emerged only
in the past few years.
The correlated stochastic block model introduced by~\citet{racz2021correlated}
can be viewed as an intersection graph in one dimension.
A later work~\citep{racz2023matching} studies the so-called $k$-core estimator
for a wide range of inhomogeneous random graphs including random geometric graphs.
A recent paper by~\citet{wang2022random} considers matching random geometric graphs,
but they assume that all pairwise dot products or Euclidean distances are observed
under the Gaussian setting.
One significant difference that sets our model apart from previous literature is that
we consider a novel setting of sparse binary features with noisy observation.
Nevertheless,
we would like to emphasize that the main goal of our work is not to solve the graph
matching problem optimally but rather to show that the effectiveness of
GNNs in practice is indeed supported by theoretical analysis.

Due to the existence of vertex features,
our work is also related to matching two random point clouds
which has a long history in probability and combinatorics~\citep{ajtai1984optimal}.
More recently, \citet{kunisky2022strong} studied matching recovery for Gaussian perturbed
Gaussian vectors in various regimes by solving an assignment problem.
Notably, our feature generating process is similar to that investigated
by the authors~\citep{kunisky2022strong}.
Our work builds upon the same approach of solving an assignment problem
but shows theoretically that
with the help of the graph structure,
modern machine learning methods are capable of achieving much better recovery guarantees.
Again, we do not attempt to provide better recovery bounds on matching random points,
but rather to supply a theoretical understanding of how graphs aid in
the alignment problem.

The last literature we should mention pertains to learning guarantees for GNNs
trained on random graphs generated according to a graphon.
Stability and transferability of certain untrained GNNs have been established
in~\citep{ruiz2021graphon,maskey2023transferability,ruiz2023transferability,keriven2020convergence}.
Generalization capabilities of GNNs~\citep{maskey2022generalization,esser2021learning}
and the capacity of GNNs to distinguish different graphons~\citep{magner2020power}
have also been studied.
In a different direction, \citet{kawamoto2018mean,lu2021learning} considered the performance of
GNNs for community detection respectively through heuristic mean-field approximations
and formally for two community stochastic block models (SBMs) when the GNN is trained
via coordinate descent.
\citet{baranwal2021graph} studied node classification for contextual stochastic block models (CSBMs)
and showed that an oracle GNN can significantly boost the performance of linear classifiers.
\citet{wang2024optimal} investigated both a spectral algorithm and graph convolutional networks (GCNs)
for node classification in CSBMs.
\citet{duranthon2024optimal} derived a belief-propagation-based algorithm
for the same task and showed a considerable gap between the accuracy reached
by the proposed algorithm and the existing GNN architectures.
Finally, \citet{chung2024statistical} have obtained guarantees for linear GNNs
for the edge prediction task.
Our goal is significantly different from all of these aforementioned works
as we aim to establish the performance of GNNs for graph alignment tasks.

\subsection{Notations}
We use bold uppercase letters to denote matrices
and the corresponding lowercase letters to denote their rows.
For example, $\X \in \RR^{n \times d}$ is an $n$-by-$d$ real matrix
with row vectors $(\x_1, \ldots, \x_n)$.
For a vector $\x$, $\x(k)$ stands for its $k$th entry.
The $p$-norm of a vector is denoted by $\norm{\cdot}_p$ and it defaults to $2$-norm,
or the Euclidean norm, when $p$ is not specified.
The inner product between two $d$-dimensional vectors $\x$ and $\y$ is written as
$\inner{\x}{\y} \coloneqq \sum_{k=1}^d \x(k)\y(k)$.
The notation $\abs{\cdot}$ is generally overloaded.
When it is applied to a number $a$, $\abs{a}$ is the absolute value of $a$.
And when applied to a set $S$, $\abs{S}$ denotes its cardinality.
We use $[n] \coloneqq \{1, \ldots, n\}$ to denote the set of all natural numbers up to $n$.
For a set $S$, $S^c$ stands for its complement.
Gaussian (normal) distribution with mean $\mu$ and variance $\sigma^2$ is written as
$\cN(\mu, \sigma^2)$ and correspondingly the $d$-dimensional Gaussian
with mean vector $\bmu$ and covariance matrix $\bSigma$ is written
as $\cN(\bmu, \bSigma)$.
$\Phi(x), x \in \RR$ is the distribution function of $\cN(0, 1)$,
and $\Phi^c(x) \coloneqq 1 - \Phi(x)$ represents the upper tail.
We also make use of the standard big O notation:
For positive functions $f(x)$ and $g(x)$,
$f(x) \lesssim g(x)$ or $f(x) = O(g(x))$ if $f(x) \le C g(x)$ for a constant $C > 0$
independent of $x$;
$f(x) \gtrsim g(x)$ or $f(x) = \Omega(g(x))$ if $g(x) \lesssim f(x)$.
We write $f(x) \asymp g(x)$ if $f(x) = O(g(x))$ and $g(x) = O(f(x))$.
We denote $f(x) \ll g(x)$, $g(x) \gg f(x)$, or $f(x) = o(g(x))$
if $\lim_{x \to \infty} f(x)/g(x) = 0$.

\section{PROBLEM DEFINITION} \label{sc:model}
Our model is a generalization of the random intersection graph
within the family of random geometric graphs.
We first introduce several notations necessary for defining the model.
A graph $G$ on $n$ vertices is denoted by a pair $G = (\X, E)$
where $\X = (\x_1, \ldots, \x_n)$ is the list of vertex features
and $E$ is the set of undirected edges.
That is, vertices $i$ and $j$ are connected by an undirected edge
if and only if $\{i, j\} \in E$.

\begin{definition}[Random intersection graph] \label{df:rig}
The random intersection graph $G_0 = (\X, E_0)$ is defined as follows.
A $d$-dimensional binary feature vector $\x_i \in \{0, 1\}^d$ is associated with each vertex $i$.
Given a sparsity parameter $s \le d$,
the entries of $\x_i$ follow an independent Bernoulli distribution with parameter $s/d$,
i.e., $\PP(\x_i(j) = 1) = s/d$ and $\PP(\x_i(j) = 0) = 1 - s/d$ independently.
For each pair of vertices $i \ne j \in [n]$,
$\{i, j\} \in E_0$ if and only if
\begin{equation*}
\inner{\x_i}{\x_j} \ge t
\end{equation*}
for a fixed threshold $t \ge 1$.
We denote the graph by $\RIG(n, d, s, t)$.
\end{definition}

Traditionally, a random intersection graph is defined by $n$ random subsets
of total $d$ elements,
and sets are connected if their intersection
is nonempty~\citep{singer1996random}.
It is clear that this corresponds to the case $t = 1$
in our more general definition of RIG.
Written in the form of Definition~\ref{df:rig},
it is also clear that
RIG is a special case of random inner (dot) product graphs~\citep{young2007random}
when features are Bernoulli random variables.
In contrast to the Erd\H{o}s--R\'enyi model,
edges in RIG are regulated by the underlying geometric space,
which is in line with other random geometric graphs~\citep{penrose2003random}.

Suppose that for a graph $G_0$, we do not have direct access to it.
Instead, we are given two noisy and incomplete copies of it,
where we observe perturbed feature vectors
and retain only a subset of the edges.
The procedure is designed to mimic the data patterns in real-world networks.
\begin{definition}[Noisy and incomplete RIG] \label{df:nirig}
Given a graph $G_0 = (\X, E_0) \sim \RIG(n, d, s, t)$,
we assume that the observed graph $G = (\Y, E)$
is created by the following process.
\begin{enumerate}
\itemsep0em
\item For each vertex $i \in [n]$, the observed feature vector is given by
\begin{equation*}
\y_i = \x_i + \beps_i
\end{equation*}
where $\beps_i \sim \cN(\0, \sigma^2\I_d)$ is independent noise
for some noise parameter $\sigma \ge 0$.
\item For each pair of vertices $i \ne j \in [n]$,
an edge $\{i, j\} \in E$  between them is observed with probability $q$ only if $\{i, j\} \in E_0$.
In other words,
\begin{equation*}
\PP(\{i, j\} \in E \mid \{i, j\} \in E_0) = q
\end{equation*}
and $\{i, j\} \notin E$ otherwise.
\end{enumerate}
The graph is denoted by $\NIRIG(\X, \sigma, q)$.
\end{definition}

We create a pair of correlated NIRIGs as follows.
First, we generate a truth graph $G_0 = (\X, E_0) \sim \RIG(n, d, s, t)$.
Then, we obtain two independent samples of $\NIRIG(\X, \sigma, q)$, $G$ and $G'$,
from the same $G_0$,
along with a vertex permutation between them.
In other words, given $\X$,
\begin{equation*}
G = (\Y, E) \sim \NIRIG(\X, \sigma, q)
\end{equation*}
and
\begin{equation*}
G' = (\Y', E') \sim \NIRIG(\pi^*(\X), \sigma, q)
\end{equation*}
independently for an unknown permutation $\pi^*$.
Our goal is to recover the true permutation $\pi^*$ from the observations
$G = (\Y, E)$ and $G' = (\Y', E')$.

\section{MAIN RESULTS} \label{sc:main}
The vanilla graph neural network (also called a graph convolutional network
or message passing neural network)
iteratively applies the following propagation rule (from layer $l$ to $l+1$)
to each vertex $i$ in $G = (\X, E)$:
\begin{equation*}
\x_i^{l+1} = \eta\biggl(\frac{1}{\abs{N_i}}\sum_{j \in N_i} \W^l \x_j^{l}\biggr)
\end{equation*}
where $\eta$ is a nonlinear activation function,
$N_i \coloneqq \{j \in [n]: \{i, j\} \in E\}$ is the neighborhood of $i$,
and $\W^l$ is the trainable weight matrix.

We employ a specially-designed two-layer
graph neural network to find the matching between $G = (\Y, E)$ and $G' = (\Y', E')$.
Define the neighborhood of vertex $i$ in $G$ and $G'$ respectively as
\begin{equation*}
N_i \coloneqq \{j \in [n]: \{i, j\} \in E\}
\end{equation*}
and
\begin{equation*}
N_i' \coloneqq \{j \in [n]: \{i, j\} \in E'\}.
\end{equation*}
We apply a message passing layer to the observations $\Y$ and $\Y'$
followed by a thresholding function
\begin{equation*}
\eta(u) = \bbmone\biggl\{u \ge \frac{t}{2s}\biggr\}.
\end{equation*}
Hence for each vertex $i \in [n]$ the hidden units of
the two-layer graph neural network
from the input graphs $G$ and $G'$ are respectively
\begin{equation*}
\z_i = \eta\biggl(\frac{1}{\abs{N_i}}\sum_{j \in N_i} \y_j\biggr)
\quad\textrm{and}\quad
\z_i' = \eta\biggl(\frac{1}{\abs{N_i'}}\sum_{j \in N_i'} \y_j'\biggr).
\end{equation*}
These values are finally used to find the matching $\pi^\star$.
This is done by solving an assignment problem between $\Z$ and $\Z'$:
\begin{equation} \label{eq:mp_align}
\begin{split}
\hat{\pi} &= \argmin_{\pi \in \cP(n)} \biggl(\ell(\pi) \coloneqq \sum_{i=1}^n \norm{\z_i - \z_{\pi(i)}'}^2\biggr)\\
&= \argmax_{\pi \in \cP(n)} \sum_{i=1}^n \inner{\z_i}{\z_{\pi(i)}'}
\end{split}
\end{equation}
where $\cP(n)$ is the permutation group on $[n]$.
This problem can be solved in polynomial time with combinatorial optimization algorithms such as
the Hungarian algorithm~\citep{kuhn1955hungarian}.

The RIG is parametrized
by the sparsity parameter $s$ of the underlying features.
However,
it can be directly translated to properties of the underlying graph $G_0$.
When $n$, $d$, and $t$ are fixed,
the edge density of the graph is indeed determined by $s$ in a nontrivial way.
To make this dependence explicit,
we define another parameter $m \in [0, n]$
such that
\begin{equation} \label{eq:s}
s = \sqrt{td\biggl(\frac{m}{n}\biggr)^{1/t}}.
\end{equation}
It will be demonstrated by a later lemma (Lemma~\ref{lm:N_i} in the appendix) that $m$
roughly characterizes the average degree in $\RIG(n, d, s, t)$ when $t$ is constant.
For simplicity,
we keep $t \ge 1$ fixed in our discussion.
This notably implies,
using \eqref{eq:s},
that $s \lesssim \sqrt{d}$.

\begin{figure*}[t!]
\centering
\begin{tikzpicture}[scale=0.85]
\draw[step=1cm,gray,very thin] (-3,-2) grid (5,6);
\draw[thick,->] (-2,0) -- (4,0);
\draw[thick,->] (0,-1) -- (0,5);
\node at (-0.3,-0.3) {O};
\node at (4.3,-0.3) {$m$};
\node at (-0.3,5.3) {$d$};
\node at (1.7,-0.3) {$n^1$};
\node at (-0.3,2) {$n^1$};
\node at (-0.3,4) {$n^2$};
\draw[gray,thick,dashed] (-2,-0) -- (3,5);
\draw[gray,thick,dashed] (3,-1) -- (-2,4);
\draw[gray,thick,dashed] (2,-1) -- (2,5);
\draw[gray,thick,dashed] (-2,3) -- (4,3);
\fill[fill=blue] (0,2) -- (1,3) -- (2,3) -- (2,0);
\fill[fill=blue!30] (1,3) -- (2,4) -- (2,3);
\node at (3.5, 5.5) {$\frac{qm}{s}$};
\node at (3.3, -1.3) {$s$};
\node at (4.3, 3) {$\frac{qm}{\sigma^2s^2}$};
\end{tikzpicture}
\begin{tikzpicture}[scale=0.85]
\draw[step=1cm,gray,very thin] (-3,-2) grid (5,6);
\draw[thick,->] (-2,0) -- (4,0);
\draw[thick,->] (0,-1) -- (0,5);
\node at (-0.3,-0.3) {O};
\node at (4.3,-0.3) {$m$};
\node at (-0.3,5.3) {$d$};
\node at (1.7,-0.3) {$n^1$};
\node at (-0.3,2) {$n^1$};
\node at (-0.3,4) {$n^2$};
\draw[gray,thick,dashed] (-4/3,-1) -- (8/3,5);
\draw[gray,thick,dashed] (4,-1) -- (-2,2);
\draw[gray,thick,dashed] (2,-1) -- (2,5);
\draw[gray,thick,dashed] (-2,1) -- (4,4);
\fill[fill=blue] (0,1) -- (2,4) -- (2,0);
\fill[fill=blue!30] (1,2.5) -- (2,4) -- (2,3);
\node at (3, 5.5) {$\frac{qm}{s}$};
\node at (4.3, -1.3) {$s$};
\node at (4.5, 4.2) {$\frac{qm}{\sigma^2s^2}$};
\end{tikzpicture}
\caption{Phase diagram of perfect recovery for $t=1$ (left) and $t=2$ (right).
Here $\sigma^2 \asymp 1/\sqrt{n}$.} \label{fg:diag}
\end{figure*}

Our main finding is that when the parameters $n$, $d$, $m$, $\sigma$, and $q$
satisfy certain conditions,
the solution found by \eqref{eq:mp_align} recovers the true permutation $\pi^*$ with high probability.
\begin{theorem}[Perfect recovery] \label{th:perf}
Let the matching problem be defined in Section~\ref{sc:model},
and we solve it using \eqref{eq:mp_align}.
With probability approaching $1$ as $n \to \infty$
we recover both the true vertex features and the matching if
\begin{equation*}
\min\biggl\{s, \frac{qm}{s},
\frac{qm}{\sigma^2 s^2}\biggr\}
\gg \log n + \log d.
\end{equation*}
\end{theorem}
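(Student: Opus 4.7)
The plan is to first show that the one-layer GNN recovers the true features exactly with high probability---$\z_i = \x_i$ for every $i \in [n]$ and symmetrically $\z_i' = \x_{\pi^*(i)}$---and then to observe that the assignment problem \eqref{eq:mp_align} is trivially solved by $\pi^*$ once this holds: the cost $\ell(\pi^*) = 0$, and uniqueness of the minimizer follows from pairwise distinctness of the true features $\x_1, \ldots, \x_n$. That distinctness, together with the concentration of $\abs{S_i}$ around $s$ (where $S_i$ is the support of $\x_i$), follows from standard Bernoulli/Binomial tail bounds under the first condition $s \gg \log n + \log d$.

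The core work is a coordinate-wise analysis of the message-passing average. Conditioning on $\x_i$, the indicator events $\{j \in N_i\}$ become mutually independent across $j \ne i$ with success probability $q \cdot \PP[\inner{\x_i}{\x_j} \ge t \mid \x_i]$, and Lemma~\ref{lm:N_i} pins $\abs{N_i}$ to order $qm$. Decompose
\begin{equation*}
\frac{s}{t \abs{N_i}} \sum_{j \in N_i} \y_j(k) = \frac{s}{t \abs{N_i}} \sum_{j \in N_i} \x_j(k) + \frac{s}{t \abs{N_i}} \sum_{j \in N_i} \beps_j(k).
\end{equation*}
The Gaussian noise term has conditional standard deviation of order $\sigma s/(t\sqrt{qm})$, so Gaussian tails and a union bound over all $nd$ coordinates keep it below $1/4$ uniformly once $qm/(\sigma^2 s^2) \gg \log n + \log d$---the third term in the theorem. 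For the signal term, an exchangeability argument on the support of $\x_i$ gives $\PP[\x_j(k) = 1 \mid j \in N_i, \x_i] \approx t/s$ when $\x_i(k) = 1$ and $\approx s/d$ when $\x_i(k) = 0$, so after scaling by $s/t$ the conditional mean is approximately $1$ or $(m/n)^{1/t}$, separated from $1/2$ in the nontrivial regime $m \ll n$. A Bernstein bound on the conditionally independent Bernoulli summands yields fluctuations of order $\sqrt{s\log(nd)/(t \cdot qm)}$ after scaling, which stays below $1/4$ uniformly under $qm/s \gg \log n + \log d$, the second condition. Thresholding at $1/2$ therefore returns $\x_i$ coordinate-wise, and the same argument on $G'$ gives $\z_i' = \x_{\pi^*(i)}$.

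The main obstacle will be disentangling the coupling between the numerator and the denominator in the ratio $\sum_{j \in N_i} \x_j(k)/\abs{N_i}$. The natural remedy is to first establish via a Chernoff bound that $\abs{N_i}$ is within a $1 \pm o(1)$ factor of its conditional mean $\EE[\abs{N_i} \mid \x_i]$ (a sum of i.i.d.\ Bernoullis given $\x_i$), and then run the Bernstein and Gaussian tail bounds on that high-probability event so that the denominator can effectively be treated as deterministic. A secondary delicate point is showing that $\EE[\x_j(k) \mid \inner{\x_i}{\x_j} \ge t, \x_i(k) = 1]$ is sharply close to $t/s$: writing this as $\EE[Y \mid Y \ge t]/\abs{S_i}$ for $Y \sim \Binom(\abs{S_i}, s/d)$, a one-sided Binomial tail estimate is needed to show that conditioning on $\{Y \ge t\}$ concentrates $Y$ near $t$ in the sparse regime $s^2/d \ll t$, which is exactly where the parametrization \eqref{eq:s} of $s$ in terms of $m$ places us.
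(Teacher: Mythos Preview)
Your proposal is correct and follows essentially the same route as the paper: decompose the pre-activation into a signal term and a Gaussian noise term, control the noise via a Gaussian tail bound (the paper's Lemma~\ref{lm:eik}), control the signal separately for $\x_i(k)=0$ and $\x_i(k)=1$ via Chernoff-type bounds on conditionally i.i.d.\ Bernoullis (Lemmas~\ref{lm:xik_0} and~\ref{lm:xik_1}), condition throughout on $\abs{N_i}\asymp qm$ via Lemma~\ref{lm:N_i}, and union-bound over all $nd$ coordinates; then invoke pairwise distinctness of the $\x_i$'s (Proposition~\ref{pp:unique}) to conclude $\hat\pi=\pi^*$.

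One simplification you are missing: your ``secondary delicate point'' is not delicate. You do not need $\beta\coloneqq\PP(\x_j(k)=1\mid j\in N_i,\,k\in S_i)$ to be \emph{sharply close} to $t/s$; a one-sided lower bound suffices, and it is immediate. By exchangeability of the coordinates in $S_i$,
\[
\beta\,\abs{S_i}=\EE\bigl[\textstyle\sum_{k\in S_i}\x_j(k)\bigm\vert j\in N_i,\,S_i\bigr]=\EE\bigl[\inner{\x_i}{\x_j}\bigm\vert \inner{\x_i}{\x_j}\ge t,\,S_i\bigr]\ge t,
\]
so $\beta\ge t/\abs{S_i}$. Combined with the high-probability event $\abs{S_i}\le(1+\delta)s$ this yields a scaled mean $s\beta/t\ge 1/(1+\delta)$, comfortably above the threshold $1/2$; larger $\beta$ only helps the lower-tail Chernoff bound. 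No Binomial tail estimate on $\EE[Y\mid Y\ge t]$ is needed. The paper carries out exactly this argument in Lemma~\ref{lm:xik_1}, taking $\delta=1/4$.
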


Intuitively, there is a bias--variance trade-off in aggregating feature 
information from neighbors in a geometric graph:
The noise cancels out by averaging, thus reducing the variance in estimation,
while borrowing features from connected vertices increases the bias.
Therefore, the parameters must satisfy certain conditions for the GNN
to achieve perfect recovery.
The proofs make use of the concentration of measure to derive various upper
and lower bounds on important quantities such as the support of feature
vectors and the neighborhood size.
Finer and more detailed results can be found in
Theorems~\ref{th:feat} and~\ref{th:match}
in the appendix along with their proofs.

\begin{remark}[Reparameterization]
The parameter regime is specified by a mixture of $s$ and $m$ for the simplicity
of the presentation.
However, as defined in \eqref{eq:s}, $s$ is determined by $m$ and vice versa.
Since we are more interested in graph properties,
we choose the more natural parameter $m$.
Replacing $s$ with the definition in \eqref{eq:s}, we immediately have
\begin{equation*}
\begin{split}
&\min\biggl\{\sqrt{d\biggl(\frac{m}{n}\biggr)^{1/t}},
\sqrt{\frac{q^2 m^2}{d}\biggl(\frac{n}{m}\biggr)^{1/t}},
\frac{qm}{\sigma^2 d}\biggl(\frac{n}{m}\biggr)^{1/t}\biggr\}\\
&\quad\gg \log n + \log d.
\end{split}
\end{equation*}
Since $q$ also affects the edge density of the graph but in a less sophisticated way
(only through subsampling),
for the interest of discussion,
we also fix it to be a constant.
Now we are ready to present an interplay between $n$, $d$, $m$, and $\sigma$
in several graph density regimes.
Notably for different graph densities $m$,
we identify the permissible range for $d$,
within which exact matching recovery is feasible,
contingent on the noise level $\sigma^2$ being adequately small.
We similarly derive the maximum size the noise $\sigma^2$ can take for which
perfect recovery is still possible. 
We list the results in Table~\ref{tb:combined_phase} when $t=1$ and $t=2$
for $m = a \log^{2+\epsilon} n, \epsilon > 0$ (sparse)
and
$m = b n^{\alpha}$ with $0 < \alpha < 1$ (intermediate)
and $\alpha = 1$ (dense),
where $a, b > 0$ are absolute constants. 
\end{remark}
\begin{table*}[h]
\centering
\caption{Recovery conditions for different thresholds $t$ and sparsity levels $m$.}
\label{tb:combined_phase}
\begin{tabular}{c|c|c}
\hline
m& $(\log n)^{2+\epsilon}$ & $n^\alpha$ \\ 
\hline
$d$ for $t = 1$ & $\displaystyle {(\log n)^{\epsilon}n}\gg d\gg \frac{n}{(\log n)^{\epsilon}}$ & $\displaystyle \frac{n^{1+\alpha}}{(\log n)^2}\gg d\gg (\log n)^2n^{1-\alpha}$ \\[2ex]
$\sigma^2$ for $t = 1$ & $\sigma^2\ll \frac{1}{(\log n)^{1-\epsilon}}$ & $\sigma^2\ll \frac{n^\alpha}{(\log n)^3}$ \\[2ex]
\hline
$d$ for $t = 2$ & $\displaystyle {\sqrt{n}}{{(\log n)^{1+\frac{3}{2}\epsilon}}}\gg d\gg \sqrt{n}(\log n)^{1-\frac{\epsilon}{2}}$ & $\displaystyle \frac{n^{\frac{1+3\alpha}{2}}}{(\log n)^2}\gg d\gg (\log n)^2(\sqrt{n})^{1-\alpha}$ \\[2ex]
$\sigma^2$ for $t = 2$ & $\sigma^2\ll \frac{1}{(\log n)^{1-\epsilon}}$ & $\sigma^2\ll \frac{n^\alpha}{(\log n)^3}$ \\[2ex]
\hline
\end{tabular}
\end{table*}
\begin{remark}[Phase diagram]
Note that only the last term in the bound depends on the noise parameter $\sigma$.
All previous terms are functions of $n$, $d$, and $m$.
If we consider only the intermediate regime when $m \asymp n^{\alpha}$ and $d \asymp n^{\beta}$,
we can visualize the theorem as a diagram in the space of $m$ and $d$ as in Figure~\ref{fg:diag}.
The entire colored region is specified by the first two terms (and $m \le n$) without $\sigma$.
The term that involves $\sigma$ ``cuts through'' the region and
moves down as $\sigma$ increases as a power of $n$.
The dark blue region is where perfect recovery is possible for the GNN.
\end{remark}

\begin{remark}[Correlated random graph matching]
Information-theoretic results on correlated random graph matching have been 
investigated in a line of research~\citep{cullina2016improved,wu2022settling,ding2023matching}
and the sharp threshold for exact recovery was established for the Erd\H{o}s--R\'enyi model.
These results were later extended to inhomogeneous random graphs including random geometric graphs by~\citet{racz2023matching}.
Their result suggests that using the incomplete graphs alone,
one can find the exact matching when $mq^2 \gg \log n$,
compared to $\min\{\frac{mq}{s}, \frac{mq}{\sigma^2 s^2}\} \gg \log n$
as required in the theorem.
Therefore, when $q \ll \min\{\frac{1}{s}, \frac{1}{\sigma^2 s^2}\}$,
i.e., the graph is very sparse, GNN is able to recover the matching
while the k-core estimator cannot.
One additional note here is that the algorithms used to prove the information-theoretic
results are usually not computationally efficient.
For instance, the MAP estimator in \citep{cullina2016improved} would require inspecting
all permutations.
\end{remark}

\begin{remark}[Trainability of the GNN]
We focus on understanding the message passing in this work,
and the GNN used for the alignment does not contain weight matrices.
Nevertheless, the threshold in the activation function $\eta$ can, in fact, be trained.
Since we assume the RIG parameters are known to us,
the threshold is chosen ``optimally'' in the algorithm.
However, when dealing with real-world data where the parameters may not be known
even if the generative model assumptions hold,
the threshold may be learned from the data.
We further investigate this point empirically in our real-world data experiments.
\end{remark}

We have the following negative result that establishes the conditions
under which the GNN cannot recover the matching perfectly.
\begin{theorem}[Impossibility of perfect recovery]
Assume that $\min\{s, \frac{qm}{s}\} \ge c$ for a constant $c >0$.
Then there exists a constant $\delta > 0$ such that $\PP(\hat{\pi} \ne \pi^*) \ge \delta'$
for all $\delta' \in [0, \delta]$ if
\begin{equation*}
\frac{qm}{\sigma^2s^2} \le C_\delta
\end{equation*}
for some constant $C_\delta > 0$.
\end{theorem}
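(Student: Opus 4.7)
The plan is to exhibit a swap of two vertices that succeeds with constant probability. After relabelling $G'$ we may assume $\pi^* = \mathrm{id}$. Let $\tau = (1\,2)$; expanding the squared norms in~\eqref{eq:mp_align} gives
\begin{equation*}
\ell(\mathrm{id}) - \ell(\tau) = -2 S, \qquad S \coloneqq \inner{\z_1 - \z_2}{\z_1' - \z_2'},
\end{equation*}
so it suffices to prove $\PP(S < 0) \ge \delta$ for some absolute $\delta > 0$, since on that event $\hat\pi \ne \mathrm{id}$.

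The key observation is that under the hypothesis $qm/(\sigma^2 s^2) \le C_\delta$, the Gaussian noise in $f_i(k) \coloneqq (s / t \abs{N_i})\sum_{j\in N_i}\y_j(k)$ has, conditional on $(\X, G)$, per-coordinate variance $\sigma_{\mathrm{noise}}^2 \asymp \sigma^2 s^2/(qm) \gtrsim 1/C_\delta$, bounded below by a positive constant. Using Lemma~\ref{lm:N_i} to control $\abs{N_i}$ and the conditional means $\mu_{i,k} \coloneqq \EE[f_i(k)\mid\X,G]$ (close to $1$ on $\mathrm{supp}(\x_i)$ and to $s^2/(td) \ll 1$ off it), each thresholded bit $\z_i(k)$ becomes a flip of $\x_i(k)$ with probability $p_0$ (for $0 \to 1$) or $p_1$ (for $1 \to 0$), both lying in an interval strictly inside $(0, 1/2)$ determined by $C_\delta$ and bounded away from the endpoints. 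Crucially, since $\beps_j$ has i.i.d.\ coordinates, conditional on $(\X, G)$ the bits $\{\z_i(k)\}_k$ are independent across $k$, and the analogous statement holds for $\z_i'$ given $(\X, G')$.

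Condition on a high-probability event $\cE$ for $(\X, G, G')$ on which the above concentration holds and $\norm{\x_1 - \x_2}_0 \asymp s$. Because $(G, \beps) \perp (G', \beps') \mid \X$, setting $A \coloneqq \z_1 - \z_2$ and $B \coloneqq \z_1' - \z_2'$ gives $A \perp B \mid (\X, G, G')$ with coordinatewise independence. A direct computation then yields
\begin{equation*}
\EE[S \mid \X, G, G'] = \sum_k \EE[A_k\mid\X, G]\,\EE[B_k\mid\X, G'] \approx (1 - p_0 - p_1)^2 \norm{\x_1 - \x_2}_0 \lesssim s,
\end{equation*}
while for each of the $\gtrsim d$ coordinates with $\x_1(k) = \x_2(k) = 0$ the product $A_k B_k$ has mean zero and variance $4 p_0^2(1 - p_0)^2 \gtrsim 1$, so
\begin{equation*}
\Var(S \mid \X, G, G') \gtrsim d.
\end{equation*}
Because \eqref{eq:s} forces $s \lesssim \sqrt{d}$ for constant $t$, the standardized signal $\EE[S]/\sqrt{\Var(S)}$ is bounded by an absolute constant $C$, uniformly in $n$.

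Conditional on $\cE$, $S$ is a sum of $\Theta(d)$ independent $\{-1, 0, 1\}$-valued summands, so the Berry--Esseen theorem gives Gaussian approximation with Kolmogorov error $o(1)$. This yields $\PP(S < 0 \mid \X, G, G') \ge \Phi(-C) - o(1) \ge \delta'$ for some absolute $\delta' > 0$, and integrating out $\cE$ (which has probability $1 - o(1)$) finishes the proof. The main obstacle is the uniform control of the conditional means $\mu_{i,k}$ over all $k \in [d]$ on $\cE$, needed so that the flip probabilities are genuinely constants and the variance lower bound kicks in on an $\Omega(d)$ fraction of coordinates; this reduces to binomial tail bounds for $\sum_{j \in N_i} \x_j(k)$ uniform in $k$, combined with the neighbourhood concentration from Lemma~\ref{lm:N_i}.
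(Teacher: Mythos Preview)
Your high-level strategy matches the paper's: both reduce to showing that the transposition $(1\,2)$ weakly beats the identity with constant probability, i.e.\ that $S=\inner{\z_1-\z_2}{\z_1'-\z_2'}$ (or an equivalent quantity) is nonpositive with probability bounded away from zero. The paper also splits the sum over $k\in S_1\cup S_2$ versus $k\notin S_1\cup S_2$, bounds the former part (together with all cross terms) by $28s$ on a good size event, and then lower-bounds the tail of the off-support part using Proposition~\ref{pp:lazy_tail}.

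The genuine gap in your argument is the claim that, conditional on $(\X,G,G')$, the product $A_kB_k$ has mean zero for $k\notin S_1\cup S_2$. Once you freeze $(\X,G,G')$, the flip probabilities $\PP(\z_1(k)=1)$ and $\PP(\z_2(k)=1)$ are \emph{different}: they depend on $|N_1|$ versus $|N_2|$ (through the noise scale $\sigma_i=s\sigma/(t\sqrt{|N_i|})$) and on $\mu_{1,k}$ versus $\mu_{2,k}$. The bias coming from $|N_1|\ne|N_2|$ in particular has the \emph{same sign for every} $k$, and a short computation gives $|\EE[A_k\mid\X,G]|\asymp 1/\sqrt{qm}$ on the good event, so the off-support contribution to $\EE[S\mid\X,G,G']$ is of order $d/(qm)$. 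In the sparse regimes covered by the theorem (recall $d=s^2(n/m)^{1/t}/t$), $d/(qm)$ can be much larger than both $s$ and $\sqrt{d}$, so your signal-to-noise ratio $\EE[S]/\sqrt{\Var(S)}$ is not bounded and the Berry--Esseen step does not yield a constant lower bound for $\PP(S<0)$. A secondary issue is that $\z_1(k)$ and $\z_2(k)$ are \emph{not} conditionally independent given $(\X,G)$ whenever $N_1\cap N_2\ne\emptyset$, since they share the noise $\{\beps_l(k):l\in N_1\cap N_2\}$; your variance formula $4p_0^2(1-p_0)^2$ implicitly assumes this independence.

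The paper avoids the first problem precisely by \emph{not} conditioning on the full $(\X,G,G')$: it conditions only on the symmetric size event $\cH_{1,2}=\cE_1\cap\cE_2\cap\cF_1\cap\cF_2\cap\cF_1'\cap\cF_2'$ together with $\cM_{1,2}^k=\{\x_1(k)=\x_2(k)=0\}$. Under that coarser conditioning the roles of vertices $1$ and $2$ are exchangeable, so $\PP(\b(k)=1)=\PP(\b(k)=-1)$ exactly, and the off-support sum becomes a genuinely centred lazy random walk whose tail at level $28s$ is lower-bounded by Proposition~\ref{pp:lazy_tail}; this bypasses any need to control the conditional mean on the realized graphs. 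If you want to salvage your Berry--Esseen route, you would need to condition at this coarser, $1\leftrightarrow2$-symmetric level as well.
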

To prove the impossibility result,
we carefully characterize the event in which two vertices are misaligned
with each other in the two observed graphs.
Hence perfect recovery is not possible when this event occurs.
The proof can be found in the appendix.

\begin{remark}
We note that the conditions $\min\{s, \frac{qm}{s}\} \ge c$ assumed here
are also necessary for perfect recovery.
These regularity conditions ensure that the vertices are unique
and that the signal is sufficiently strong in the graph.
The primary distinction between the possibility and impossibility bounds
lies in the term $\frac{qm}{\sigma^2s^2}$
which is also the only term that involves the noise parameter $\sigma$.
For perfect recovery, it must be $\gg \log n + \log d$,
whereas for impossibility, it must be bounded.
This implies that the bound concerning the noise parameter $\sigma$ is tight up to logarithmic factors.
Notably combined with the perfect recovery results,
in the dense regime $m \asymp n^\alpha$,
we show that the algorithm can tolerate noise level $\sigma$
that grows as a function of $n$.
\end{remark}

Instead of using the graph neural network,
one can obtain a matching by directly solving an alignment problem
from the noisy vertex features $\Y$ and $\Y'$:
\begin{equation} \label{lin_algo}
\begin{split}
\tilde{\pi} &= \argmin_{\pi \in \cP(n)} \tilde{\ell}(\pi)
\coloneqq \sum_{i=1}^n \norm{\y_i - \y'_{\pi(i)}}^2\\
&= \argmin_{\pi \in \cP(n)}
\sum_{i=1}^n \inner{\y_i}{\y'_{\pi(i)}}.
\end{split}
\end{equation}
We call this the linear method.
It is worth mentioning that this method has been widely used for alignment problems
and also attracted significant theoretical analysis under different settings.
However, as we will show in the next theorem,
when the noise is large or the dimension is high,
directly solving the assignment problem will not achieve perfect recovery
with at least constant probability.
\begin{theorem}[Impossibility of perfect recovery with vertex features]
If $\sigma^2\ge2{s}\bigl({1+\frac{K}{n}}\bigr)^{-2}$ for any $K>4$ we have that
\begin{equation*}
\PP(\tilde{\pi} = \pi^*) \le e^{-\frac{K-4}{4}}.
\end{equation*}
Hence if $\sigma^2 \gg s$, the probability of perfect recovery converges to $0$. 
If instead $\frac{1}{4}\sigma^2\le s\le \frac{d}{2}$ and 
$\sigma^2\gg\frac{s}{\sqrt{d \log n}}$ then
\begin{equation*}
\lim_{n\to\infty} \PP(\tilde{\pi} = \pi^*) = 0.
\end{equation*}
\end{theorem}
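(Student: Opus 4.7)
Assume WLOG $\pi^* = \mathrm{id}$ by exchangeability of the model under relabeling. Expanding $\|\y_i - \y'_{\pi(i)}\|^2$ shows that \eqref{lin_algo} is equivalent to $\argmax_\pi \sum_i \inner{\y_i}{\y'_{\pi(i)}}$, so a necessary condition for $\tilde\pi = \mathrm{id}$ to be optimal is that no transposition $(i,j)$ strictly improves the objective, i.e.\
\begin{equation*}
Z_{ij} \coloneqq \inner{\y_i - \y_j}{\y'_i - \y'_j} \ge 0 \quad\text{for all } i \ne j.
\end{equation*}
Writing $\a = \x_i - \x_j$, $\u = \beps_i - \beps_j$, $\v = \beps'_i - \beps'_j$, expansion yields $Z_{ij} = \|\a\|^2 + \inner{\a}{\u + \v} + \inner{\u}{\v}$ with $\u, \v \sim \cN(\0, 2\sigma^2 \I_d)$ independent conditional on $\a$. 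Restricting to the $K \le \lfloor n/2 \rfloor$ vertex-disjoint pairs $\{(2k-1, 2k)\}_{k \le K}$ makes the corresponding $Z$'s mutually independent and identically distributed, giving the key inclusion
\begin{equation*}
\PP(\tilde\pi = \pi^*) \le \PP(Z_{12} \ge 0)^K.
\end{equation*}

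\textbf{Bounding $\PP(Z_{12} \ge 0)$.} Conditional on $\a$, $Z_{12}$ has mean $\|\a\|^2$ and variance $v(\a) \coloneqq 4\sigma^2 \|\a\|^2 + 4\sigma^4 d$. Since $\inner{\u}{\v}$ is a sum of $d$ iid mean-zero products of independent Gaussians, the CLT (with Berry-Esseen error $O(d^{-1/2})$) gives $\PP(Z_{12} \ge 0 \mid \a) \le \Phi\bigl(\|\a\|^2/\sqrt{v(\a)}\bigr) + O(d^{-1/2})$. Under the hypothesis $\sigma^2 \ge 2s(1+K/n)^{-2}$ and the binomial-type concentration of $\|\a\|^2$ around $2s - 2s^2/d$, the ratio $\|\a\|^2/\sqrt{v(\a)}$ is controlled by a function of $(1+K/n)$ and $d$ that is $o(1)$, yielding $\PP(Z_{12} \ge 0) \le \tfrac12 + o(1)$. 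After elementary constant tracking — where the $4$ in the exponent absorbs the $o(1)$ correction and the quadratic $(1+K/n)^{-2}$ — the $K$-th power is $\le e^{-(K-4)/4}$. The ``hence'' statement then follows by taking $K = K(n) \to \infty$ slowly (e.g.\ $K = \log n$) whenever $\sigma^2 \gg s$.

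\textbf{Second claim and main obstacle.} In the regime $\tfrac{1}{4}\sigma^2 \le s \le \tfrac{d}{2}$, $v(\a)$ is dominated by $4\sigma^4 d$, and $\sigma^2 \gg s/\sqrt{d \log n}$ on the typical event $\|\a\|^2 \asymp s$ gives $\|\a\|^2/\sqrt{v(\a)} \ll \sqrt{\log n}$. The Gaussian lower tail $\Phi(-\sqrt{\log n}) \gtrsim 1/\sqrt{n \log n}$ then yields $\PP(Z_{12} < 0) \gg 1/\sqrt{n \log n}$; taking $K = \lfloor n/2 \rfloor$ in the product bound gives $\PP(\tilde\pi = \pi^*) \le \exp\bigl(-\Omega(\sqrt{n/\log n})\bigr) \to 0$. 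The crux of the proof is the anti-concentration step: Chebyshev/Cantelli control the wrong tail, so the upper bound on $\PP(Z_{12} \ge 0)$ must go through a Gaussian approximation, and the Berry-Esseen residual $O(d^{-1/2})$ needs to be strictly smaller than the Gaussian probability compared to — comfortable when $d \gtrsim \log n$ for the second claim, but requiring careful tracking of the binomial fluctuations of $\|\a\|^2$ and of the quadratic $(1+K/n)^{-2}$ in the first claim to deliver the precise exponent $(K-4)/4$.
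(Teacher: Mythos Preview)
Your reduction to vertex-disjoint pairs and the product bound $\PP(\tilde\pi=\pi^*)\le \PP(Z_{12}\ge 0)^K$ matches the paper. The divergence is in how $\PP(Z_{12}<0)$ is lower-bounded, and there your sketch has two real gaps.

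\textbf{First claim.} The paper does not use the CLT at all here; it uses a total-variation argument via Pinsker. Conditioning on $\x_1,\x_2$, the laws of $(\y_1,\y_2)$ and $(\y_2,\y_1)$ are both Gaussian with covariance $\sigma^2\I$ and means $(\x_1,\x_2)$ versus $(\x_2,\x_1)$, so $\TV\le \sigma^{-1}\sqrt{\|\x_1\|^2+\|\x_2\|^2}$, whence $\EE[\TV]\le\sqrt{2s}/\sigma$. Since the two swap events partition the sample space (up to a null set), this gives $\PP(1\bowtie 2)\ge \tfrac12(1-\sqrt{2s}/\sigma)$ \emph{for every $d$}. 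Your CLT route needs $d\to\infty$ even to make the Berry--Esseen error vanish, and your claim that $\|\a\|^2/\sqrt{v(\a)}=o(1)$ fails at the boundary $\sigma^2\approx 2s$ unless $d\to\infty$; the theorem statement imposes no such growth on $d$. The ``elementary constant tracking'' that delivers exactly $(K-4)/4$ is therefore not supported by your argument, whereas the TV bound produces it in one line.

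\textbf{Second claim.} Here the gap is sharper. You correctly identify that the Berry--Esseen residual $O(d^{-1/2})$ must be dominated by the Gaussian tail you are comparing to, but your assertion that $d\gtrsim\log n$ suffices is wrong. Under the hypothesis $\sigma^2\gg s/\sqrt{d\log n}$, the standardized mean $\|\a\|^2/\sqrt{v(\a)}$ can be as large as $c\sqrt{\log n}$ for small $c$, making the target Gaussian tail $\Phi(-c\sqrt{\log n})\asymp n^{-c^2/2}/\sqrt{\log n}$; to beat this with a Berry--Esseen error $O(d^{-1/2})$ you would need $d\gg n^{c^2}$, which is nowhere assumed. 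The paper avoids this entirely by replacing Berry--Esseen with a Paley--Zygmund lower bound for Gaussian inner products (its Proposition on $\PP(\inner{\x}{\y}\ge u\sqrt d)$), which gives a direct exponential lower bound $\PP(-\inner{\u}{\v}\ge t)\ge c\,e^{-Ct^2/(d\sigma^4)}$ with no additive error. That device is the missing idea in your second part.
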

To prove the first part of the theorem when the noise parameter is large,
we make use of several information-theoretic inequalities.
The proof of the second part involves analyzing the misalignment event.
Finer and more detailed results are postponed to
Propositions~\ref{pp:low1} and~\ref{pp:low2},
along with their proofs in the appendix.

\begin{remark}
The impossibility result is split into two regimes:
the signal-to-noise ratio being small and large.
We are more interested in how these bounds compare to those achieved
by the graph neural network.
To make a direct comparison,
we may translate the parameter $s$ to $m$, 
although it does not have a clear physical meaning for aligning features.
In one scenario, the message is clear:
When $\sigma$ is at least a constant,
in all parameter regimes where the GNN has perfect recovery,
the linear method would fail.
\end{remark}

\section{EXPERIMENTS}
We validate our findings through experiments on artificially generated graphs
as specified by our definitions,
as well as two real-world network datasets.
With a slight deviation from the theoretical results,
we present the matching accuracy in the experimental results
for computational concerns and practical considerations.
This corresponds to the notion of partial recovery,
the theory of which we leave for future work.
All experiments are averaged over $5$ independent runs,
and we report the two standard deviations
plotted as the shaded areas along the curve.
(Note that due to the small variance in a few experiments,
some shaded areas are very narrow and may not be clearly visible.)
The results were produced on a shared research computing cluster.
Required resources are $4$ computing units with $20$G RAM.
Experiments for each set of plots typically finish within $20$ minutes.
Code for generating the experimental results is available at \url{https://github.com/6457/matchgnn/}.

\subsection{Synthetic graphs}
We first generate graphs according to the model introduced in Section~\ref{sc:model}.
In Figure~\ref{fg:err},
we show the alignment accuracy, which is defined as the proportion of correctly matched pairs.
We first fix the sparsification parameter $q$ and vary the variance parameter
$\sigma$, and then fix $\sigma$ and vary $q$.
As suggested by the theory,
the linear method fails to recover the alignment when the feature noise is large,
while the GNN is less prone to being affected by noise.
The accuracy of the GNN also improves as $q$ grows.
Note that for the parameters chosen,
the original graph is already very sparse.
The average degree is about $56$ compared to $n = 4000$ nodes.
This explains why the error of the GNN is high when $q$ is small.
\begin{figure}[ht]
\centering
\subfigure[Impact of the noise parameter $\sigma$.]{\includegraphics[width=0.415\textwidth]{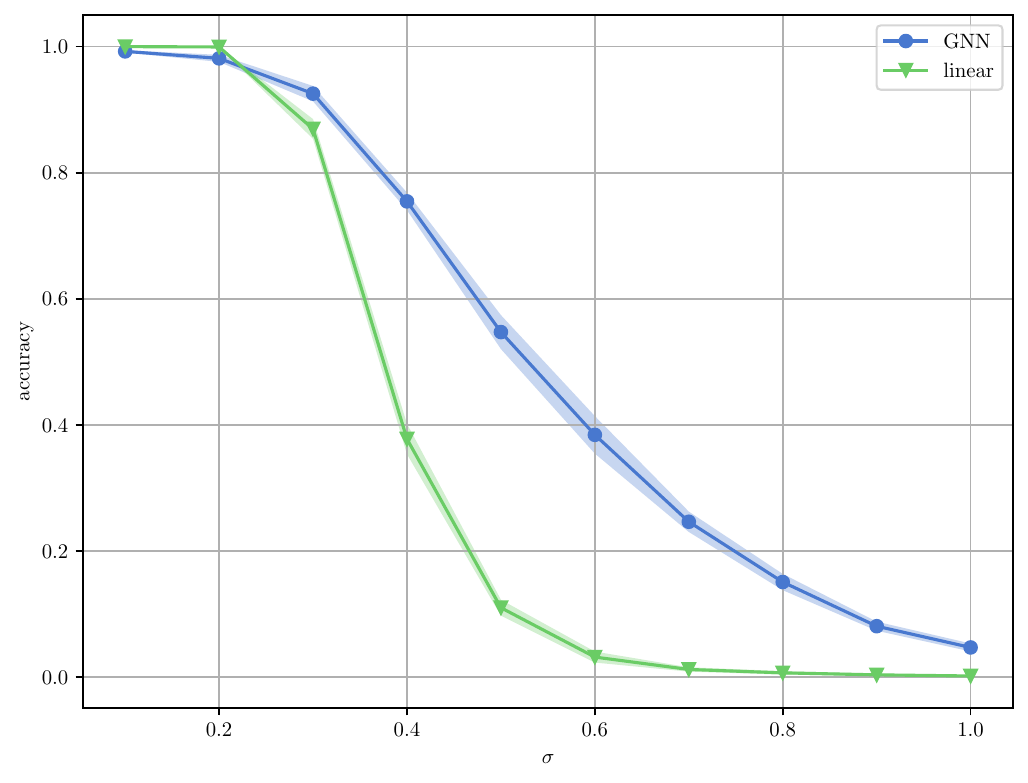}}
\subfigure[Impact of the sparsification parameter $q$.]{\includegraphics[width=0.415\textwidth]{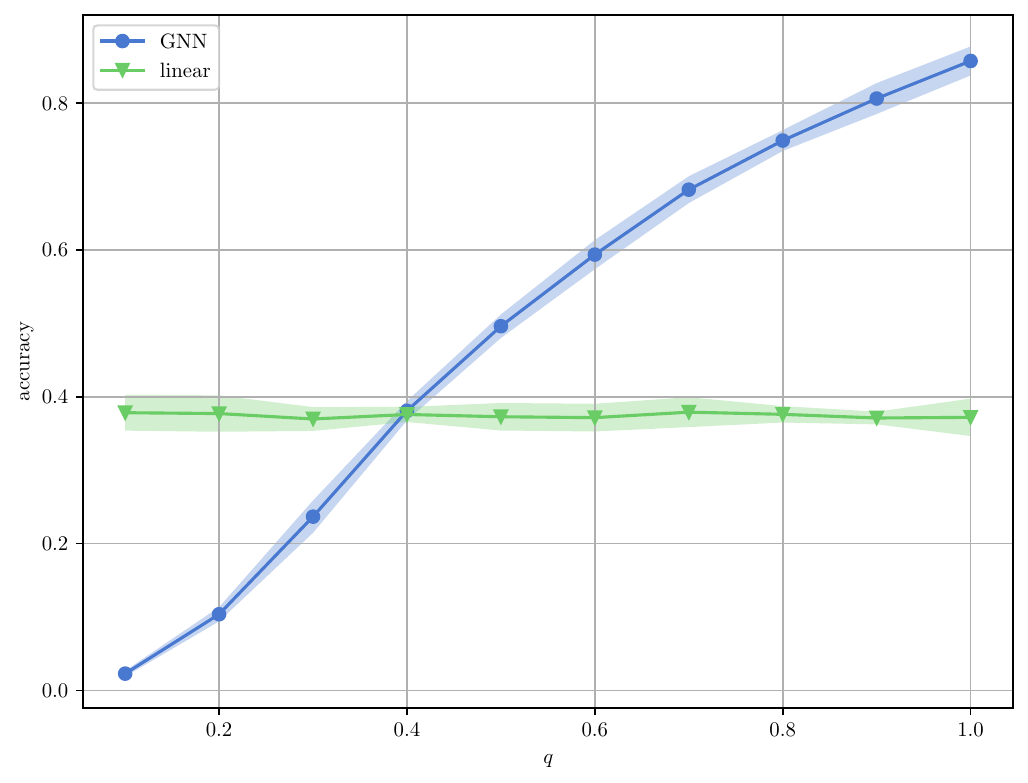}}
\caption{Comparison of the GNN and the linear method.
Parameters in the experiments are $n=4000$, $d=200$, $s=10$, $t=3$.
For (a), $q=0.8$ is fixed
and $\sigma$ ranges from $0.1$ to $1$ in $0.1$ increments.
For (b),
$\sigma=0.4$ and $q$ changes from $0.1$ to $1$ in $0.1$ increments.}
\label{fg:err}
\end{figure}

\begin{figure*}[t!]
\centering
\subfigure[Cora]{
\includegraphics[width=0.235\textwidth]{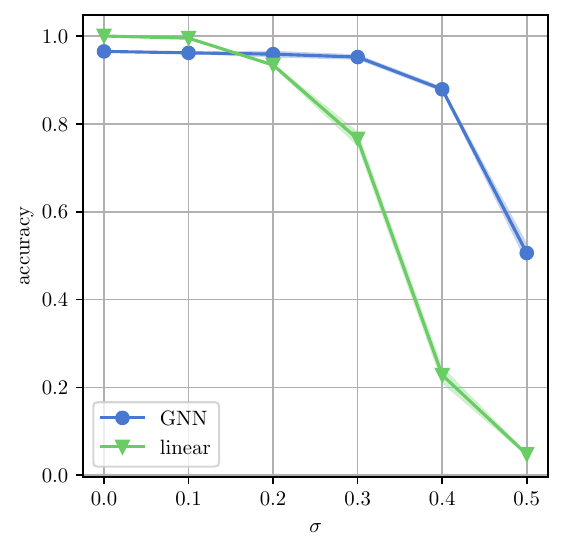}
\includegraphics[width=0.235\textwidth]{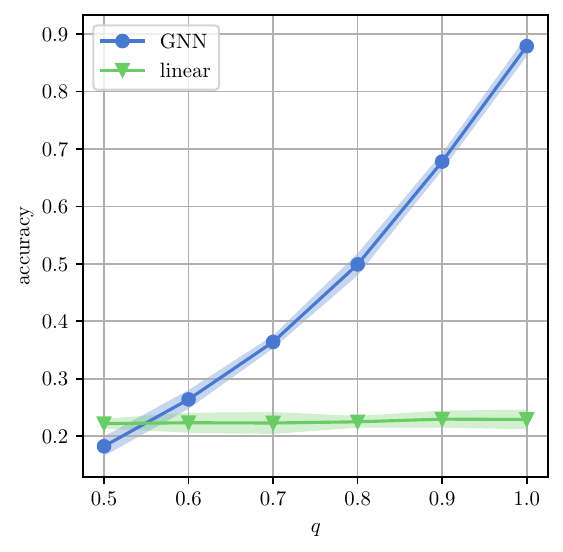}
}
\subfigure[CiteSeer]{
\includegraphics[width=0.235\textwidth]{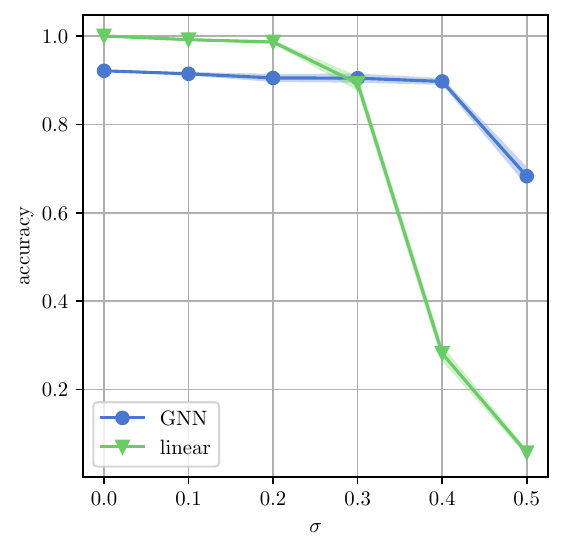}
\includegraphics[width=0.235\textwidth]{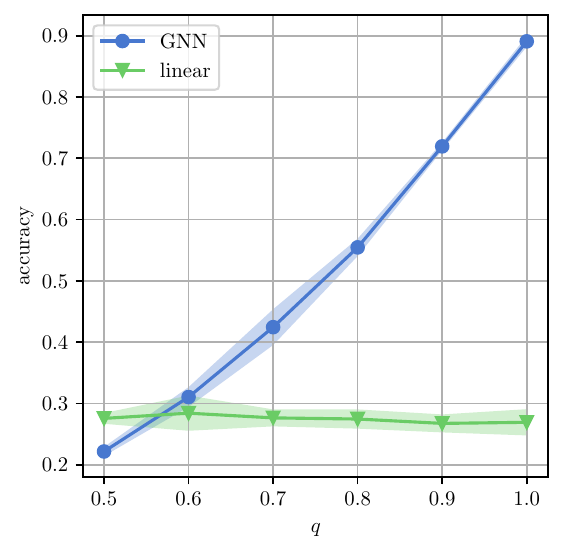}}
\caption{Comparison of the GNN and the linear method on real-world datasets.
In the plots on the left of each group, $q = 1$ is fixed (using all edges from the datasets)
and $\sigma$ varies from $0$ to $0.5$ in $0.1$ increments.
In the plots on the right of each group,
$\sigma = 0.4$ is fixed and $q$ varies from $0.5$ to $1$ in $0.1$ increments.
}
\label{fg:acc_real}
\end{figure*}

\subsection{Real-world datasets}
We perform the alignment task on two public benchmark datasets, Cora and CiteSeer~\citep{sen2008collective},
which have been widely used to evaluate the performance of GNNs~\citep{yang2016revisiting}.
Both datasets contain research papers with their bag-of-words representation.
The dataset statistics may be found in Table~\ref{tb:stats}.
\begin{table}[h!]
\centering
\caption{Summary of datasets.}
\label{tb:stats}
\begin{tabular}[t]{c|c|c|c}
\hline
Dataset & \# Vertices & \# Edges & \# Features\\ 
\hline
Cora & $2,708$ & $5,429$ & $1,433$\\
CiteSeer & $3,327$ & $4,732$ & $3,703$\\
\hline
\end{tabular}
\end{table}

The evaluation task follows a similar approach to \cite{yan2021bright}.
We treat the original network as our ``ground truth'' graph,
and then create two copies of the graph by following the generating process
of NIRIG as in Definition~\ref{df:nirig}:
Each word feature is perturbed with Gaussian noise
and each edge is sampled with a fixed probability.
The goal is to recover the article correspondence between them.
The experiments are similar to those with the synthetic data.
One additional tuning parameter here is the threshold in the activation function,
as it is not known to us,
even if the real-world networks are indeed geometric graphs.
(We delve into this in the next set of experiments.)
We found that the accuracy is not very sensitive to it in a large range
within which the threshold is small.
So we fix the threshold $b = 0.1$ in this part.
We also replace the hard thresholding with a soft one for practical purposes.
The results are presented in Figure~\ref{fg:acc_real}.

Next, we evaluate the recovery accuracy of the GNN with different thresholds
on our two real-world datasets (Figure~\ref{fg:acc_b}).
\begin{figure}[h!]
\centering
\subfigure[Cora]{
\includegraphics[width=0.42\textwidth]{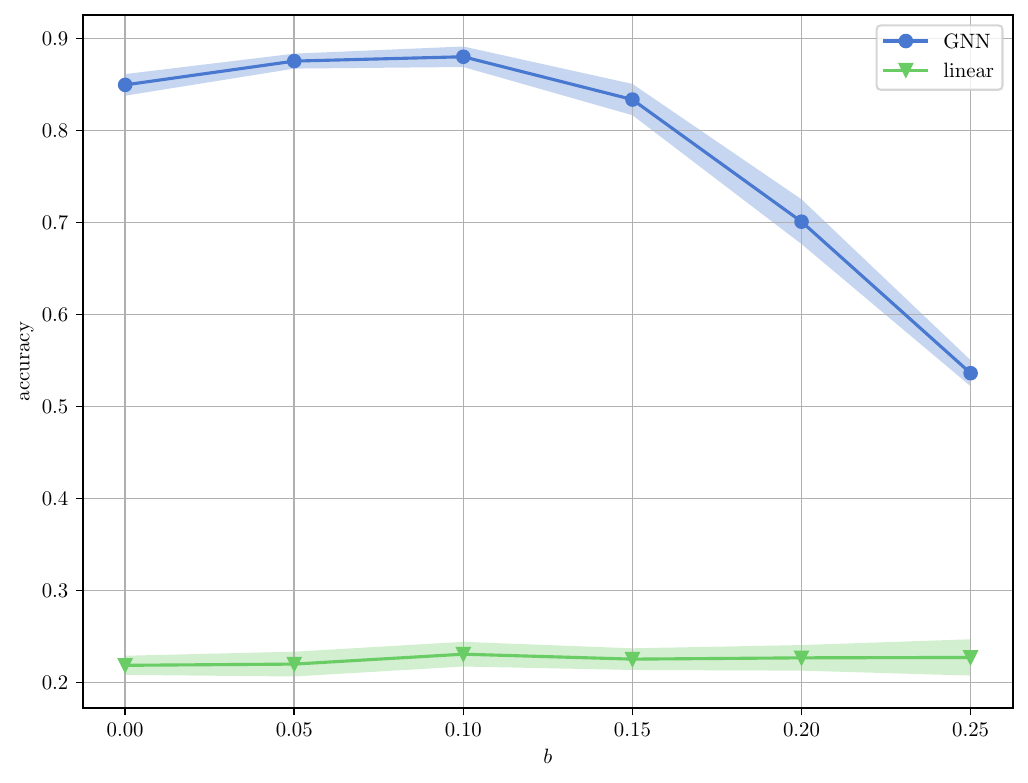}
}
\subfigure[CiteSeer]{
\includegraphics[width=0.42\textwidth]{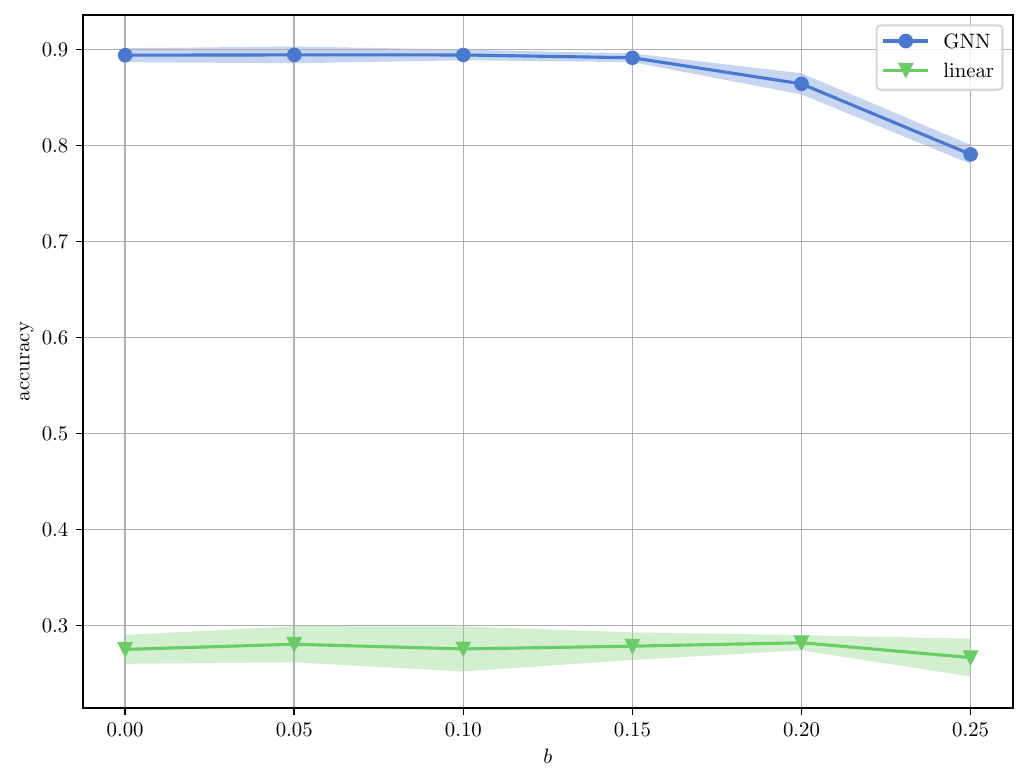}
}
\caption{Impact of the threshold parameter on real-world datasets.
We fix $q=1$ and $\sigma=0.4$.}
\label{fg:acc_b}
\end{figure}
Interestingly, the two datasets exhibit very different response curves
to the threshold.
In Cora, the accuracy first grows when the threshold becomes larger
and then drops.
The U-shape curve suggests that there may exist an optimal choice of the threshold.
Meanwhile, in CiteSeer,
the accuracy remains the same in a large range of thresholds
and only starts to decrease much later.
This may be due to the fact that Cora consists of only machine learning 
publications,
hence the number of overlapping words is comparably larger than
in CiteSeer, which has more diverse fields in computer science,
resulting in more accurate words from the neighborhood.

\section{OPEN PROBLEMS}
In this paper,
we presented various possibility and impossibility results 
for perfect recovery under the noisy sparse feature setting.
Similar inquiries could be made regarding other feature distributions,
such as Gaussian features, which are the central object of study
by~\citet{kunisky2022strong},
or spherical distributions arising from high-dimensional random geometric 
graphs~\citep{devroye2011high,bubeck2016testing,brennan2020phase,liu2023phase}.
However, it would be less clear what role message passing plays
under these settings.
Nevertheless, averaging over the neighbors should still be
effective thanks to the symmetry of the distributions.

Perfect recovery is the primary objective of this work.
It would also be interesting and potentially useful in practice
to explore partial recovery for both vertex features and the unknown alignment.
There has been considerable research on partial recovery
in either aligning Gaussian features~\citep{kunisky2022strong}
or random graph matching~\citep{cullina2019partial}.
However, these questions still remain open for the current model.
The techniques developed in the proofs of our perfect recovery results
should bring us closer to finding the answers.

\subsubsection*{Acknowledgements}
We thank Yuling Yan for many insightful discussions and helpful suggestions on an early draft of the paper,
as well as the reviewers for their useful comments.

\clearpage

\thispagestyle{empty}
\onecolumn

\appendix
\section{PRELIMINARIES}
In this section we present some preliminary results on the tails of different random variables. 
These will be used later in our proofs.
\begin{proposition}[Chernoff bound] \label{pp:chernoff}
Let $X_1, \ldots, X_n$ be independent Bernoulli random variables
and $N = \sum_{i=1}^n X_i$ be their sum.
Denote $\EE[N] = \mu$ the expected value of the sum.
Then for any $\delta \ge 0$,
\begin{equation*}
\PP(N \ge (1+\delta)\mu) \le \exp\biggl(-\frac{\delta^2 \mu}{2+\delta}\biggr)
\end{equation*}
and for any $0 < \delta < 1$,
\begin{equation*}
\PP(N \le (1-\delta)\mu) \le \exp\biggl(-\frac{\delta^2 \mu}{2}\biggr).
\end{equation*}
\end{proposition}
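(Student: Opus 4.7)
The plan is to apply the classical Cram\'er--Chernoff (moment generating function) method separately to each tail. For the upper tail I first introduce a parameter $\lambda > 0$ and apply Markov's inequality to the nonnegative random variable $e^{\lambda N}$:
\[
\PP(N \ge (1+\delta)\mu) = \PP\bigl(e^{\lambda N} \ge e^{\lambda(1+\delta)\mu}\bigr) \le e^{-\lambda(1+\delta)\mu}\,\EE[e^{\lambda N}].
\]
By independence $\EE[e^{\lambda N}] = \prod_{i=1}^n \EE[e^{\lambda X_i}]$, and for Bernoulli $X_i$ with $p_i \coloneqq \EE[X_i]$ one has $\EE[e^{\lambda X_i}] = 1 + p_i(e^\lambda - 1) \le \exp\bigl(p_i(e^\lambda - 1)\bigr)$ via $1+x\le e^x$. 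Since $\sum_i p_i = \mu$ this gives $\EE[e^{\lambda N}] \le \exp\bigl(\mu(e^\lambda - 1)\bigr)$, so
\[
\PP(N \ge (1+\delta)\mu) \le \exp\bigl(\mu(e^\lambda - 1) - \lambda(1+\delta)\mu\bigr).
\]

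Optimizing over $\lambda$ by choosing $\lambda = \log(1+\delta)$ yields the sharp classical form $\bigl(e^\delta / (1+\delta)^{1+\delta}\bigr)^\mu$. To match the form in the statement I would then verify the elementary inequality $(1+\delta)\log(1+\delta) - \delta \ge \delta^2 / (2+\delta)$ for all $\delta \ge 0$. This is a one-variable calculus check: define $f(\delta) = (1+\delta)\log(1+\delta) - \delta - \delta^2/(2+\delta)$, note $f(0) = 0$, and show $f'(\delta) \ge 0$ on $[0,\infty)$ by a short simplification (the derivative equals $\log(1+\delta) - \delta(4+\delta)/(2+\delta)^2$, whose sign is handled via a second derivative).

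For the lower tail the argument is symmetric: I would apply Markov to $e^{-\lambda N}$ with $\lambda > 0$, use the same bound $\EE[e^{-\lambda N}] \le \exp\bigl(\mu(e^{-\lambda}-1)\bigr)$, and optimize at $\lambda = -\log(1-\delta)$ to obtain $\bigl(e^{-\delta}/(1-\delta)^{1-\delta}\bigr)^\mu$. Converting this into the stated bound reduces to the analogous inequality $(1-\delta)\log(1-\delta) + \delta \ge \delta^2/2$ for $\delta \in (0,1)$, again by a routine monotonicity argument (the expansion $(1-\delta)\log(1-\delta)+\delta = \sum_{k\ge 2}\delta^k/(k(k-1))$ shows the result immediately). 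There is no real obstacle here, as this result is entirely standard; the only substantive content beyond the MGF computation is the two elementary one-variable inequalities used to replace the sharp classical exponents with the cleaner $\delta^2/(2+\delta)$ and $\delta^2/2$ bounds that the rest of the paper will invoke.
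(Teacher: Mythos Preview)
Your argument is the standard Cram\'er--Chernoff derivation and is correct; the two one-variable inequalities you isolate are exactly the right reductions, and the series identity $(1-\delta)\log(1-\delta)+\delta=\sum_{k\ge2}\delta^k/(k(k-1))$ does immediately give the lower-tail constant. The paper itself does not supply a proof of this proposition---it is stated as a standard preliminary and used as a black box---so there is nothing to compare against beyond noting that your writeup is a complete and correct account of the classical proof.
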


The following lower bound for the tail of a nonnegative random variable
is due to~\cite{paley1932some}
(see also \citep[Inequality II, p.~8]{kahane1985some}).
\begin{proposition}[Paley--Zygmund] \label{pp:PZ}
Let $Z$ be a nonnegative random variable with finite variance.
For $0 \le \theta \le 1$,
\begin{equation*}
\PP(Z \ge \theta \EE[Z]) \ge (1 - \theta)^2 \frac{\EE[Z]^2}{\EE[Z^2]}.
\end{equation*}
\end{proposition}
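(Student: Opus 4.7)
The plan is to derive the Paley--Zygmund bound via the standard two-line argument: split the first moment by the indicator of the event of interest, and then use Cauchy--Schwarz to convert a first-moment bound into a probability bound.

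First I would introduce the event $A \coloneqq \{Z \ge \theta \EE[Z]\}$ and write the trivial decomposition
\begin{equation*}
\EE[Z] = \EE[Z \bbmone\{A\}] + \EE[Z \bbmone\{A^c\}].
\end{equation*}
On the complement $A^c$ we have $0 \le Z < \theta \EE[Z]$ by definition and by nonnegativity, which yields $\EE[Z \bbmone\{A^c\}] \le \theta \EE[Z]\,\PP(A^c) \le \theta \EE[Z]$. Subtracting from the identity above produces the one-sided moment bound
\begin{equation*}
(1-\theta)\EE[Z] \le \EE[Z \bbmone\{A\}].
\end{equation*}

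Next I would apply Cauchy--Schwarz to the pair of random variables $Z$ and $\bbmone\{A\}$, which gives
\begin{equation*}
\EE[Z \bbmone\{A\}] \le \sqrt{\EE[Z^2]}\,\sqrt{\EE[\bbmone\{A\}^2]} = \sqrt{\EE[Z^2]\,\PP(A)}.
\end{equation*}
Chaining the last two displays, squaring, and dividing by $\EE[Z^2]$ delivers exactly
\begin{equation*}
\PP(Z \ge \theta \EE[Z]) \ge (1-\theta)^2\frac{\EE[Z]^2}{\EE[Z^2]},
\end{equation*}
which is the claim.

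I do not anticipate any genuine obstacle: this is the classical textbook proof. The only points worth a sentence of care are the degenerate boundary cases. If $\EE[Z] = 0$ then $Z \equiv 0$ almost surely and both sides vanish under the convention $0/0 = 0$; if $\EE[Z^2] = \infty$ the right-hand side is zero and the inequality is vacuous; and at $\theta = 1$ the right-hand side is zero so the bound is trivial. In every non-degenerate case the two-step argument above is complete as stated.
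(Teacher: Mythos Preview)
Your proof is correct and is exactly the classical textbook argument for Paley--Zygmund. The paper itself does not prove this proposition; it merely states it with a citation to \cite{paley1932some} and \citet{kahane1985some}, so there is nothing further to compare.
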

We exploit this proposition to deduce the following tail lower bound
for the dot product of Gaussian vectors.
\begin{proposition}[Lower bound for Gaussian inner product]
\label{pr:inner_lower}
Let $\x, \y \sim \cN(\0, \I_d)$ be two independent standard Gaussian random 
vectors.
We have that for any $u \le \sqrt{d}/16$, the following holds
\begin{equation*}
\PP(\inner{\x}{\y} \ge u\sqrt{d})
\ge (1 - e^{-4u^2})^2 e^{-22u^2}.
\end{equation*}
\end{proposition}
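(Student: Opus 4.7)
The plan is to apply the Paley--Zygmund inequality (Proposition~\ref{pp:PZ}) to the exponential random variable $Z = e^{\lambda\inner{\x}{\y}}$ for a carefully tuned pair $(\lambda, \theta)$, so that $\{Z \ge \theta\EE[Z]\} \subseteq \{\inner{\x}{\y} \ge u\sqrt{d}\}$ while the prefactor $(1-\theta)^2$ reproduces $(1-e^{-4u^2})^2$.

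First, I would compute the two exponential moments needed for Paley--Zygmund by conditioning on $\y$: since $\inner{\x}{\y} \mid \y \sim \cN(0, \norm{\y}^2)$ and $\norm{\y}^2 \sim \chi^2_d$ has MGF $(1-2t)^{-d/2}$, tower expectation gives
\begin{equation*}
\EE[e^{\lambda\inner{\x}{\y}}] = (1-\lambda^2)^{-d/2}, \qquad
\EE[e^{2\lambda\inner{\x}{\y}}] = (1-4\lambda^2)^{-d/2},
\end{equation*}
valid for $|\lambda| < 1/2$. Next I would set $\theta = e^{-4u^2}$ (producing the desired factor $(1-\theta)^2 = (1-e^{-4u^2})^2$) and $\lambda = 4u/\sqrt{d}$; the hypothesis $u \le \sqrt{d}/16$ forces $\lambda^2 \le 1/16$, keeping both moments finite. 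The Paley--Zygmund event $\{e^{\lambda\inner{\x}{\y}} \ge \theta\EE[Z]\}$ is precisely $\{\inner{\x}{\y} \ge (\log\theta + \log\EE[Z])/\lambda\}$, so the required inclusion reduces to checking $-\tfrac{d}{2}\log(1 - 16u^2/d) \ge 8u^2$, which is immediate from $-\log(1-x) \ge x$.

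It remains to lower bound the second-moment ratio $\EE[Z]^2/\EE[Z^2] = (1-4\lambda^2)^{d/2}/(1-\lambda^2)^{d}$. Taking logarithms and using $-\log(1-y) \le y + \tfrac{2}{3}y^2$ for $y \in [0, 1/4]$ (applied with $y = 4\lambda^2 \le 1/4$) together with $-\log(1-\lambda^2) \ge \lambda^2$ would give
\begin{equation*}
\log\frac{\EE[Z]^2}{\EE[Z^2]} \ge -d\lambda^2 - \tfrac{16d}{3}\lambda^4 = -16u^2 - \tfrac{4096}{3d}u^4 \ge -\tfrac{64}{3}u^2,
\end{equation*}
where the final step absorbs the quartic term using $u^2/d \le 1/256$. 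Since $64/3 < 22$, Paley--Zygmund then yields $\PP(\inner{\x}{\y} \ge u\sqrt{d}) \ge (1-e^{-4u^2})^2 e^{-22u^2}$.

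The main delicacy lies in balancing constants. The target prefactor forces $\theta = e^{-4u^2}$, and then the event inclusion forces $\lambda u \sqrt{d} \approx 4u^2$ (with equality at leading order, since $-\log(1-x) \ge x$ is tight to first order), so $\lambda = 4u/\sqrt{d}$ is essentially the unique viable choice. The range assumption $u \le \sqrt{d}/16$ is precisely what is needed to dominate the $O(\lambda^4)$ correction in the second-moment ratio, and any constant exceeding $64/3$ would be admissible in place of $22$.
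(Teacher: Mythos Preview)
Your proof is correct and follows essentially the same route as the paper: apply Paley--Zygmund to $Z = e^{\lambda\inner{\x}{\y}}$ with the identical choices $\lambda = 4u/\sqrt{d}$ and $\theta = e^{-4u^2}$, verify the threshold inclusion via $-\log(1-x)\ge x$, and bound the second-moment ratio. The only cosmetic difference is in that last step: the paper rewrites $(1-t^2)^{-d}(1-4t^2)^{d/2}$ as $\bigl(1 + 2t^2\tfrac{1+t^2/2}{1-4t^2}\bigr)^{-d/2}$ and then uses $(1+x/d)^d \le e^x$, whereas you take logarithms directly and use $-\log(1-y)\le y+\tfrac{2}{3}y^2$ on $[0,1/4]$; both yield a constant no larger than $22$.
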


\begin{proof}
Denote $Z = \exp(t\inner{\x}{\y})$ for $0 < t < 1/2$.
By the proof of \cite[Proposition~2.14]{liu2023probabilistic}, we have
\begin{equation*}
\EE[Z] = \EE[e^{t\inner{\g_1}{\g_2}}] = \prod_{i=1}^{d} \EE[e^{t \g_{1,i} \g_{2,i}}]
= (1 - t^2)^{-d/2}.
\end{equation*}
Hence this implies that for every $t<\frac{1}{4}$ the following holds
\begin{equation*}
\EE[Z^2] = \EE[e^{2t\inner{\g_1}{\g_2}}] = (1 - 4t^2)^{-d/2}.
\end{equation*}
By Paley--Zygmund (Proposition~\ref{pp:PZ}),  for all $\theta\in (0,1)$
we have
\begin{equation*}
\PP\biggl(\inner{\x}{\y} \ge -\frac{d}{2t} \log(1 - t^2)
+ \frac{1}{t} \log \theta\biggr)
\ge (1 - \theta)^2 \biggl(\frac{1 - t^2}{\sqrt{1 - 4t^2}}\biggr)^{-d}.
\end{equation*}
By taking $t = 4u/\sqrt{d}$ and $\theta = \exp(-4u^2)$ for $u \le \sqrt{d}/16$,
we have that
\begin{equation*}
-\frac{d}{2t} \log(1 - t^2) + \frac{1}{t} \log \theta
= -\frac{d^{3/2}}{8u} \log\biggl(1 - \frac{16u^2}{d}\biggr) - u \sqrt{d}
\ge u\sqrt{d}
\end{equation*}
where we used $\log (1 - x) \le -x$, and 
\begin{equation*}
\biggl(\frac{1 - t^2}{\sqrt{1 - 4t^2}}\biggr)^{-d}
= \biggl(1 + 2t^2\biggl(\frac{1 + t^2/2}{1 - 4t^2}\biggr)\biggr)^{-d/2}
\ge \biggl(1 + \frac{44u^2}{d}\biggr)^{-d/2}
\ge e^{-22u^2}
\end{equation*}
where we used $(1+x/d)^d \le e^x$.
\end{proof}

\begin{proposition}[Lower bound for lazy random walk tail] \label{pp:lazy_tail}
Let $S_n = \sum_{i=1}^n X_i$ be a lazy random walk such that
$\PP(X_i = 1) = \PP(X_i = -1) = p$ and $\PP(X_i = 0) = 1 - 2p$.
Assume $pn \ge 1$.
Then for $0 \le u \le pn/\beta$, there exist constants $c_{\beta}, C_\beta > 0$
that depend only on $\beta$ such that
\begin{equation*}
\PP(S_n \ge u) \ge c_{\beta}\exp\biggl(- C_\beta\frac{u^2}{pn}\biggr).
\end{equation*}
\end{proposition}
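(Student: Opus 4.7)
The strategy is a Cram\'er-style lower bound implemented via Paley--Zygmund (Proposition~\ref{pp:PZ}) applied to the nonnegative random variable $Z \coloneqq e^{\lambda S_n}$ for a tilt $\lambda > 0$ to be chosen. Define the cumulant generating function
\begin{equation*}
\psi(\lambda) \coloneqq \log \EE[e^{\lambda X_1}] = \log(1 - 2p + 2p\cosh\lambda),
\end{equation*}
so by independence of the $X_i$ we have $\EE[Z] = e^{n\psi(\lambda)}$ and $\EE[Z^2] = e^{n\psi(2\lambda)}$. Applying Proposition~\ref{pp:PZ} with $\theta = 1/2$ yields
\begin{equation*}
\PP\biggl(S_n \ge \frac{n\psi(\lambda) - \log 2}{\lambda}\biggr) \ge \frac{1}{4}\exp\bigl(-n(\psi(2\lambda) - 2\psi(\lambda))\bigr).
\end{equation*}

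I would then choose $\lambda = \alpha u/(pn)$ with a constant $\alpha = \alpha(\beta) > 0$. Because $u \le pn/\beta$, this tilt is bounded by $\alpha/\beta$, a $\beta$-dependent constant, so the Taylor expansion of $\psi$ around zero is valid uniformly with $\beta$-dependent constants: specifically $\psi(\lambda) = p\lambda^2 + O(p\lambda^4)$ and $\psi(2\lambda) - 2\psi(\lambda) = 2p\lambda^2 + O(p\lambda^4)$. Substituting gives
\begin{equation*}
n(\psi(2\lambda) - 2\psi(\lambda)) \le C_\beta \cdot pn\lambda^2 = C_\beta \alpha^2 \frac{u^2}{pn},
\end{equation*}
which is the exponent we want on the right-hand side. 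For the threshold on the left, the elementary inequalities $\cosh(x) - 1 \ge x^2/2$ and $\log(1+y) \ge y/(1+y)$ give $\psi(\lambda) \ge c_\beta p\lambda^2$ in the same range, hence $n\psi(\lambda)/\lambda \ge c_\beta \alpha u$. Picking $\alpha$ large enough that $c_\beta \alpha \ge 2$ forces the threshold on the left to exceed $u$ whenever $u^2 \ge C' pn$ for some constant $C'$, since the subtracted $\log 2 / \lambda$ contributes at most $O(pn/u)$, which is dominated by $u$ precisely in this regime.

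This proves the claim for all $u$ with $u^2 \ge C' pn$. For the complementary range $0 \le u^2 < C' pn$, tail monotonicity gives $\PP(S_n \ge u) \ge \PP(S_n \ge \lceil \sqrt{C'pn} \rceil)$, which is already bounded below by a constant by the previous step; and $\exp(-C_\beta u^2/(pn)) \ge \exp(-C_\beta C')$ is likewise bounded below by a constant on this range, so the result follows after a final adjustment of $c_\beta$ and $C_\beta$.

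The main obstacle I foresee is the coupled bookkeeping on the two sides of the Paley--Zygmund inequality: $\lambda$ must be bounded so that the Taylor expansion of $\psi$ has constants depending only on $\beta$, yet large enough (as a function of $u$) for the threshold $(n\psi(\lambda) - \log 2)/\lambda$ to exceed $u$; these two demands are reconciled precisely by the hypothesis $u \le pn/\beta$. A secondary subtlety is that expanding $\psi(\lambda) = \log(1 + 2p(\cosh\lambda - 1))$ produces both $p\lambda^4$ and $p^2\lambda^4$ remainder contributions, which must be grouped as $O(p\lambda^2) \cdot O(\lambda^2)$ (rather than $O(\lambda^4)$ alone) to yield the clean $\beta$-dependent bound used above.
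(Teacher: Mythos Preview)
Your Paley--Zygmund approach is correct and takes a genuinely different route from the paper. The paper writes $X_i = Y_i Z_i$ with $Y_i$ Rademacher and $Z_i \sim \Bern(2p)$ independent, conditions on $N_n = \sum_i Z_i$ so that $S_n$ becomes a Rademacher sum of random length, invokes an external Rademacher lower-tail bound (Corollary~4 of Zhang, 2020) conditionally on $N_n$, and finishes with a Chernoff bound showing $N_n \ge pn/2$ with high probability. This is short but outsources the hard inequality to a citation. Your argument instead tilts $e^{\lambda S_n}$ and applies Proposition~\ref{pp:PZ} directly---the same mechanism the paper already uses for Proposition~\ref{pr:inner_lower}---so it is fully self-contained, at the price of the cumulant bookkeeping you correctly anticipate.

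One small point to patch: your monotonicity step for the range $u^2 < C'pn$ reduces to the boundary value $\lceil\sqrt{C'pn}\rceil$ and then appeals to the first step, but that step requires $\lceil\sqrt{C'pn}\rceil \le pn/\beta$, i.e.\ $pn$ at least a $\beta$-dependent constant. When $1 \le pn$ lies below that threshold the first step has empty range and you still need a direct bound; the symmetry identity $\PP(S_n \ge 1) = \tfrac{1}{2}(1 - \PP(S_n = 0))$ together with $\PP(S_n = 0) \le \tfrac{1}{2}(1 + (1-2p)^n) \le \tfrac{1}{2}(1 + e^{-2})$ (using $pn \ge 1$) gives a uniform constant lower bound and closes this easily.
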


\begin{proof}
Let $Y_i \in \{-1, 1\}$ for $i \in [n]$ be i.i.d.\ Rademacher random variables, i.e.,
$\PP(Y_i = 1) = \PP(Y_i = -1) = 1/2$.
Let $Z_i \sim \Bern(2p), i \in [n]$ be i.i.d.\ Bernoulli random variables.
Then, we can write $X_i = Y_i Z_i$.
Let $N_n = \sum_{i=1}^n Z_i$ be the sum of $Z_i$'s.
Then given $N_n$, $S_n$ is a sum of $N_n$ i.i.d.\ Rademacher random variables.
By \citep[Corollary~4]{zhang2020non},
for any $\beta > 1$, there exists constants $c_\beta, C_\beta > 0$,
such that for all $0 \le u \le N_n/\beta$,
\begin{equation*}
\PP(S_n \ge u \mid N_n) \ge c_\beta \exp\biggl(-C_\beta \frac{u^2}{N_n}\biggr).
\end{equation*}
Chernoff bound (Proposition~\ref{pp:chernoff}) gives
\begin{equation*}
\PP\biggl(N_n \le \frac{pn}{2}\biggr) \le \exp\biggl(-\frac{pn}{8}\biggr).
\end{equation*}
Therefore, we conclude that
\begin{equation*}
\begin{split}
\PP(S_n \ge u) &\ge \PP\biggl(S_n \ge u, N_n \ge \frac{pn}{2}\biggr)
= \PP\biggl(S_n \ge u \biggm\vert N_n \ge \frac{pn}{2}\biggr)
\biggl(1 - \PP\biggl(N_n \le \frac{pn}{2}\biggr)\biggr)\\
&\ge c_\beta \exp\biggl(-C_\beta \frac{8u^2}{pn}\biggr)
\biggl(1 - \exp\biggl(-\frac{pn}{8}\biggr)\biggr).
\end{split}
\end{equation*}
The claim directly follows.
\end{proof}

\section{AUXILIARY LEMMAS}
A necessary condition for perfect recovery is that
the feature vector for each vertex is unique.
The following proposition tells us that this happens with high probability
in RIG when $s$ is sufficiently large.
\begin{proposition}[No two vectors are the same]\label{pp:unique}
Let $\x_i$'s be defined in Definition~\ref{df:rig}.
If $s \ge (1 + c)\log n$ for some $c > 0$,
then no two vectors in $\{\x_1, \ldots, \x_n\}$ are the same
with probability at least $1 - n^{-2c}$.
\end{proposition}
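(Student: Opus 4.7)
The plan is a straightforward pairwise union bound. I would first fix distinct vertices $i,j$ and use independence of the coordinates of $\x_i$ and $\x_j$ together with their Bernoulli marginals to compute
\[
\PP(\x_i = \x_j) = \prod_{k=1}^d \Bigl((s/d)^2 + (1-s/d)^2\Bigr) = \left(1 - 2\frac{s}{d}\left(1 - \frac{s}{d}\right)\right)^d,
\]
then invoke $1-x \le e^{-x}$ to reach the clean bound $\PP(\x_i = \x_j) \le \exp\bigl(-2s(1 - s/d)\bigr)$.

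Next, I would use the standing model property $s \le \sqrt{d}$, noted in the paper immediately after equation \eqref{eq:s}. This forces $s/d = o(1)$ as $d \to \infty$, so the exponent is essentially $-2s$, and combined with the hypothesis $s \ge (1+c)\log n$ this gives $\PP(\x_i = \x_j) \le n^{-(2+2c)(1-o(1))}$.

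A union bound over the $\binom{n}{2} \le n^2/2$ unordered pairs of distinct vertices then finishes the proof:
\[
\PP(\exists\, i \ne j : \x_i = \x_j) \;\le\; \binom{n}{2}\,\PP(\x_1 = \x_2) \;\le\; n^{-2c}
\]
for all sufficiently large $n$.

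The computation is elementary and I do not expect any serious obstacle. The one point worth flagging is that the factor of $2$ in the exponent---which arises because we are comparing two independent Bernoulli vectors rather than matching one vector to a fixed target---is exactly what allows the $n^2$ from the union bound to be absorbed into the advertised $n^{-2c}$ tail, rather than leaving something weaker such as $n^{1-c}$.
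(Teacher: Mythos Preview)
Your proposal is correct and follows essentially the same route as the paper: compute the pairwise coincidence probability via independence of the Bernoulli coordinates, bound it by $e^{-2s(1-o(1))}$, and union bound over $\binom{n}{2}$ pairs. The only cosmetic difference is that the paper drops the $(1-s/d)$ factor from the outset (writing $\PP(\x_i(k)\ne\x_j(k))=2s/d$) and thereby obtains a non-asymptotic bound, whereas you keep it and absorb it using $s\lesssim\sqrt d$, which is why your conclusion carries the ``for all sufficiently large $n$'' caveat.
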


\begin{proof}
As the entries $\x_i(k),\x_j(k)\overset{i.i.d}{\sim }\Bern(\frac{s}{d})$,
we obtain that 
\begin{equation*}
\PP(\x_i(k) \ne \x_j(k)) = \frac{2s}{d}.
\end{equation*}
Hence, by independence of the entries, we have
\begin{equation*}
\PP(\x_i = \x_j)=\PP(\x_i(k) = \x_j(k)~\forall k\le d) = \biggl(1 - \frac{2s}{d}\biggr)^d.
\end{equation*}
Hence by a union bound argument we obtain that 
\begin{equation*}
\PP(\exists i\ne j\le n~\mathrm{s.t.}~\x_i=\x_j)\le \sum_{i<j} P(\x_i=\x_j)
\le \binom{n}{2}\biggl(1 - \frac{2s}{d}\biggr)^d
\le e^{- 2(s - \log n)}.
\end{equation*}
The claim directly follows.
\end{proof}

Let $S_i$ be the support of $\x_i$.
Since $\x_i(k), k \in d$ are i.i.d.\ Bernoulli random variables with parameter $s/d$,
applying Chernoff bound (Proposition~\ref{pp:chernoff}) to
$\abs{S_i} = \sum_{k=1}^d \x_i(k)$ directly gives us
the following lower and upper deviation bounds for $\abs{S_i}$.
\begin{lemma} \label{lm:S_lower}
For all $i \in [n]$ and any $\delta \ge 0$,
\begin{equation*}
\PP(\abs{S_i} \ge (1 + \delta)s) \le \exp\biggl(-\frac{\delta^2 s}{2+\delta}\biggr).
\end{equation*}
\end{lemma}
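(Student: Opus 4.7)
The plan is to observe that this lemma is an immediate corollary of the Chernoff bound stated in Proposition~\ref{pp:chernoff}, so the ``proof'' is really just a verification that the hypotheses match.

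First, I would note that by Definition~\ref{df:rig}, the coordinates $\x_i(1), \ldots, \x_i(d)$ are i.i.d.\ $\Bern(s/d)$ random variables, so $\abs{S_i} = \sum_{k=1}^d \x_i(k)$ is a sum of independent Bernoulli variables of the kind considered in Proposition~\ref{pp:chernoff}. Its mean is $\mu = \EE[\abs{S_i}] = d \cdot (s/d) = s$.

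Next, I would apply the upper-tail Chernoff bound from Proposition~\ref{pp:chernoff} directly with $N = \abs{S_i}$ and $\mu = s$. For any $\delta \ge 0$ this gives
\begin{equation*}
\PP(\abs{S_i} \ge (1+\delta)s) \le \exp\biggl(-\frac{\delta^2 s}{2+\delta}\biggr),
\end{equation*}
which is exactly the claim. No additional estimates are required.

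There is no real obstacle here; the only thing to be careful about is confirming that the mean $s$ is indeed the right quantity in the exponent and that the Chernoff bound as stated applies for all $\delta \ge 0$ (which it does). The lemma is stated separately simply to package a convenient form used repeatedly in later arguments.
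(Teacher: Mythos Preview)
Your proposal is correct and matches the paper's approach exactly: the paper states immediately before the lemma that it follows by applying the Chernoff bound (Proposition~\ref{pp:chernoff}) to $\abs{S_i} = \sum_{k=1}^d \x_i(k)$, which has mean $s$.
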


\begin{lemma} \label{lm:S_upper}
For all $i \in [n]$ and $0 < \delta < 1$,
\begin{equation*}
\PP(\abs{S_i} \le (1 - \delta)s) \le \exp\biggl(-\frac{\delta^2 s}{2}\biggr).
\end{equation*}
\end{lemma}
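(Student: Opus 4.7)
The plan is immediate and requires essentially no work beyond invoking the previous proposition. By Definition~\ref{df:rig}, the entries $\x_i(1), \ldots, \x_i(d)$ are independent $\Bern(s/d)$ random variables, so $\abs{S_i} = \sum_{k=1}^d \x_i(k)$ is a sum of $d$ independent Bernoulli variables with $\EE[\abs{S_i}] = d \cdot (s/d) = s$. I would then apply the lower-tail half of the Chernoff bound stated in Proposition~\ref{pp:chernoff} with $N = \abs{S_i}$ and $\mu = s$; since the hypothesis $0 < \delta < 1$ matches the range required by that proposition, the conclusion $\PP(\abs{S_i} \le (1-\delta)s) \le \exp(-\delta^2 s / 2)$ follows immediately.

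There is no genuine obstacle here. The paper has already flagged this lemma as a ``direct'' consequence of Proposition~\ref{pp:chernoff}, and the only points worth verifying are that (i) the entries of $\x_i$ are independent Bernoullis, which is built into the RIG definition, and (ii) $s/d$ is a valid probability, which holds since $s \ll d$. The companion bound Lemma~\ref{lm:S_lower} is handled by the same one-line substitution into the upper-tail half of Proposition~\ref{pp:chernoff}, so no separate argument is needed.
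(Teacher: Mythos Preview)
Your proposal is correct and matches the paper's own treatment exactly: the paper simply notes that $\abs{S_i} = \sum_{k=1}^d \x_i(k)$ is a sum of i.i.d.\ $\Bern(s/d)$ variables with mean $s$ and invokes Proposition~\ref{pp:chernoff} directly, just as you do.
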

Define the event
\begin{equation}\label{eq:Si_s}
\cE_i \coloneqq \biggl\{\frac{s}{2} \le \abs{S_i} \le 2s\biggr\}.
\end{equation}
Lemma~\ref{lm:S_upper} and~\ref{lm:S_lower} immediately suggest that
\begin{equation} \label{eq:E_prob}
\PP(\cE_i^c) \le \exp\biggl(-\frac{s}{8}\biggr) + \exp\biggl(-\frac{s}{3}\biggr)
\le 2\exp\biggl(-\frac{s}{8}\biggr).
\end{equation}

In the proofs, we use a lot of conditional arguments.
Most frequently, $s/2 \le \abs{S_i} \le 2s$,
which happens with high probability from previous lemmas,
is usually assumed.
The following elementary inequality becomes handy in such situations.
\begin{proposition} \label{pp:cond}
For any two events $A$ and $B$,
\begin{equation*}
\PP(A) \le \PP(A \mid B) + \PP(B^c).
\end{equation*}
\end{proposition}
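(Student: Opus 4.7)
The plan is to prove the inequality by the standard decomposition $\PP(A) = \PP(A \cap B) + \PP(A \cap B^c)$, which holds because $B$ and $B^c$ partition the sample space. This is the only nontrivial idea required; from here the two terms are bounded separately.

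First I would bound $\PP(A \cap B)$. By the definition of conditional probability, if $\PP(B) > 0$ then $\PP(A \cap B) = \PP(A \mid B)\PP(B) \le \PP(A \mid B)$, since $\PP(B) \le 1$. Second I would bound $\PP(A \cap B^c)$ by the monotonicity of probability: $\PP(A \cap B^c) \le \PP(B^c)$. Adding these two bounds yields the claimed inequality.

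The only edge case is $\PP(B) = 0$, in which case $\PP(A \mid B)$ is not defined by the ratio formula. I would handle this by noting that $\PP(B) = 0$ forces $\PP(B^c) = 1$, so the right-hand side is at least $1 \ge \PP(A)$ regardless of how one sets the convention for $\PP(A \mid B)$; alternatively one can simply assume $\PP(B) > 0$, which is the only case in which the statement is ever applied later in the paper.

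There is no real obstacle here — the statement is an elementary two-line inequality used repeatedly as a bookkeeping tool for conditioning on high-probability events such as $\cE_i$ in \eqref{eq:Si_s}. The main value of the proposition is notational, since it lets us replace a joint-probability calculation by a conditional one plus a small error term of the form $\PP(\cE_i^c)$, which by \eqref{eq:E_prob} is exponentially small in $s$.
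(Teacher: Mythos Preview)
Your proposal is correct and follows essentially the same argument as the paper: decompose $\PP(A)=\PP(A\cap B)+\PP(A\cap B^c)$, then bound the first term by $\PP(A\mid B)$ and the second by $\PP(B^c)$. The paper writes the second bound as $\PP(A\mid B^c)\PP(B^c)\le\PP(B^c)$ rather than invoking monotonicity directly, and it does not discuss the $\PP(B)=0$ edge case, but these are cosmetic differences.
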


\begin{proof}
By law of total probability,
\begin{equation*}
\PP(A) = \PP(A \cap B) + \PP(A \cap B^c)
= \PP(A \mid B) \PP(B) + \PP(A \mid B^c) \PP(B^c)
\le \PP(A \mid B) + \PP(B^c),
\end{equation*}
where we used the fact that the probability measure is less than or equal to $1$.
\end{proof}

For all vertex $i \in [n]$, let
\begin{equation} \label{eq:Fi}
\cF_i \coloneqq \biggl\{\frac{1}{2^{t+2}}\cdot mq\le
\abs{N_i} \le 2^{t+2}e^t \cdot mq\biggr\}.
\end{equation}

\begin{lemma} \label{lm:N_i}
The following holds for all $i \in [n]$ when $n \ge (12t)^t m$,
\begin{equation*}
\PP(\cF_i^c) \le 2\exp\biggl(-\frac{s}{8}\biggr)
+ 2\exp\biggl(-\frac{mq}{2^{t+6}}\biggr).
\end{equation*}
\end{lemma}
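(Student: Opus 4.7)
My plan is to condition on the event $\cE_i$ from~\eqref{eq:Si_s} that $|S_i|\in[s/2,2s]$, which by~\eqref{eq:E_prob} fails with probability at most $2\exp(-s/8)$. Conditionally on $|S_i|=s_i$, the degree $|N_i|$ becomes a Binomial random variable whose mean is of order $mq$ up to $t$-dependent constants, so I will then apply the two-sided Chernoff bound of Proposition~\ref{pp:chernoff} to obtain the desired deviation estimates and combine them with $\PP(\cE_i^c)$ via Proposition~\ref{pp:cond}.

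To derive the conditional distribution, fix $\x_i$ with $|S_i|=s_i$. For each $j\ne i$, the event $\{i,j\}\in E$ occurs independently with probability $p_{s_i}\coloneqq q\cdot\PP(\inner{\x_i}{\x_j}\ge t\mid\x_i)$. Since the entries of $\x_j$ are i.i.d.\ Bernoulli$(s/d)$, the conditional inner product $\inner{\x_i}{\x_j}$ is distributed as $\Binom(s_i,s/d)$, so $p_{s_i}$ depends on $\x_i$ only through $s_i$. Hence $|N_i|\mid\{|S_i|=s_i\}\sim\Binom(n-1,p_{s_i})$ with $p_{s_i}=q\,\PP(\Binom(s_i,s/d)\ge t)$; this is the key reduction to a standard concentration problem.

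Next I bound $p_{s_i}$ on both sides under $\cE_i$. For the upper bound, a union bound over $t$-subsets of $[s_i]$ gives $\PP(\Binom(s_i,s/d)\ge t)\le\binom{s_i}{t}(s/d)^t\le(s_i e/t)^t(s/d)^t$; plugging in $s_i\le 2s$ and the identity $s^2/d=t(m/n)^{1/t}$ from~\eqref{eq:s} yields $p_{s_i}\le(2e)^t q(m/n)$, hence $\mu_{s_i}\coloneqq\EE[|N_i|\mid|S_i|=s_i]\le 2^t e^t mq$. For the lower bound, I keep only the single term at exactly $t$ successes, $\PP(\Binom(s_i,s/d)\ge t)\ge\binom{s_i}{t}(s/d)^t(1-s/d)^{s_i-t}$. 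Using $\binom{s_i}{t}\ge(s/4)^t/t!$ (valid since $s_i\ge s/2$ and $s$ is large enough), together with the elementary fact $t^t\ge 2^{t-1}t!$ for all $t\ge 1$ (a quick induction using $(1+1/t)^t\ge 2$), I obtain $\binom{s_i}{t}(s/d)^t\ge(m/n)/2^{t+1}$. The hypothesis $n\ge(12t)^t m$ then guarantees $s^2/d=t(m/n)^{1/t}\le 1/12$, so via $\log(1-x)\ge-2x$ I have $(1-s/d)^{s_i-t}\ge e^{-4s^2/d}\ge e^{-1/3}\ge 1/2$. Combining, $\mu_{s_i}$ is at least a definite constant multiple of $mq/2^{t+1}$.

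Chernoff then closes the argument. On the lower side, because $mq/2^{t+2}$ is at most roughly half of $\mu_{s_i}$, applying Proposition~\ref{pp:chernoff} with a deviation parameter $\delta$ bounded away from $0$ gives $\PP(|N_i|\le mq/2^{t+2}\mid|S_i|=s_i)\le\exp(-c_1\,mq/2^{t+1})$; on the upper side, since $2^{t+2}e^tmq\ge 4\mu_{s_i}$, the upper Chernoff with $\delta\ge 3$ yields $\exp(-c_2\,mq/2^{t+1})$, for absolute constants $c_1,c_2>0$. Both are dominated by $\exp(-mq/2^{t+6})$ thanks to the generous exponent in the claim. Averaging over $s_i\in[s/2,2s]$ and invoking Proposition~\ref{pp:cond} together with~\eqref{eq:E_prob} finishes the proof. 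The main obstacle I anticipate is bookkeeping in the lower tail: the combinatorial factor $t^t/(4^t t!)$, the correction $(1-s/d)^{s_i-t}$, and the ratio $(n-1)/n$ each eat into the gap between $\mu_{s_i}$ and the threshold $mq/2^{t+2}$, so one must verify that the Chernoff parameter $\delta$ really stays bounded away from zero; the exponent $mq/2^{t+6}$ on the right-hand side is chosen precisely to absorb these multiplicative losses.
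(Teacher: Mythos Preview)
Your proposal is correct and follows essentially the same route as the paper: condition on $\cE_i$, observe that $|N_i|$ is conditionally $\Binom(n-1,p_i)$ with $p_i = q\,\PP(\Binom(|S_i|,s/d)\ge t)$, sandwich $p_i$ between constant multiples of $mq/n$, apply the two-sided Chernoff bound, and combine with~\eqref{eq:E_prob} via Proposition~\ref{pp:cond}.

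The only substantive difference is in the elementary combinatorial estimate for the lower bound on $p_i$. The paper uses $\binom{|S_i|}{t}\ge(|S_i|/t)^t$ directly, obtaining $p_i\ge mq/(2^{t+1}n)$; your route through $\binom{s_i}{t}\ge(s/4)^t/t!$ together with $t^t\ge 2^{t-1}t!$ and $(1-s/d)^{s_i-t}\ge e^{-1/3}$ lands at $p_i\ge e^{-1/3}\,mq/(2^{t+1}n)$. This extra factor $e^{-1/3}\approx 0.72$ pushes the lower-tail Chernoff parameter down to $\delta\approx 0.3$ rather than the paper's $\delta\approx 1/2$, so the final exponent is genuinely tight against $mq/2^{t+6}$ (it clears it, but only just). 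You correctly flagged this as the delicate step; if you carry out the numerics you will see it goes through, but using $\binom{n}{k}\ge(n/k)^k$ as the paper does gives more comfortable slack with no extra work.
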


\begin{proof}
By definition the neighborhood size is given by
$|N_i|=\sum_{j=1}^n \bbmone\{j \in N_i\}$.
Moreover we remark that conditionally on $\x_i$,
the edges $\big(\bbmone\{j \in N_i\}\big)_{j \in [n]}$
are independent and identically distributed.
We denote by $p_i \coloneqq \PP(j \in N_i \mid \x_i)$ the edge probability.
Hence we remark that conditionally on $\x_i$,
$\abs{N_i}$ is a sum of i.i.d. Bernoulli random variables with parameter $p_i$.

We first obtain upper and lower bounds for $p_i$  conditionally the event
$\cE_i$ defined in \eqref{eq:Si_s} holding.
For the rest of the proof,
all probabilities are conditioned on $\cE_i$ and we omit it from the condition
for simplicity of presentation.
We note that $\inner{\x_i}{\x_j} = \sum_{k=1}^d\x_i(k)\x_j(k)$
is a $\Binom(|S_i|,\frac{s}{d})$ random variable.
Recall that the edges are a subsample of the original graph with probability $q$.
Hence the conditional probability that $j \in N_i$ is given by
\begin{equation*}
\begin{split}
\PP(j \in N_i \mid \x_i)
&= \PP(j \in N_i \mid \inner{\x_i}{\x_j} \ge t)
\PP(\inner{\x_i}{\x_j} \ge t \mid \x_i)
= q\PP(\inner{\x_i}{\x_j} \ge t \mid \x_i)\\
&= q \sum_{k=t}^{\abs{S_i}} \binom{\abs{S_i}}{k}\biggl(\frac{s}{d}\biggr)^{k}
\biggl(1 - \frac{s}{d}\biggr)^{\abs{S_i} - k} \eqqcolon p_i.
\end{split}
\end{equation*}
Then we can lower bound $p_i$ by
\begin{equation*}
\begin{split}
p_i
&= q \sum_{k=t}^{\abs{S_i}} \binom{\abs{S_i}}{k}\biggl(\frac{s}{d}\biggr)^{k}
\biggl(1 - \frac{s}{d}\biggr)^{\abs{S_i} - k}
\ge q \binom{\abs{S_i}}{t}\biggl(\frac{s}{d}\biggr)^{t}
\biggl(1 - \frac{s}{d}\biggr)^{\abs{S_i} - t}\\
&\overset{(a)}{\ge} q \biggl(\frac{\abs{S_i} s}{t d}\biggr)^t
\biggl(1 - \frac{\abs{S_i}s}{d}\biggr),
\end{split}
\end{equation*}
where we used the elementary inequalities $\binom{n}{k} \ge (n/k)^k$
and $(1 - x)^r \ge 1-rx$ for $x \le 1$ in $(a)$.
Hence, when $s/2 \le \abs{S_i} \le 2s$ and $d \ge 4s^2$ (or equivalently $n \ge (4t)^t m$),
we have
\begin{equation} \label{eq:pi_low}
p_i \ge \frac{q}{2^t} \biggl(\frac{s^2}{td}\biggr)^t
\biggl(1 - \frac{2s^2}{d}\biggr)
\overset{(a)}{\ge} \frac{1}{2^{t+1}}\cdot\frac{mq}{n}
\end{equation}
where to get $(a)$ we used \eqref{eq:s}.
We also have that for $s/2 \le \abs{S_i} \le 2s$ and $d \ge 12s^2/t$
(or equivalently $n \ge (12)^t m$ according to \eqref{eq:s}),
\begin{equation} \label{eq:pi_up}
\begin{split}
p_i &= q \sum_{k=t}^{\abs{S_i}} \binom{\abs{S_i}}{k}\biggl(\frac{s}{d}\biggr)^{k}
\biggl(1 - \frac{s}{d}\biggr)^{\abs{S_i} - k}
\overset{(a)}{\le} q \sum_{k=t}^{\abs{S_i}} \biggl(\frac{es\abs{S_i}}{kd}\biggr)^{k}
\le q \sum_{k=t}^{\abs{S_i}} \biggl(\frac{2es^2}{td}\biggr)^{k}\\
&\le q \sum_{k=t}^{\infty} \biggl(\frac{2es^2}{td}\biggr)^{k}
\le 2q \biggl(\frac{2es^2}{td}\biggr)^{t}
\overset{(b)}{=} 2^{t+1}e^t \cdot \frac{mq}{n},
\end{split}
\end{equation}
where we used $\binom{n}{k} \le (\frac{en}{k})^k$ in $(a)$
and the equality $(b)$ is by \eqref{eq:s}.
Together with Proposition~\ref{pp:chernoff}, we conclude that
\begin{equation*}
\begin{split}
\PP(\abs{N_i} \ge 2^{t+2}e^t \cdot mq)
&= \PP\biggl(\abs{N_i} \ge 2^{t+2}e^t \cdot \frac{mq}{(n-1)p_i} \cdot (n-1)p_i\biggr)\\
&\overset{(a)}{\le} \PP\biggl(\abs{N_i} \ge \biggl(1 + \frac{n+1}{n-1}\biggr) (n-1)p_i\biggr)
\le \exp\biggl(- \frac{(n+1)^2p_i}{2n}\biggr)\\
&\overset{(b)}{\le} \exp\biggl(- \biggl(1+\frac{1}{n}\biggr)^2\frac{mq}{2^{t+2}}\biggr)
\le \exp\biggl(- \frac{mq}{2^{t+2}}\biggr)
\end{split}
\end{equation*}
where we used \eqref{eq:pi_up} in $(a)$ and \eqref{eq:pi_low} in $(b)$,
and
\begin{equation*}
\begin{split}
\PP\biggl(\abs{N_i} \le \frac{1}{2^{t+2}} \cdot mq\biggr)
&= \PP\biggl(\abs{N_i} \le \frac{1}{2^{t+2}} \cdot \frac{mq}{(n-1)p_i} \cdot (n-1)p_i\biggr)\\
&\overset{(c)}{\le} \PP\biggl(\abs{N_i} \ge \biggl(1 - \frac{n-2}{2n-2}\biggr) (n-1)p_i\biggr)
\le \exp\biggl(-\frac{(n-2)^2 p_i}{8(n-1)}\biggr)\\
&\overset{(d)}{\le} \exp\biggl(-\frac{(n-2)^2 mq}{2^{t+4}n(n-1)}\biggr)
\le \exp\biggl(-\frac{mq}{2^{t+6}}\biggr)
\end{split}
\end{equation*}
where we used \eqref{eq:pi_low} in $(c)$ and \eqref{eq:pi_up} in $(d)$.

The lemma directly follows from Proposition~\ref{pp:cond} combined with \eqref{eq:Si_s}.
\end{proof}

\section{PROOFS OF THE PERFECT RECOVERY RESULTS}
Since $\hat{\pi}$ is the minimizer of the empirical risk $\ell$,
we have $\ell(\hat{\pi}) \le \ell(\pi^*)$.
Without loss of generality,
we assume that $\pi^*$ is the identity transformation in the proofs
hence $\pi^*(i) = i$ for all $i \in [n]$.
By triangle inequality, we have
\begin{equation}\label{ootw}
\ell(\hat{\pi}) \le \ell(\pi^*) = \sum_{i=1}^n \norm{\z_i - \z_i'}^2
\le \sum_{i=1}^n (\norm{\z_i - \x_i} + \norm{\z_i' - \x_i})^2
\le  2\sum_{i=1}^n \norm{\z_i - \x_i}^2 + 2\sum_{i=1}^n \norm{\z_i' - \x_i}^2.
\end{equation} 
We will show that with high probability both
$\sum_{i=1}^n \norm{\z_i' - \x_i}^2$ and
$\sum_{i=1}^n \norm{\z_i - \x_i}^2$ are equal to zero.
This will directly imply that with high probability
the empirical risk $\ell(\hat{\pi})$ is also zero.
\begin{theorem}[Perfect recovery for vertex features] \label{th:feat}
Let $G = (\Y, E) \sim \NIRIG(\X, \sigma, q)$ be defined as in Section~\ref{sc:model}
and let $\Z = (\z_1, \ldots, \z_d)$ be the output of the graph neural network described
in Section~\ref{sc:main}.
Then for $n \ge (12t)^t m$,
\begin{equation*}
\PP(\exists i\ \textrm{s.t.}\ \x_i \ne \z_i)
\le nd\biggl(3\exp\biggl(-\frac{mq}{5\cdot2^{t+7}s}\biggr)
+ 3\exp\biggl(-\frac{s}{175}\biggr)
+ \exp\biggl(-\frac{mq}{2^{t+7}s^2\sigma^2}\biggr)\biggr). 
\end{equation*}
\end{theorem}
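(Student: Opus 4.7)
The plan is a four-step reduction: a union bound over the $nd$ vertex-coordinate pairs, conditioning on the typical events for the support and neighborhood sizes, splitting the GNN preactivation into a binomial signal term and an independent Gaussian noise term, and applying Chernoff and Gaussian tail bounds separately.

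First, the coordinatewise decomposition of the recovery event gives
\begin{equation*}
\PP(\exists i\ \textrm{s.t.}\ \x_i \ne \z_i) \le \sum_{i=1}^n \sum_{k=1}^d \PP(\x_i(k) \ne \z_i(k)),
\end{equation*}
which produces the $nd$ prefactor, so it suffices to bound the per-coordinate error uniformly in $(i,k)$. I would condition on the event $\cE_i \cap \cF_i$ defined in \eqref{eq:Si_s} and \eqref{eq:Fi}; Proposition~\ref{pp:cond} then yields
\begin{equation*}
\PP(\x_i(k)\ne \z_i(k)) \le \PP(\x_i(k) \ne \z_i(k) \mid \cE_i \cap \cF_i) + \PP(\cE_i^c) + \PP(\cF_i^c),
\end{equation*}
and by \eqref{eq:E_prob} and Lemma~\ref{lm:N_i} the last two pieces contribute the $\exp(-s/175)$ term together with an $\exp(-mq)$ term that is itself dominated by the signal-side bound derived below.

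Inside $\cE_i \cap \cF_i$, the preactivation of $\eta$ splits as $T_1 + T_2$, where
\begin{equation*}
T_1 \coloneqq \frac{s}{t\abs{N_i}} \sum_{j \in N_i} \x_j(k), \qquad T_2 \coloneqq \frac{s}{t\abs{N_i}} \sum_{j \in N_i} \beps_j(k).
\end{equation*}
Conditionally on $\x_i$ and on $\abs{N_i}=N$, the variables $\x_j(k)$ for $j \in N_i$ are i.i.d.\ Bernoulli with some common parameter $q_k$, while $T_2$ is an independent centered Gaussian with variance $s^2\sigma^2/(t^2 N)$. Pinning down $\EE T_1 = s q_k / t$ in the two cases is the crucial calculation. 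If $\x_i(k) = 0$, then $\x_j(k)$ is independent of the edge indicator $\bbmone\{\inner{\x_i}{\x_j}\ge t\}$, so $q_k = s/d$ and $\EE T_1 = s^2/(td) = (m/n)^{1/t}$, well below the threshold $1/2$. If $\x_i(k)=1$, a Bayes identity yields $q_k = (s/d) \PP(\Binom(\abs{S_i}-1, s/d) \ge t-1) / \PP(\Binom(\abs{S_i}, s/d) \ge t)$, and comparing the two binomial masses (dominated by the $k=t$ term in the sparse regime $\abs{S_i} s/d \ll t$) gives $q_k \approx t/\abs{S_i}$, so $\EE T_1 \approx s/\abs{S_i}$, which on $\cE_i$ lies strictly above $1/2$.

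In both cases, a Chernoff bound applied to $\sum_{j \in N_i}\x_j(k)$ with a constant multiplicative buffer $\delta$ (Proposition~\ref{pp:chernoff}) gives $\PP(\abs{T_1 - \EE T_1} \ge \delta) \le \exp(-c\delta^2 N/s)$, which on $\cF_i$ becomes $\exp(-c' mq/s)$, while the standard Gaussian tail on $T_2$ yields $\PP(\abs{T_2}\ge \delta') \le \exp(-c'' Nt^2/(s^2\sigma^2)) \le \exp(-c''' mq/(s^2\sigma^2))$ on $\cF_i$. Splitting the coordinate error event into ``$T_1$ deviates from $\EE T_1$ by more than $\delta$'' $\cup$ ``$\abs{T_2} \ge \delta'$'' and reinstating the $nd$ prefactor produces the three exponentials in the statement. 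The main obstacle is Case~A ($\x_i(k) = 1$): because $\cE_i$ only guarantees $\abs{S_i}\le 2s$, the expected signal $s/\abs{S_i}$ may coincide with the threshold $1/2$, leaving no room for Chernoff fluctuations. Closing the argument therefore requires either a tighter support-size event or a carefully tuned multiplicative-buffer argument; this trade-off, together with the binomial-ratio comparison used to lower-bound $q_k$, is what dictates the somewhat peculiar constants $1/175$ and $1/(5\cdot 2^{t+7})$ appearing in the final bound.
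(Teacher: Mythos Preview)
Your outline matches the paper's proof essentially step for step: union bound over the $nd$ coordinates, conditioning on the typical events $\cE_i$ and $\cF_i$, splitting the preactivation into the binomial signal $T_1$ and the independent Gaussian $T_2$, and treating the cases $\x_i(k)\in\{0,1\}$ separately via Chernoff and Gaussian tails (Lemmas~\ref{lm:eik}--\ref{lm:A_1} in the paper).

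The one place where the paper is cleaner is the case $\x_i(k)=1$. Instead of your Bayes-ratio computation for $q_k$, which as you note only yields the heuristic $q_k\approx t/\abs{S_i}$ via a dominant-term comparison, the paper uses a one-line symmetry argument (Lemma~\ref{lm:xik_1}): conditional on $S_i$ and $j\in N_i$, the coordinates $(\x_j(k))_{k\in S_i}$ are identically distributed, so $\abs{S_i}\cdot q_k=\EE\bigl[\sum_{k\in S_i}\x_j(k)\,\big|\,j\in N_i,S_i\bigr]\ge t$, giving the exact lower bound $q_k\ge t/\abs{S_i}$ with no binomial asymptotics. The buffer problem you correctly flag is then resolved precisely by your first suggested fix: the paper tightens to $\{\abs{S_i}\le \tfrac54 s\}$ (Lemma~\ref{lm:S_lower} with $\delta=1/4$), which is the source of the $\exp(-s/175)$ term and leaves the required gap $\EE T_1\ge s/\abs{S_i}\ge 4/5>1/2$ for the lower-tail Chernoff bound.
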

By Proposition~\ref{pp:unique},
with probability at least $1 - n^2e^{- 2s}$
all vertex features $\X = (\x_1, \ldots, \x_n)$ are unique.
Therefore, $\pi^*$ is the unique minimizer of $\eqref{eq:mp_align}$.
Since by Theorem~\ref{th:perf} we know that $\ell(\hat{\pi}) = 0$ with high probability,
we immediately have the following theorem by Proposition~\ref{pp:cond}.
\begin{theorem}[Prefect recovery for vertex matching] \label{th:match}
Let $\hat{\pi}$ be the solution of \eqref{eq:mp_align}.
Then for $n \ge (12t)^t m$,
\begin{equation*}
\PP(\hat{\pi} \ne \pi^*)
\le n^2 \exp(-2s) + 2nd\biggl(3\exp\biggl(-\frac{mq}{5\cdot2^{t+7}s}\biggr)
+ 3\exp\biggl(-\frac{s}{175}\biggr)
+ \exp\biggl(-\frac{mq}{2^{t+7}s^2\sigma^2}\biggr)\biggr).
\end{equation*}
\end{theorem}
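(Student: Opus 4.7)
The plan is to combine Proposition~\ref{pp:unique} with Theorem~\ref{th:feat} via a two-sided union bound. The logical skeleton is: if (i) the true features $\x_1,\dots,\x_n$ are all distinct, and (ii) the GNN output exactly recovers the features on both graphs, i.e.\ $\z_i = \x_i$ and $\z_i' = \x_{\pi^*(i)}$ for every $i \in [n]$, then the assignment problem \eqref{eq:mp_align} has a unique optimum, namely $\pi^*$. So the only work is to upper-bound the probability that either (i) or (ii) fails.

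First I would invoke Proposition~\ref{pp:unique}. Tracing its proof, the union bound actually yields the stronger intermediate estimate $\PP(\exists i\ne j : \x_i = \x_j) \le \binom{n}{2}(1-2s/d)^d \le n^2 e^{-2s}$, which is the form needed here. Call this event $\cA$, so $\PP(\cA^c) \le n^2 e^{-2s}$. Next, let $\cB$ be the event that $\z_i = \x_i$ for all $i$, and $\cB'$ the analogous event for the second graph. Since $G$ and $G'$ are marginally distributed as $\NIRIG(\X,\sigma,q)$ (the vertex relabelling by $\pi^*$ in $G'$ does not change the distribution of the collection of feature discrepancies), Theorem~\ref{th:feat} applies to each, giving
\begin{equation*}
\PP(\cB^c),\ \PP((\cB')^c) \le nd\biggl(3\exp\biggl(-\frac{mq}{5\cdot 2^{t+7}s}\biggr) + 3\exp\biggl(-\frac{s}{175}\biggr) + \exp\biggl(-\frac{mq}{2^{t+7}s^2\sigma^2}\biggr)\biggr).
\end{equation*}

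On the event $\cA \cap \cB \cap \cB'$, the cost function becomes $\ell(\pi) = \sum_{i=1}^n \norm{\x_i - \x_{\pi^* \circ \pi^{-1}(i)}}^2$ (after reindexing via $\pi^*$ applied to the second graph's vertices), which vanishes only when $\pi = \pi^*$ because all $\x_i$ are distinct by $\cA$. Hence $\{\hat{\pi}\ne\pi^*\} \subseteq \cA^c \cup \cB^c \cup (\cB')^c$. Applying Proposition~\ref{pp:cond} (equivalently, the union bound) gives
\begin{equation*}
\PP(\hat{\pi}\ne\pi^*) \le \PP(\cA^c) + \PP(\cB^c) + \PP((\cB')^c),
\end{equation*}
and substituting the three estimates above yields exactly the stated bound. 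No obstacle arises here: the theorem is essentially a bookkeeping corollary of Proposition~\ref{pp:unique} and Theorem~\ref{th:feat}, and the only subtlety worth flagging is that the $nd$ feature-recovery bound must be paid twice (giving the $2nd$ factor) because perfect alignment requires exact feature recovery on \emph{both} copies.
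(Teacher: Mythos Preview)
Your proposal is correct and follows essentially the same route as the paper: combine the uniqueness of features (Proposition~\ref{pp:unique}, in its intermediate $n^2 e^{-2s}$ form) with two applications of Theorem~\ref{th:feat}, one for each graph, and conclude by a union bound that on the good event the assignment problem has $\pi^*$ as its unique minimizer. If anything, you spell out the logical step ``distinct features $+$ exact recovery on both copies $\Rightarrow$ unique argmin'' more explicitly than the paper does.
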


The rest of this section is devoted to proving Theorem~\ref{th:feat}.
The proof is centered around events defined as follows.
For all $i \in [n]$ and $k \in [d]$, let
\begin{equation} \label{eq:Aik}
\cA_{i,k} \coloneqq \biggl\{\frac{s}{t\abs{N_i}} \sum_{j \in N_i} \x_j(k)
+ \frac{s}{t\abs{N_i}}\sum_{j \in N_i} \beps_j(k)
\le \frac{1}{2} \biggr\}.
\end{equation}

The next four lemmas concern events that will directly lead to bounds on $\cA_{i,k}$.
\begin{lemma} \label{lm:eik}
For all $i \in [n]$,
\begin{equation*}
\PP\biggl(\frac{s}{t\abs{N_i}}\sum_{j \in N_i} \beps_j(k) \ge \frac{1}{4}
\biggm\vert \abs{N_i}\biggr)
\le \exp\biggl(-\frac{t^2\abs{N_i}}{32s^2\sigma^2}\biggr).
\end{equation*}
\end{lemma}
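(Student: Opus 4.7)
The plan is to observe that, conditionally on the neighborhood set $N_i$, the quantity $\sum_{j \in N_i} \beps_j(k)$ is a sum of independent centered Gaussians, and then apply a standard Gaussian tail bound.

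More concretely, I would proceed as follows. First, recall that $N_i$ is determined by the features $\X$ and the edge-subsampling Bernoulli random variables, both of which are independent of the noise vectors $\{\beps_j\}_{j \in [n]}$ (which enter only through $\Y = \X + \beps$ and play no role in defining the edge set $E$). Consequently, for any fixed realization of $N_i$ (in particular its size), the collection $\{\beps_j(k)\}_{j \in N_i}$ is still i.i.d.\ $\cN(0, \sigma^2)$. Therefore, conditionally on $N_i$,
\begin{equation*}
\frac{s}{t\abs{N_i}}\sum_{j \in N_i}\beps_j(k)
\;\sim\; \cN\biggl(0,\; \frac{s^2 \sigma^2}{t^2 \abs{N_i}}\biggr).
\end{equation*}

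Next, I would invoke the standard one-sided Gaussian tail inequality $\PP(Z \ge a) \le \exp(-a^2/(2\tau^2))$ for $Z \sim \cN(0, \tau^2)$ with $a = 1/4$ and $\tau^2 = s^2\sigma^2/(t^2\abs{N_i})$, which gives exactly
\begin{equation*}
\exp\biggl(-\frac{(1/4)^2}{2 s^2\sigma^2/(t^2\abs{N_i})}\biggr)
= \exp\biggl(-\frac{t^2 \abs{N_i}}{32 s^2 \sigma^2}\biggr),
\end{equation*}
matching the bound in the statement. Since the bound on the right depends on $N_i$ only through $|N_i|$, it also holds when conditioning only on $\abs{N_i}$ by the tower property of conditional expectation.

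There is really no significant obstacle here; the one subtlety to handle carefully is the independence step, so that one can legitimately treat $\abs{N_i}$ as a fixed deterministic integer when computing the conditional Gaussian variance. This is immediate from the model definition since the noise $\beps_j$ is sampled independently of $\X$ and of the edge-subsampling coins used to form $E$, so conditioning on $N_i$ does not alter the distribution of $\{\beps_j(k)\}$.
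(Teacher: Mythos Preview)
Your proposal is correct and follows essentially the same approach as the paper: identify the conditional distribution of the sum as centered Gaussian with variance $s^2\sigma^2/(t^2\abs{N_i})$ and apply the standard Gaussian tail bound $\Phi^c(x)\le e^{-x^2/2}$. The paper phrases it via $\Phi^c\bigl(\tfrac{t\sqrt{\abs{N_i}}}{4s\sigma}\bigr)$ after standardizing, but this is the same computation.
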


\begin{proof}
By definition of $\beps_j$, $\beps_j(k)$ are i.i.d.\ $\cN(0, \sigma^2)$ for $j \in N_i$.
Hence,
\begin{equation*}
\PP\biggl(\frac{s}{t\abs{N_i}}\sum_{j \in N_i} \beps_j(k) \ge \frac{1}{4}
\biggm\vert \abs{N_i}\biggr)
= \Phi^c\biggl(\frac{t\sqrt{\abs{N_i}}}{4s\sigma}\biggr).
\end{equation*}
By the upper tail bound of standard normal distribution
(see, e.g., \citep[(2.7)]{wainwright2019high}),
\begin{equation*}
\Phi^c\biggl(\frac{t\sqrt{\abs{N_i}}}{4s\sigma}\biggr)
\le \exp\biggl(-\frac{t^2\abs{N_i}}{32s^2\sigma^2}\biggr).
\end{equation*}
The claim directly follows.
\end{proof}

\begin{lemma} \label{lm:xik_0}
For all $i \in [n]$, when $n \ge 8^t m$,
\begin{equation*}
\PP\biggl(\frac{s}{t\abs{N_i}} \sum_{j \in N_i} \x_j(k) \ge \frac{1}{4}
\biggm\vert \x_i(k) = 0, \abs{N_i}\biggr)
\le \exp\biggl(-\frac{t\abs{N_i}}{24s}\biggr).
\end{equation*}
\end{lemma}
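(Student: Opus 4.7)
The plan is to show that, conditionally on $\x_i(k)=0$ and $\abs{N_i}$, the sum $\sum_{j\in N_i}\x_j(k)$ is a sum of $\abs{N_i}$ i.i.d.\ $\Bern(s/d)$ random variables, and then invoke the upper Chernoff bound (Proposition~\ref{pp:chernoff}). First I would unpack the generative model: the indicator $\bbmone\{j\in N_i\}$ depends on $\x_j$ only through $\inner{\x_i}{\x_j} = \sum_{l:\x_i(l)=1}\x_j(l)$, together with the independent coin of probability $q$ from the subsampling step. When $\x_i(k)=0$, the coordinate $k$ is not in the support of $\x_i$, so $\x_j(k)$ never enters $\inner{\x_i}{\x_j}$. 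Hence, conditionally on $\x_i$ (with $\x_i(k)=0$) and on which vertices belong to $N_i$ (which in particular fixes $\abs{N_i}$), the random variables $\{\x_j(k)\}_{j\in N_i}$ remain mutually independent with law $\Bern(s/d)$.

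Next, setting $W \coloneqq \sum_{j\in N_i}\x_j(k)$ with conditional mean $\mu = \abs{N_i}s/d$, the target event is equivalent to $W \ge (1+\delta)\mu$ where
\begin{equation*}
1+\delta \;=\; \frac{t\abs{N_i}}{4s}\cdot\frac{d}{\abs{N_i}s} \;=\; \frac{td}{4s^2}.
\end{equation*}
Using the parametrization $s^2 = td(m/n)^{1/t}$ from \eqref{eq:s}, this becomes $1+\delta = \tfrac{1}{4}(n/m)^{1/t}$. The hypothesis $n \ge 8^t m$ gives $(n/m)^{1/t} \ge 8$, so $1+\delta \ge 2$, i.e.\ $\delta \ge 1$. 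In particular $\delta^2/(2+\delta) \ge \delta/3$, and $\delta \ge \tfrac{1+\delta}{2} = \tfrac{td}{8s^2}$, so
\begin{equation*}
\frac{\delta^2\mu}{2+\delta} \;\ge\; \frac{\delta\mu}{3} \;\ge\; \frac{td}{8s^2}\cdot\frac{\abs{N_i}s}{3d} \;=\; \frac{t\abs{N_i}}{24s}.
\end{equation*}
Plugging into the Chernoff bound yields the claimed inequality.

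I do not anticipate a significant obstacle: the only delicate point is the conditional independence step, where one must be careful that conditioning on the cardinality $\abs{N_i}$ (rather than the full neighborhood pattern) does not bias $\x_j(k)$, but this is immediate because $\abs{N_i}$ is measurable with respect to information already accounted for. The rest is a direct Chernoff estimate combined with the algebraic identity \eqref{eq:s} relating $s$, $d$, $m$, $n$, and $t$.
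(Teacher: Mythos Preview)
Your proposal is correct and follows essentially the same approach as the paper: both argue that conditionally on $\x_i(k)=0$ the coordinates $\{\x_j(k)\}_{j\in N_i}$ are i.i.d.\ $\Bern(s/d)$, then apply the upper Chernoff bound with $1+\delta = td/(4s^2)$ and use $n\ge 8^t m$ (equivalently $td/(4s^2)\ge 2$) to simplify the exponent to $t\abs{N_i}/(24s)$. Your algebraic simplification via $\delta^2/(2+\delta)\ge \delta/3$ and $\delta\ge (1+\delta)/2$ is a slightly cleaner packaging of the same computation the paper does by bounding $(\frac{td}{4s^2}-1)^2(\frac{td}{4s^2}+1)^{-1}$ directly.
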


\begin{proof}
First we remark that knowing $\x_i(k)=0$,
the coordinates $(\x_j(k))_{j \in N_i}$ are i.i.d.\ $\Bern(s/d)$ random variables.
Hence $\EE[\sum_{j \in N_i} \x_j(k)\mid \abs{N_i}] = s\abs{N_i}/d$.
Therefore by taking $\delta = \frac{td}{4s^2}$ in Proposition~\ref{pp:chernoff}, we have
\begin{equation*}
\PP\biggl(\frac{s}{t\abs{N_i}} \sum_{j \in N_i} \x_j(k) \ge \frac{1}{4}
\biggm\vert \x_i(k) = 0, \abs{N_i}\biggr)
\le \exp\biggl(-\biggl(\frac{td}{4s^2}-1\biggr)^2\cdot
\biggl(\frac{td}{4s^2}+1\biggr)^{-1}\cdot\frac{s\abs{N_i}}{d}\biggr).
\end{equation*}
For $n \ge 8^t m$,  we have $\frac{td}{4s^2} \ge 2$.
Hence,
\begin{equation*}
\biggl(\frac{td}{4s^2}-1\biggr)^2\cdot
\biggl(\frac{td}{4s^2}+1\biggr)^{-1}\cdot\frac{s\abs{N_i}}{d}
\ge \biggl(\frac{td}{8s^2}\biggr)^2\cdot
\biggl(\frac{3td}{8s^2}\biggr)^{-1}\cdot\frac{s\abs{N_i}}{d}
=  \frac{t\abs{N_i}}{24s}.
\end{equation*}
The lemma directly follows.
\end{proof}

\begin{lemma} \label{lm:xik_1}
For all $i \in [n]$,
\begin{equation*}
\PP\biggl(\frac{s}{t\abs{N_i}} \sum_{j \in N_i} \biggl(\frac{t}{s} - \x_j(k)\biggr)
\ge \frac{1}{4} \biggm\vert \x_i(k) = 1, \abs{N_i}\biggr)
\le \exp\biggl(-\frac{t\abs{N_i}}{640s}\biggr) + \exp\biggl(-\frac{s}{175}\biggr).
\end{equation*}
\end{lemma}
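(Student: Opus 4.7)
The goal is to control the lower-tail event $\{\sum_{j\in N_i}\x_j(k)\le 3t|N_i|/(4s)\}$, which is equivalent to the event in the statement. The plan has three ingredients: a conditional binomial representation for $\sum_{j\in N_i}\x_j(k)$, a symmetry-based lower bound on its success probability, and a multiplicative Chernoff argument on a tightened event for $|S_i|$.

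\textbf{Step 1 (conditional binomial representation).} Given $\x_i$, the triples $(\bbmone\{j\in N_i,\x_j(k)=1\},\,\bbmone\{j\in N_i,\x_j(k)=0\},\,\bbmone\{j\notin N_i\})$ are i.i.d.\ across $j\ne i$, since $\x_j$ and the edge-retention are independent across $j$. The joint counts across $j$ are therefore multinomial, and the usual conditioning identity for multinomials gives, for any conditioning value $n_i$ of $|N_i|$,
\begin{equation*}
\sum_{j\in N_i}\x_j(k)\,\Big|\,\x_i,\,|N_i|=n_i\ \sim\ \Binom(n_i,\rho),\qquad \rho\coloneqq \PP(\x_j(k)=1\mid j\in N_i,\x_i).
\end{equation*}

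\textbf{Step 2 (lower bound on $\rho$ by exchangeability).} Write $S=\inner{\x_i}{\x_j}=\sum_{l\in S_i}\x_j(l)$ and note that, since $(\x_j(l))_{l\in S_i}$ are i.i.d.\ Bernoulli, their distribution conditional on $\x_i$ and $S=s_0$ is uniform over binary strings of Hamming weight $s_0$ on the support $S_i$. For any $k\in S_i$ this gives $\PP(\x_j(k)=1\mid S=s_0,\x_i)=s_0/|S_i|$, and integrating against the conditional law of $S$ given $S\ge t$ yields $\rho=\EE[S\mid S\ge t,\x_i]/|S_i|\ge t/|S_i|$.

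\textbf{Step 3 (refined event and Chernoff).} Restrict to the event $\{|S_i|\le \tfrac{5}{4}s\}$, on which $\rho\ge 4t/(5s)$ and hence $3t|N_i|/(4s)=(15/16)\cdot |N_i|\rho$. The multiplicative Chernoff bound (Proposition~\ref{pp:chernoff}) then gives
\begin{equation*}
\PP\biggl(\Binom(|N_i|,\rho)\le \frac{3t|N_i|}{4s}\,\bigg|\,\x_i,|N_i|\biggr)
\le \exp\biggl(-\biggl(\frac{1}{16}\biggr)^{\!2}\frac{|N_i|\rho}{2}\biggr)\le \exp\biggl(-\frac{t|N_i|}{640\,s}\biggr).
\end{equation*}
The failure probability of the refined event is $\PP(|S_i|>\tfrac{5}{4}s)\le \exp(-s/36)\le \exp(-s/175)$ by Lemma~\ref{lm:S_lower}, and Proposition~\ref{pp:cond} combines the two contributions into the stated bound.

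\textbf{Main obstacle.} The lower bound $\rho\ge t/|S_i|$ is essentially sharp, because in the regime $|S_i|s/d\ll t$ the conditional law of $S$ given $S\ge t$ concentrates at $t$, so $\EE[S\mid S\ge t]\approx t$. Consequently one cannot afford the loose envelope $\cE_i=\{|S_i|\le 2s\}$, which yields only $\rho\ge t/(2s)$, falling short of the Chernoff threshold $3t/(4s)$. The crux of the argument is therefore to work on a tighter upper-envelope event $\{|S_i|\le(1+c)s\}$ with $c<1/3$, creating the small but nonzero multiplicative gap that drives Chernoff while keeping the event-failure penalty of order $\exp(-\Theta(s))$.
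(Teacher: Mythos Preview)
Your proposal is correct and follows essentially the same approach as the paper: both establish the conditional binomial representation for $\sum_{j\in N_i}\x_j(k)$, derive $\rho\ge t/|S_i|$ via exchangeability, restrict to $\{|S_i|\le \tfrac{5}{4}s\}$ to create the gap $\delta\ge 1/16$ needed for Chernoff, and pay the $\exp(-s/175)$ penalty (which the paper obtains from the same $\exp(-s/36)$ bound you note). The only slip is writing $3t|N_i|/(4s)=(15/16)|N_i|\rho$ where you mean $\le$, but the subsequent Chernoff step uses the inequality correctly.
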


\begin{proof}
We consider a vertex $j \in N_i$.
Since $j \in N_i$, $\inner{\x_i}{\x_j} = \sum_{k \in S_i} \x_j(k) \ge t$.
Given $S_i$,
$\big(\x_j(k)\big)_{ k \in S_i}$ are identical (but not independent) Bernoulli random variables.
Hence $\PP(\x_j(k) = 1 \mid j \in N_i, k \in S_i) = \beta$ are the same for all $k \in S_i$.
Since
\begin{equation*}
\EE\biggl[\sum_{k \in S_i} \x_j(k) \biggm\vert {S_i},j \in N_i\biggr]
= \sum_{k \in S_i}
\PP(\x_j(k) = 1 \mid j \in N_i, k \in S_i) = \beta\abs{S_i} \ge t,
\end{equation*}
we have $\beta \ge t/\abs{S_i}$.
By Lemma~\ref{lm:S_lower} with $\delta = 1/4$, we have
\begin{equation}\label{TPD}
\PP\biggl(\abs{S_i} \le \frac{5}{4}s\biggr) \ge 1 - \exp\biggl(-\frac{s}{175}\biggr).
\end{equation}

Now conditioned on $\x_i(k) = 1$,
$\big(\x_j(k)\big)_{ j \in N_i}$ are independent Bernoulli random variables with parameter $\beta$.
By taking $\delta = 1 - \frac{3t}{4\beta}$ in Proposition~\ref{pp:chernoff},
we obtain that
\begin{equation*}
\begin{split}
&\PP\biggl(\frac{s}{t\abs{N_i}} \sum_{j \in N_i} \biggl(\frac{t}{s} - \x_j(k)\biggr)
\ge \frac{1}{4} \biggm\vert \x_i(k) = 1, \abs{N_i}, \abs{S_i}\biggr)\\
&\qquad= \PP\biggl(\sum_{j \in N_i} \x_j(k) \ge \frac{3t\abs{N_i}}{4s}
= \frac{3t}{4s\beta}\abs{N_i}\beta \biggm\vert \x_i(k) = 1, \abs{N_i}, \abs{S_i}\biggr)\\
&\qquad\le \exp\biggl(-\frac{1}{2}\biggl(1 - \frac{3t}{4s\beta}\biggr)^2\abs{N_i}\beta\biggr)
\le \exp\biggl(-\frac{1}{2}\biggl(1 - \frac{3\abs{S_i}}{4s}\biggr)^2\frac{t\abs{N_i}}{\abs{S_i}}\biggr).
\end{split}
\end{equation*}
Therefore,
\begin{equation*}
\PP\biggl(\frac{s}{t\abs{N_i}} \sum_{j \in N_i} \biggl(\frac{t}{s} - \x_j(k)\biggr)
\ge \frac{1}{4} \biggm\vert \x_i(k) = 1, \abs{N_i},
\abs{S_i} \ge \frac{5}{4}s\biggr)
\le \exp\biggl(-\frac{t\abs{N_i}}{640s}\biggr).
\end{equation*}
The claim directly follows from applying Proposition~\ref{pp:cond}.
\end{proof}

The following two lemmas are the major building blocks of our proof.
Intuitively, the probability that the neuron in GNN is activated when the true signal is $1$
or the neuron is deactivated when the true signal is $0$ is very small if the parameters
satisfy certain conditions.
\begin{lemma} \label{lm:A_0}
For all $k \in [d]$, we have when $n \ge 8^t m$,
\begin{equation*}
\PP(\cA_{i,k}^c \mid \x_i(k)=0, \abs{N_i}) \le \exp\biggl(-\frac{t\abs{N_i}}{24s}\biggr)
+\exp\biggl(-\frac{t\abs{N_i}}{32s^2\sigma^2}\biggr).
\end{equation*}
\end{lemma}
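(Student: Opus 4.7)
The plan is to bound $\PP(\cA_{i,k}^c \mid \x_i(k)=0, \abs{N_i})$ by splitting the event $\cA_{i,k}^c$ into two pieces corresponding to the signal contribution (sum of $\x_j(k)$) and the noise contribution (sum of $\beps_j(k)$), each of which is already controlled by the earlier lemmas. Concretely, writing $A \coloneqq \frac{s}{t\abs{N_i}}\sum_{j\in N_i}\x_j(k)$ and $B \coloneqq \frac{s}{t\abs{N_i}}\sum_{j\in N_i}\beps_j(k)$, the event $\cA_{i,k}^c = \{A+B > \tfrac{1}{2}\}$ is contained in $\{A > \tfrac{1}{4}\} \cup \{B > \tfrac{1}{4}\}$, so a union bound gives
\begin{equation*}
\PP(\cA_{i,k}^c \mid \x_i(k)=0, \abs{N_i})
\le \PP\bigl(A > \tfrac{1}{4} \bigm\vert \x_i(k)=0, \abs{N_i}\bigr)
+ \PP\bigl(B > \tfrac{1}{4} \bigm\vert \x_i(k)=0, \abs{N_i}\bigr).
\end{equation*}

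Next I would apply Lemma~\ref{lm:xik_0} to the first term; this lemma is tailored to the conditioning $\x_i(k)=0$ and uses the assumption $n \ge 8^t m$ (equivalently $d \ge 4s^2/t$) to ensure the Chernoff deviation parameter $\delta = \tfrac{td}{4s^2}-1 \ge 1$, yielding the bound $\exp(-t\abs{N_i}/(24s))$. For the second term, I would observe that $\beps_j$ is generated independently of the feature matrix $\X$ and of the edge set $E$, so conditioning on $\x_i(k)=0$ does not affect its distribution; thus Lemma~\ref{lm:eik} applies directly and gives $\exp(-t^2\abs{N_i}/(32s^2\sigma^2))$. Since we are taking $t \ge 1$, this is at most $\exp(-t\abs{N_i}/(32s^2\sigma^2))$, matching the claim.

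This is essentially a bookkeeping argument once the right decomposition is chosen, and no obstacle is expected. The only subtle point worth stating explicitly is the independence of the Gaussian noise from the graph-generation process, which is what allows Lemma~\ref{lm:eik} to be invoked without altering its tail bound under the extra conditioning on $\x_i(k)=0$. Summing the two contributions yields precisely the bound stated in the lemma.
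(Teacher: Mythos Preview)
Your proposal is correct and follows essentially the same approach as the paper: define the two events $\{A \ge \tfrac14\}$ and $\{B \ge \tfrac14\}$ (the paper calls their complements $B_0$ and $B_1$), observe $\cA_{i,k}^c \subset \{A \ge \tfrac14\} \cup \{B \ge \tfrac14\}$, and apply Lemma~\ref{lm:xik_0} and Lemma~\ref{lm:eik} respectively, using independence of the Gaussian noise to drop the conditioning on $\x_i(k)=0$ in the second term. Your extra remark that $t \ge 1$ turns the $t^2$ from Lemma~\ref{lm:eik} into the $t$ appearing in the lemma statement is a nice clarification the paper leaves implicit; the only minor slip is the parenthetical ``equivalently $d \ge 4s^2/t$''---the condition $n \ge 8^t m$ actually gives $d \ge 8s^2/t$ (i.e.\ $\tfrac{td}{4s^2} \ge 2$), which is what is needed.
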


\begin{proof}
We further define the following two events:
\begin{equation}\label{def_b0_b1}
B_0 \coloneqq \biggl\{\frac{s}{t\abs{N_i}}
\sum_{j \in N_i} \x_j(k) \le \frac{1}{4}\biggr\}
\quad\textrm{and}\quad
B_1 \coloneqq \biggl\{\frac{s}{t\abs{N_i}}\sum_{j \in N_i} \beps_j(k) 
\le \frac{1}{4}\biggr\}.
\end{equation}
Then we have $\cA_{i,k} \supset B_0 \cap B_1$.
By a union bound, we obtain 
\begin{equation*}
\PP(\cA_{i,k}^c \mid \x_i(k) = 0, \abs{N_i})
\le \PP(B_0^c \mid \x_i(k) = 0, \abs{N_i})
+ \PP(B_1^c \mid \x_i(k)=0, \abs{N_i}).
\end{equation*}
Using Lemma~\ref{lm:xik_0}, we have
\begin{equation*}
\PP(B_0^c \mid \x_i(k) = 0, \abs{N_i})
\le \exp\biggl(-\frac{t\abs{N_i}}{24s}\biggr).
\end{equation*}
And Lemma~\ref{lm:eik} gives
\begin{equation*}
\PP(B_1^c \mid \x_i(k) = 0, \abs{N_i}) = \PP(B_1^c \mid \abs{N_i})
\le \exp\biggl(-\frac{t^2\abs{N_i}}{32s^2\sigma^2}\biggr).
\end{equation*}
The lemma is proved by combining the above displays.
\end{proof}

\begin{lemma} \label{lm:A_1}
For all $k \in [d]$,
\begin{equation*}
\PP(\cA_{i,k} \mid \x_i(k) = 1, \abs{N_i})
\le \exp\biggl(-\frac{t\abs{N_i}}{640s}\biggr) + \exp\biggl(-\frac{s}{175}\biggr)
+ \exp\biggl(-\frac{t^2\abs{N_i}}{32s^2\sigma^2}\biggr).
\end{equation*}
\end{lemma}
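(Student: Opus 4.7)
The plan is to mirror the structure of Lemma~\ref{lm:A_0}, but now the ``bad'' event is that the neuron fails to fire when the true signal $\x_i(k)=1$. Since the expected value of $\x_j(k)$ conditional on $j \in N_i$ and $\x_i(k)=1$ is roughly $t/s$, the average $\frac{s}{t\abs{N_i}}\sum_{j\in N_i}\x_j(k)$ concentrates around $1$, and the Gaussian term is centered at $0$, so their sum lies below $1/2$ only in an exponentially rare event. The strategy is to decompose this rare event into (a) the signal average being much smaller than its mean and (b) the noise average being much more negative than $-1/4$, then union-bound and invoke the concentration results already proved.

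First I would introduce the two sub-events
\begin{equation*}
B_0 \coloneqq \biggl\{\frac{s}{t\abs{N_i}} \sum_{j \in N_i} \Bigl(\frac{t}{s} - \x_j(k)\Bigr) \le \frac{1}{4}\biggr\}
\quad\textrm{and}\quad
B_1 \coloneqq \biggl\{\frac{s}{t\abs{N_i}} \sum_{j \in N_i} \bigl(-\beps_j(k)\bigr) \le \frac{1}{4}\biggr\},
\end{equation*}
chosen so that on $B_0 \cap B_1$ we have $\frac{s}{t\abs{N_i}}\sum \x_j(k) \ge 3/4$ and $\frac{s}{t\abs{N_i}}\sum \beps_j(k) \ge -1/4$, which together force $\cA_{i,k}^c$. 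Hence $\cA_{i,k} \subset B_0^c \cup B_1^c$, so by a union bound
\begin{equation*}
\PP(\cA_{i,k} \mid \x_i(k)=1, \abs{N_i}) \le \PP(B_0^c \mid \x_i(k)=1, \abs{N_i}) + \PP(B_1^c \mid \x_i(k)=1, \abs{N_i}).
\end{equation*}

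The first term is exactly the bound furnished by Lemma~\ref{lm:xik_1}, giving $\exp(-t\abs{N_i}/(640s)) + \exp(-s/175)$. For the second term, since $\beps_j(k)$ is symmetric Gaussian and independent of $\x_i(k)$, the sum $-\sum_{j \in N_i}\beps_j(k)$ has the same law as $\sum_{j \in N_i}\beps_j(k)$ conditional on $\abs{N_i}$; thus Lemma~\ref{lm:eik} applies verbatim and yields $\exp(-t^2\abs{N_i}/(32s^2\sigma^2))$. Summing the two contributions gives the claimed inequality.

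The only non-routine piece is justifying that Lemma~\ref{lm:xik_1} really controls $B_0^c$ under the conditioning $\x_i(k)=1$, which it does by construction: the lemma was proved precisely for the deviation of the Bernoulli sum $\sum \x_j(k)$ below its conditional mean (exploiting that $\beta \ge t/\abs{S_i}$ together with an upper bound on $\abs{S_i}$). Everything else reduces to a union bound and the symmetry of the Gaussian noise, so I do not anticipate any genuine obstacle.
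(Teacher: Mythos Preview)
Your proposal is correct and follows essentially the same approach as the paper: the same two sub-events $B_0$ and $B_1$ are introduced, the same inclusion $\cA_{i,k}^c \supset B_0 \cap B_1$ is used for the union bound, and the two terms are handled by Lemma~\ref{lm:xik_1} and the symmetric application of Lemma~\ref{lm:eik}, exactly as you describe.
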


\begin{proof}
We similarly define the two events:
\begin{equation*}
B_0 \coloneqq \biggl\{\frac{s}{t\abs{N_i}}
\sum_{j \in N_i} \biggl(\frac{t}{s} - \x_j(k)\biggr) \le \frac{1}{4}\biggr\}
\quad\textrm{and}\quad
B_1 \coloneqq \biggl\{-\frac{s}{t\abs{N_i}}\sum_{j \in N_i} \beps_j(k) 
\le \frac{1}{4}\biggr\}.
\end{equation*}
Rearranging the terms, we have $\cA_{i,k}^c \supset B_0 \cap B_1$.
Hence by a union bound
\begin{equation*}
\PP(\cA_{i,k} \mid \x_i(k) = 1, \abs{N_i})
\le \PP(B_0^c \mid \x_i(k) = 1, \abs{N_i}) + \PP(B_1^c \mid \x_i(k) = 1, \abs{N_i}).
\end{equation*}
Using Lemma~\ref{lm:xik_1}, we arrive at
\begin{equation*}
\PP(B_0^c \mid \x_i(k) = 1, \abs{N_i})
\le \exp\biggl(-\frac{t\abs{N_i}}{640s}\biggr) + \exp\biggl(-\frac{s}{175}\biggr).
\end{equation*}
And applying Lemma~\ref{lm:eik} to $-\beps_i$'s we have
\begin{equation*}
\PP(B_1^c \mid \x_i(k) = 1, \abs{N_i})
\le \exp\biggl(-\frac{t^2\abs{N_i}}{32s^2\sigma^2}\biggr).
\end{equation*}
We obtain the claim by putting together the above displays.
\end{proof}

The next lemma combines the previous two and proves an upper bound
for making a mistake in the vertex features.
\begin{lemma} \label{lm:zi_xi}
For all $i \in [n]$ we have
\begin{equation*}
\PP(\z_i \ne \x_i)
\le 3d\exp\biggl(-\frac{mq}{5\cdot2^{t+7}s}\biggr)
+ 3d\exp\biggl(-\frac{s}{175}\biggr)
+ d\exp\biggl(-\frac{mq}{2^{t+7}s^2\sigma^2}\biggr).
\end{equation*}
\end{lemma}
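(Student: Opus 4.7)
The strategy is to reduce the full vector mismatch event to a per-coordinate analysis and then invoke the previous lemmas. A union bound over $k \in [d]$ gives
\begin{equation*}
\PP(\z_i \ne \x_i) \le \sum_{k=1}^d \PP(\z_i(k) \ne \x_i(k)),
\end{equation*}
so it suffices to upper bound each $\PP(\z_i(k) \ne \x_i(k))$ by $1/d$ times the right-hand side of the lemma.

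The next step is to unpack what $\z_i(k) \ne \x_i(k)$ means. By the definition of $\eta$ and of $\cA_{i,k}$ in \eqref{eq:Aik}, the event $\{\z_i(k) = 0\}$ coincides with $\cA_{i,k}$. A coordinatewise mismatch therefore amounts to either $\{\x_i(k) = 0,\, \cA_{i,k}^c\}$ or $\{\x_i(k) = 1,\, \cA_{i,k}\}$, and the law of total probability yields
\begin{equation*}
\PP(\z_i(k) \ne \x_i(k)) \le \PP(\cA_{i,k}^c \mid \x_i(k) = 0) + \PP(\cA_{i,k} \mid \x_i(k) = 1).
\end{equation*}
These are precisely the two probabilities controlled in Lemmas~\ref{lm:A_0} and \ref{lm:A_1}, but those bounds remain conditional on $\abs{N_i}$.

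To de-condition on $\abs{N_i}$ I would invoke Proposition~\ref{pp:cond} with the event $\cF_i$ from Lemma~\ref{lm:N_i}, on which $\abs{N_i} \ge mq/2^{t+2}$. Substituting this lower bound into the exponential rates produced by Lemmas~\ref{lm:A_0} and \ref{lm:A_1} converts every occurrence of $\abs{N_i}$ into $mq/2^{t+2}$, so that terms of the form $\exp(-t\abs{N_i}/(24s))$ and $\exp(-t\abs{N_i}/(640s))$ become rates of the form $\exp(-c_t\,mq/s)$, and the Gaussian-noise contribution $\exp(-t^2\abs{N_i}/(32 s^2 \sigma^2))$ becomes a rate of the form $\exp(-c_t'\,mq/(s^2 \sigma^2))$. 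The standalone $\exp(-s/175)$ term from Lemma~\ref{lm:A_1} passes through unchanged, and the extra penalty $\PP(\cF_i^c) \le 2\exp(-s/8) + 2\exp(-mq/2^{t+6})$ is dominated, up to constants, by the exponentials we already have. Summing the two conditional bounds, collecting like terms, and summing over $k$ gives the stated inequality.

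The main (and only) obstacle is careful constant-chasing: the numerical factors $24$, $32$, $175$, $640$ from the earlier lemmas have to be combined with the $2^{t+2}$ factor from Lemma~\ref{lm:N_i} and then loosened, using $t \ge 1$, to the clean constants $5\cdot 2^{t+7}$ and $2^{t+7}$ that appear in the claim; one must also verify that the $\PP(\cF_i^c)$ contribution is absorbed into the three surviving exponential rates rather than producing a genuinely new term. No new probabilistic ingredient is required beyond what has already been established; the remaining work is bookkeeping before multiplying through by $d$.
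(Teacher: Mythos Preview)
Your proposal is correct and mirrors the paper's proof almost exactly: union bound over coordinates, split according to $\x_i(k)\in\{0,1\}$ to reduce to Lemmas~\ref{lm:A_0} and~\ref{lm:A_1}, then apply Proposition~\ref{pp:cond} with the neighborhood event $\cF_i$ from Lemma~\ref{lm:N_i} to replace $\abs{N_i}$ by $mq/2^{t+2}$ and absorb $\PP(\cF_i^c)$ into the existing exponentials. The only cosmetic difference is that the paper bounds $\PP(\z_i(k)\ne\x_i(k))$ by the \emph{maximum} of the two conditional probabilities rather than their sum; either works, and you are right that the remaining work is purely constant bookkeeping.
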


\begin{proof}
We first note that by a union bound
\begin{equation*}
\PP(\z_i \ne \x_i) \le \sum_{i=1}^d \PP(\z_i(k) \ne \x_i(k)).
\end{equation*}
By the law of total probability, we have
\begin{equation*}
\begin{split}
\PP(\z_i(k) \ne \x_i(k))
&= \PP(\z_i(k) = 1 \mid \x_i(k) = 0) \times \PP(\x_i(k) = 0)\\
&\phantom{{}={}}+ \PP(\z_i(k) = 0 \mid \x_i(k) = 1) \times \PP(\x_i(k) = 1))\\
&\le \max\{\PP(\z_i(k) = 1 \mid \x_i(k) = 0), \PP(\z_i(k) = 0 \mid \x_i(k) = 1)\}
\end{split}
\end{equation*}

By Proposition~\ref{pp:cond}, using Lemma~\ref{lm:A_0} and~\ref{lm:N_i},
we obtain that
\begin{equation*}
\begin{split}
\PP(\z_i(k) = 1 \mid \x_i(k) = 0)
&\le \PP\biggl(\z_i(k) = 1 \biggm\vert \x_i(k) = 0,
\abs{N_i} \ge \frac{mq}{2^{t+2}} \biggr) 
+ \PP\biggl(\abs{N_i} \ge \frac{mq}{2^{t+2}}\biggr)\\
&\le 3\exp\biggl(-\frac{mq}{2^{t+7}s}\biggr)
+ \exp\biggl(-\frac{mq}{2^{t+7}s^2\sigma^2}\biggr)
+ 2\exp\biggl(-\frac{s}{8}\biggr).
\end{split}
\end{equation*}
Similarly with Lemma~\ref{lm:A_1},
\begin{equation*}
\begin{split}
\PP(\z_i(k) = 0 \mid \x_i(k) = 1)
&\le \PP\biggl(\z_i(k) = 0 \biggm\vert \x_i(k) = 1,
\abs{N_i} \ge \frac{mq}{2^{t+2}} \biggr) 
+ \PP\biggl(\abs{N_i} \ge \frac{mq}{2^{t+2}}\biggr)\\
&\le 3\exp\biggl(-\frac{tmq}{5\cdot2^{t+7}s}\biggr)
+ 3\exp\biggl(-\frac{s}{175}\biggr)
+ \exp\biggl(-\frac{t^2mq}{2^{t+7}s^2\sigma^2}\biggr).
\end{split}
\end{equation*}
Therefore, by combining the above two displays, we obtain that
\begin{equation*}
\PP(\z_i(k) \ne \x_i(k))
\le 3\exp\biggl(-\frac{mq}{5\cdot2^{t+7}s}\biggr)
+ 3\exp\biggl(-\frac{s}{175}\biggr)
+ \exp\biggl(-\frac{mq}{2^{t+7}s^2\sigma^2}\biggr).
\end{equation*}
The claim directly follows.
\end{proof}

With Lemma~\ref{lm:zi_xi} in place,
Theorem~\ref{th:perf} directly follows from a union bound:
\begin{equation*}
\begin{split}
\PP(\exists i\ \textrm{s.t.}\ \x_i \ne \z_i)
&\le \sum_{i=1}^n \PP(\x_i \ne \z_i)\\
&\le nd\biggl(3\exp\biggl(-\frac{mq}{5\cdot2^{t+7}s}\biggr)
+ 3\exp\biggl(-\frac{s}{175}\biggr)
+ \exp\biggl(-\frac{mq}{2^{t+7}s^2\sigma^2}\biggr)\biggr).
\end{split}
\end{equation*}
This concludes our proof for the perfect recovery.

\section{PROOFS OF THE IMPOSSIBILITY RESULTS FOR GRAPH NEURAL NETWORKS}
\begin{lemma} \label{lm:zik_low}
For all vertices $i \in [n]$,
\begin{equation*}
\PP(\z_i(k) = 1 \mid \x_i(k) = 0, \X, \abs{N_i})
\ge \Phi^c\biggl(\frac{t \sqrt{\abs{N_i}}}{2s \sigma}\biggr)
\end{equation*}
where $\Phi^c(x) \coloneqq 1 - \Phi(x)$ is the upper tail of standard normal distribution.
\end{lemma}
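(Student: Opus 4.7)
The plan is to unfold the thresholding $\eta$ in the definition of $\z_i$ and rewrite $\{\z_i(k)=1\}$ as
\begin{equation*}
A + B \ge \tfrac{1}{2}, \qquad A \coloneqq \frac{s}{t\abs{N_i}}\sum_{j \in N_i}\x_j(k),\ \ B \coloneqq \frac{s}{t\abs{N_i}}\sum_{j \in N_i}\beps_j(k).
\end{equation*}
Because each entry $\x_j(k) \in \{0,1\}$, the signal term $A$ is almost surely nonnegative. Moreover, since $(\beps_j)_{j\in[n]}$ is independent of the graph data $(\X,E)$, conditional on $\X$ and on the neighbourhood $N_i$ itself, the noise term $B$ is distributed as $\cN(0, s^2\sigma^2/(t^2\abs{N_i}))$ and is independent of $A$.

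I would first carry out the finer conditioning on the full set $N_i$. Given $(\X, N_i, \x_i(k)=0)$, $A$ is deterministic and nonnegative, so a standard Gaussian tail computation gives
\begin{equation*}
\PP(\z_i(k)=1 \mid \X, N_i, \x_i(k) = 0) = \Phi^c\!\biggl(\frac{(1/2 - A)\, t\sqrt{\abs{N_i}}}{s\sigma}\biggr) \ge \Phi^c\!\biggl(\frac{t\sqrt{\abs{N_i}}}{2s\sigma}\biggr),
\end{equation*}
where the inequality uses $A\ge 0$ together with the monotonicity of $\Phi^c$. Since the resulting lower bound depends on $N_i$ only through $\abs{N_i}$, averaging the left-hand side against the conditional law of $N_i$ given $(\X,\abs{N_i}, \x_i(k)=0)$ preserves the inequality and yields the stated bound.

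The only slightly subtle point is verifying that conditioning on $\X$ and $\abs{N_i}$ (rather than on $N_i$ itself) does not disturb the Gaussian structure of $B$; this holds because the noise is independent of $(\X,E)$ and because the law of $B$ given $N_i$ depends on $N_i$ only through its cardinality. Beyond that, the argument is just a one-line monotonicity observation combined with the standard Gaussian tail formula, and the hypothesis $\x_i(k)=0$ enters only to motivate the problem — the nonnegativity of $A$ that drives the inequality is automatic.
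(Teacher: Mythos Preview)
Your proof is correct and follows essentially the same route as the paper: both exploit the almost-sure nonnegativity of the signal term $A$ so that $\{B\ge 1/2\}\subseteq\{A+B\ge 1/2\}$, and then identify the conditional law of $B$ as Gaussian with variance $s^2\sigma^2/(t^2\abs{N_i})$. The only cosmetic difference is that you first condition on the full set $N_i$ and then average down to $\abs{N_i}$, whereas the paper works directly with the inclusion of events; both are valid and amount to the same one-line monotonicity argument.
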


\begin{proof}
Define $\cA_{i,k}$ as in \eqref{eq:Aik} by \begin{equation*} 
\cA_{i,k} \coloneqq \biggl\{\frac{s}{t\abs{N_i}} \sum_{j \in N_i} \x_j(k)
+ \frac{s}{t\abs{N_i}}\sum_{j \in N_i} \beps_j(k)
\le \frac{1}{2} \biggr\}.
\end{equation*}
We remark that
\begin{equation*}
\begin{split}
&\PP(\z_i(k) = 1 \mid \x_i(k)=0, \X, \abs{N_i}) = \PP(\cA_{i,k}^c \mid \x_i(k)=0, \X, \abs{N_i})\\
&\qquad= \PP\biggl(\frac{s}{t\abs{N_i}} \sum_{j \in N_i} \x_j(k)
+ \frac{s}{t\abs{N_i}}\sum_{j \in N_i} \beps_j(k) \ge \frac{1}{2}
\biggm\vert \x_i(k) = 0, \X, \abs{N_i}\biggr)\\
&\qquad\overset{(a)}{\ge} \PP\biggl(\frac{s}{t\abs{N_i}}\sum_{j \in N_i} \beps_j(k)
\ge \frac{1}{2} \biggm\vert \x_i(k) = 0, \X, \abs{N_i} \biggr)\\
&\qquad\overset{(b)}{=} \PP\biggl(\frac{s}{t\abs{N_i}}\sum_{j \in N_i} \beps_j(k)
\ge \frac{1}{2} \biggm\vert \abs{N_i} \biggr)
\overset{}{=} \Phi^c\biggl(\frac{t \sqrt{\abs{N_i}}}{2s \sigma}\biggr),
\end{split}
\end{equation*}
where $(a)$ is due to the fact that
$\frac{s}{t\abs{N_i}} \sum_{j \in N_i} \x_j(k)\ge 0$ is nonnegative
and $(b)$ is by independence of the noise.
\end{proof}

\begin{lemma} \label{lm:del_pair}
For a pair of vertices $i$ and $j$,
\begin{equation*}
\begin{split}
&\PP(\bdel_i(k) - \bdel_j(k) = 1, \bdel_i'(k) - \bdel_j'(k) = 1
\mid \cM_{i,j}^k, \abs{N_i}, \abs{N_j}, \abs{N_i'}, \abs{N_j'})\\
&\qquad\ge \frac{1}{4}\Phi^c\biggl(\frac{t \sqrt{\abs{N_i}}}{2s \sigma}\biggr)
\Phi^c\biggl(\frac{t \sqrt{\abs{N_i'}}}{2s \sigma}\biggr)
\biggl(1 - \exp\biggl(-\frac{t\abs{N_j}}{24s}\biggr)
- \exp\biggl(-\frac{t\abs{N_j'}}{24s}\biggr)\biggr).
\end{split}
\end{equation*}
\end{lemma}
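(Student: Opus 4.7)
The plan is to first unpack the event. Reading $\cM_{i,j}^k$ as the natural conditioning event $\{\x_i(k)=\x_j(k)=0\}$ (this is the case that makes the increment $\bdel_i(k)-\bdel_j(k)=1$ meaningful, since then $\bdel_i(k)-\bdel_j(k)=\z_i(k)-\z_j(k)$), the event in the lemma reduces to $\{\z_i(k)=1,\,\z_j(k)=0\}$ in graph $G$ and the analogous event in $G'$. Because $G$ and $G'$ are generated independently given $\X$, the joint probability factors as a product of two single-graph probabilities, and it suffices to prove, for a single graph,
\[
\PP(\z_i(k)=1,\z_j(k)=0\mid \cM_{i,j}^k,\abs{N_i},\abs{N_j})\;\ge\;\tfrac{1}{2}\,\Phi^c\!\left(\tfrac{t\sqrt{\abs{N_i}}}{2s\sigma}\right)\!\left(1-\exp\!\left(-\tfrac{t\abs{N_j}}{24s}\right)\right),
\]
and then use $(1-a)(1-b)\ge 1-a-b$ to combine the two single-graph bounds into the stated form.

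For the single-graph bound I would decompose the pre-activation as $S_i+E_i$ where $S_i=\frac{s}{t\abs{N_i}}\sum_{l\in N_i}\x_l(k)\ge 0$ is the (nonnegative) signal and $E_i=\frac{s}{t\abs{N_i}}\sum_{l\in N_i}\beps_l(k)$ is centered Gaussian noise, and similarly at vertex $j$. Since $S_i\ge 0$, the event $\{E_i\ge 1/2\}$ already implies $\z_i(k)=1$; and $\{S_j<1/4\}\cap\{E_j<1/4\}$ implies $\z_j(k)=0$. Using that the Bernoulli signal $S_j$ is independent of all Gaussian noise, I would factor
\[
\PP(\z_i(k)=1,\z_j(k)=0)\;\ge\;\PP(S_j<\tfrac{1}{4}\mid\x_j(k)=0,\abs{N_j})\cdot\PP(E_i\ge\tfrac{1}{2},\,E_j<\tfrac{1}{4}).
\]
The first factor is handled directly by Lemma \ref{lm:xik_0}, which gives $\PP(S_j<1/4\mid\x_j(k)=0,\abs{N_j})\ge 1-\exp(-t\abs{N_j}/(24s))$, matching the second factor on the right-hand side. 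For the joint Gaussian probability, I would lower-bound it by $\tfrac{1}{2}\Phi^c(t\sqrt{\abs{N_i}}/(2s\sigma))$ by conditioning on the shared noise $W=\sum_{l\in N_i\cap N_j}\beps_l(k)$: given $W$, the remaining pieces of $E_i$ and $E_j$ are independent Gaussians, so I would restrict to the half-line $\{W\le 0\}$ (which has mass $1/2$) where the conditional mean of $E_j$ is nonpositive and thus $\PP(E_j<1/4\mid W)\ge 1/2$, while the conditional tail $\PP(E_i\ge 1/2\mid W)$ can be controlled by the same Paley--Zygmund/tail bound that underlies Lemma \ref{lm:zik_low}.

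The main obstacle is this last step: $\{E_i\ge 1/2\}$ and $\{E_j<1/4\}$ pull in opposite directions while the two variables are positively correlated through the shared noise at $N_i\cap N_j$, so neither FKG nor Slepian directly yields a product lower bound. The conditioning-on-$W$ argument must be carried out quantitatively, verifying that restricting to $W\le 0$ does not depress the tail $\PP(E_i\ge 1/2\mid W\le 0)$ below the target $\Phi^c(t\sqrt{\abs{N_i}}/(2s\sigma))$ by an amount that depends on the (unknown) overlap size $\abs{N_i\cap N_j}$. Once this loss is shown to cost no more than the single factor of $1/2$, multiplying the two graph-level bounds and relaxing via $(1-a)(1-b)\ge 1-a-b$ completes the argument and yields the $\tfrac{1}{4}$ and the combined $1-\exp(-t\abs{N_j}/(24s))-\exp(-t\abs{N_j'}/(24s))$ in the stated inequality.
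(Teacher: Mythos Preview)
Your decomposition differs from the paper's. You factor $G$ against $G'$ first and then try to bound the single-graph quantity $\PP(\z_i(k)=1,\z_j(k)=0)$; the paper instead groups by vertex, asserting that conditionally on $\X$ the joint probability equals the product
\[
\PP(\bdel_i(k)=1\mid\ldots)\;\PP(\bdel_i'(k)=1\mid\ldots)\;\PP(\bdel_j(k)=0,\bdel_j'(k)=0\mid\ldots).
\]
It then applies Lemma~\ref{lm:zik_low} to each of the first two factors and lower-bounds the third by $\tfrac14\bigl(1-e^{-t\abs{N_j}/(24s)}-e^{-t\abs{N_j'}/(24s)}\bigr)$ via the trivial estimates $\PP(E_j\le\tfrac14)\ge\tfrac12$, $\PP(E_j'\le\tfrac14)\ge\tfrac12$ and a union bound with Lemma~\ref{lm:xik_0} on the two signal events. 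So in the paper the $\tfrac14$ arises as $\tfrac12\cdot\tfrac12$ from the vertex-$j$ noise in the two graphs, with nothing spent on vertex $i$.

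The obstacle you single out --- that $E_i$ and $E_j$ are positively correlated through the shared noise on $N_i\cap N_j$, so $\{\bdel_i(k)=1\}$ and $\{\bdel_j(k)=0\}$ are not conditionally independent given $\X$ --- is precisely the content of the paper's first displayed equality, which is justified only by the phrase ``by conditional independence of the random variables.'' The paper never addresses this dependence; it simply writes the product. In other words, the gap you worry about is present in the paper's own argument as written, not something the paper resolves. Your graph-by-graph factorization is correct and is the cleaner starting point; your conditioning-on-$W$ idea is a reasonable attack on the remaining dependence, but as you already note, restricting to $\{W\le 0\}$ deflates the $E_i$-tail by an amount that depends on $\abs{N_i\cap N_j}$, and that loss would have to be quantified to recover the stated constant.
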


\begin{proof}
By conditional independence of the random variables,
\begin{equation*}
\begin{split}
&\PP(\bdel_i(k) - \bdel_j(k) = 1, \bdel_i'(k) - \bdel_j'(k) = 1
\mid \cM_{i,j}^k, \abs{N_i}, \abs{N_j}, \abs{N_i'}, \abs{N_j'})\\
&\qquad= \EE_{\X}[\PP(\bdel_i(k) = 1 \mid \cM_{i,j}^k, \X, \abs{N_i})
\PP(\bdel_i'(k) = 1 \mid \cM_{i,j}^k, \X, \abs{N_i'})\\
&\qquad\phantom{{}= \EE_{\X}[}
\times \PP(\bdel_j(k) = 0, \bdel_j'(k) = 0 \mid \cM_{i,j}^k, \X, \abs{N_j}, \abs{N_j'})]\\
&\qquad\ge \Phi^c\biggl(\frac{t \sqrt{\abs{N_i}}}{2s \sigma}\biggr)
\Phi^c\biggl(\frac{t \sqrt{\abs{N_i'}}}{2s \sigma}\biggr)
\PP(\bdel_j(k) = 0, \bdel_j'(k) = 0 \mid \cM_{i,j}^k, \abs{N_j}, \abs{N_j'})
\end{split}
\end{equation*}
where we used Lemma~\ref{lm:zik_low} in the inequality.
\begin{equation*}
\begin{split}
&\PP(\bdel_j(k) = 0, \bdel_j'(k) = 0 \mid \cM_{i,j}^k, \abs{N_j}, \abs{N_j'})\\
&\qquad\ge \PP\biggl(\frac{s}{t\abs{N_j}} \sum_{l \in N_j} \x_{l}(k) \le \frac{1}{4},
\frac{s}{t\abs{N_j'}} \sum_{l \in N_j'} \x_l(k) \le \frac{1}{4},\\
&\qquad\phantom{{}\ge \PP\biggl(}\frac{s}{t\abs{N_j}}\sum_{l \in N_j'} \beps_l(k) \le \frac{1}{4},
\frac{s}{t\abs{N_j'}} \sum_{l \in N_j'} \beps_l'(k) \le \frac{1}{4}
\biggm\vert \cM_{i,j}^k, \abs{N_j}, \abs{N_j'}\biggr)\\
&\qquad= \PP\biggl(\frac{s}{t\abs{N_j}}\sum_{l \in N_j} \beps_l(k) \le \frac{1}{4}
\biggm\vert \abs{N_j} \biggr)
\PP\biggl(\frac{s}{t\abs{N_j'}} \sum_{l \in N_j'} \beps_l'(k) \le \frac{1}{4}
\biggm\vert \abs{N_j} \biggr)\\
&\qquad\phantom{{}={}}
\times \PP\biggl(\frac{s}{t\abs{N_j}} \sum_{l \in N_j} \x_{l}(k) \le \frac{1}{4},
\frac{s}{t\abs{N_j'}} \sum_{l \in N_j'} \x_l(k) \le \frac{1}{4}
\biggm\vert \x_j(k) = 0, \abs{N_j}, \abs{N_j'}\biggr).
\end{split}
\end{equation*}
Since $\frac{s}{t\abs{N_j}}\sum_{l \in N_j} \beps_l(k)$ is a sum of independent centered Gaussian
random variables,
\begin{equation*}
\PP\biggl(\frac{s}{t\abs{N_j}}\sum_{l \in N_j} \beps_l(k) \le \frac{1}{4}
\biggm\vert \abs{N_j} \biggr)
\ge \PP\biggl(\frac{s}{t\abs{N_j}}\sum_{l \in N_j} \beps_l(k) \le 0
\biggm\vert \abs{N_j} \biggr) = \frac{1}{2}.
\end{equation*}
The same holds for $\frac{s}{t\abs{N_j'}} \sum_{l \in N_j'} \beps_l'(k)$.
By a union bound and Lemma~\ref{lm:xik_0},
\begin{equation*}
\begin{split}
&\PP\biggl(\frac{s}{t\abs{N_j}} \sum_{l \in N_j} \x_{l}(k) \le \frac{1}{4},
\frac{s}{t\abs{N_j'}} \sum_{l \in N_j'} \x_l(k) \le \frac{1}{4}
\biggm\vert \x_j(k) = 0, \abs{N_j}, \abs{N_j'}\biggr)\\
&\qquad\ge 1 -
\PP\biggl(\frac{s}{t\abs{N_j}} \sum_{l \in N_j} \x_{l}(k) \ge \frac{1}{4}
\biggm\vert \x_j(k) = 0, \abs{N_j}\biggr)\\
&\qquad\phantom{{}\ge{}}- \PP\biggl(\frac{s}{t\abs{N_j'}} \sum_{l \in N_j'} \x_l(k) \ge \frac{1}{4}
\biggm\vert \x_j(k) = 0, \abs{N_j'}\biggr)\\
&\qquad\ge 1 - \exp\biggl(-\frac{t\abs{N_j}}{24s}\biggr)
- \exp\biggl(-\frac{t\abs{N_j'}}{24s}\biggr).
\end{split}
\end{equation*}
Putting the above displays together, we hence proved the lemma.
\end{proof}

Denote $i \bowtie j$ the event
\begin{equation*}
\{\inner{\z_i}{\z_j'} + \inner{\z_j}{\z_i'}
\ge \inner{\z_i}{\z_i'} + \inner{\z_j}{\z_j'}\}.
\end{equation*}
It is clear that when $i \bowtie j$ happens,
perfect recovery is not possible since by swapping $i$ and $j$,
the loss $\ell$ in \eqref{eq:mp_align} is always not bigger.
The following proposition shows this and
therefore concludes the proof for impossibility of perfect recovery.

\begin{proposition} \label{pp:bowtie}
Assume that $\min\{s, \frac{qm}{s}\} \ge c$
and $\frac{qm}{\sigma^2s^2} \le C$ for constants $c, C > 0$.
Then there is a constant $\delta(c, C) > 0$ such that for all $\delta' \in [0, \delta(c, C)]$
\begin{equation*}
\PP(i \bowtie j) \ge \delta'.
\end{equation*}
\end{proposition}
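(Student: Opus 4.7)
The plan is to rewrite $i\bowtie j$ as $\inner{\a}{\b}\le 0$ with $\a=\z_i-\z_j$, $\b=\z_i'-\z_j'\in\{-1,0,1\}^d$, split the inner product coordinate by coordinate into a small ``signal'' part on $K_1\coloneqq\{k:\x_i(k)\ne\x_j(k)\}$ and a symmetric ``noise'' part on $K_0\coloneqq\{k:\x_i(k)=\x_j(k)\}$, and then use Proposition~\ref{pp:lazy_tail} to show the noise outweighs the signal with constant probability. The crucial structural fact is that since $G$ and $G'$ are independent copies of $\NIRIG(\X,\sigma,q)$, conditional on $\X$ the vectors $\a$ and $\b$ are i.i.d.\ in $\{-1,0,1\}^d$, which I would exploit throughout.

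For the signal piece $T_1\coloneqq\sum_{k\in K_1}\a(k)\b(k)$, each summand lies in $\{-1,0,1\}$, so $\abs{T_1}\le\abs{K_1}\le\abs{S_i}+\abs{S_j}\le 4s$ on the high-probability event of Lemma~\ref{lm:S_lower}. For the noise piece $T_0\coloneqq\sum_{k\in K_0}\a(k)\b(k)$, the key observation is that on $K_0$ the joint conditional law of $(\a(k),\b(k))$ given $\X$ is invariant under swapping $i\leftrightarrow j$ (the Bernoulli features agree, and the Gaussian noises and edge-subsamples for $i$ and $j$ are exchangeable), and that swap sends $(\a(k),\b(k))\mapsto(-\a(k),-\b(k))$. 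Hence $\a(k)$ is symmetric about $0$; so is $D_k\coloneqq\a(k)\b(k)\in\{-1,0,1\}$; and $T_0$ is a sum of independent symmetric lazy Rademacher variables.

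To quantify the spread of $T_0$ I would restrict to $K_{00}\coloneqq\{k:\x_i(k)=\x_j(k)=0\}\subset K_0$. Since $\abs{S_i}+\abs{S_j}\le 4s\le 4\sqrt{d}$ implies $\abs{K_{00}}\ge d/2$, and for $k\in K_{00}$ Lemma~\ref{lm:del_pair} (applied to $\cM_{i,j}^k=\{\x_i(k)=\x_j(k)=0\}$, together with its $D_k=-1$ analogue obtained from the $i\leftrightarrow j$ swap) yields
\[
\PP(D_k\ne 0\mid\X,\abs{N_i},\abs{N_j},\abs{N_i'},\abs{N_j'})\ge 2p^*
\]
for a constant $p^*=p^*(c,C)>0$ once the neighborhood sizes are $\asymp mq$ (using Lemma~\ref{lm:N_i}): $qm/(\sigma^2s^2)\le C$ bounds the Gaussian-tail argument $t\sqrt{\abs{N_i}}/(2s\sigma)$, keeping $\Phi^c(\cdot)$ bounded below, while $qm/s\ge c$ (for $c$ above a threshold depending on $t$) keeps the bracket $1-e^{-t\abs{N_j}/(24s)}-e^{-t\abs{N_j'}/(24s)}$ bounded away from $0$. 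Thus $\Var(T_0\mid\X)\ge p^*d/2$. Coupling $T_0$ with a homogeneous symmetric lazy walk of step probability $p^*$ on $\abs{K_{00}}$ steps, Proposition~\ref{pp:lazy_tail} with $u=4s$ (valid since $s^2\le d$) gives $\PP(-T_0\ge 4s\mid\X)\ge c_\beta\exp(-32C_\beta/p^*)\eqqcolon\delta_0>0$. On this event $\inner{\a}{\b}=T_0+T_1\le 0$, and integrating over $\X$ and the $o(1)$ bad events of Lemmas~\ref{lm:S_lower} and~\ref{lm:N_i} yields $\PP(i\bowtie j)\ge\delta(c,C)$.

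The hardest step will be certifying the positive lower bound $p^*$ inside Lemma~\ref{lm:del_pair} under the bare hypotheses: the Gaussian-tail factor is immediate from $qm/(\sigma^2s^2)\le C$, but the bracket $1-2e^{-\Theta(qm/s)}$ is positive only once $qm/s$ exceeds an explicit constant depending on $t$, which forces enlarging $c$ above that threshold (or refining Lemma~\ref{lm:del_pair} by further conditioning on $\abs{N_j}$ being near its mean). A secondary technicality is that the step distributions of $T_0$ are heterogeneous across coordinates, so Proposition~\ref{pp:lazy_tail} cannot be applied verbatim; a monotone coupling to the homogeneous-$p^*$ walk, or a Paley--Zygmund second-moment argument on $T_0^2$ in the spirit of Proposition~\ref{pp:PZ}, is needed to finish.
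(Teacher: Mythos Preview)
Your proposal is essentially the paper's proof: reduce $i\bowtie j$ to $\inner{\z_i-\z_j}{\z_i'-\z_j'}\le 0$, bound the contribution from coordinates in $S_i\cup S_j$ by $O(s)$ (the paper gets $28s$ via the expansion $\z=\x+\bdel$; your coordinate-wise bound is cleaner), and show the remaining sum over $(S_i\cup S_j)^c$ is a symmetric lazy walk whose step probability is bounded below via Lemma~\ref{lm:del_pair} and whose tail exceeds the signal via Proposition~\ref{pp:lazy_tail}. Your heterogeneity concern actually evaporates with the right conditioning: condition on $\x_i,\x_j$ and the neighborhood \emph{sets} $N_i,N_j,N_i',N_j'$ rather than on all of $\X$; for $k\in K_{00}$ the entries $(\x_l(k))_{l\in[n]}$ remain i.i.d.\ $\Bern(s/d)$ independent of neighborhood membership (membership depends only on coordinates in $S_i\cup S_j$), so the $D_k$ are genuinely i.i.d.\ and Proposition~\ref{pp:lazy_tail} applies verbatim --- this is exactly what the paper exploits, and no coupling or Paley--Zygmund workaround is needed. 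One small slip: your split places $K_{11}=S_i\cap S_j$ in the noise piece $T_0$, but Lemma~\ref{lm:del_pair} only controls $K_{00}$; move those at most $2s$ coordinates into the signal part (as the paper does by splitting at $S_i\cup S_j$) and the bound $4s$ becomes $6s$.
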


\begin{proof}
The definition of $i \bowtie j$ reads
\begin{equation*}
\inner{\z_i}{\z_i'} + \inner{\z_j}{\z_j'}
\le \inner{\z_i}{\z_j'} + \inner{\z_j}{\z_i'}
\end{equation*}
which by definition of $\z_i,\z_j$ and of $\z'_i,\z'_j$ means that
\begin{equation*}
\inner{\x_i + \bdel_i}{\x_i + \bdel_i'}
+ \inner{\x_j + \bdel_j}{\x_j + \bdel_j'}
\le \inner{\x_i + \bdel_i}{\x_j + \bdel_j'}
+ \inner{\x_j + \bdel_j}{\x_i + \bdel_i'},
\end{equation*}where $\bdel_i \coloneqq \z_i - \x_i$ and $\bdel_j \coloneqq \z_j - \x_j$. 
Rearranging the terms, we have
\begin{equation*}
\norm{\x_i - \x_j}^2 + \inner{\x_i - \x_j}{\bdel_i - \bdel_j} + \inner{\x_i - \x_j}{\bdel_i' - \bdel_j'}
\le - \inner{\bdel_i - \bdel_j}{\bdel_i' - \bdel_j'}.
\end{equation*}
Observe that 
\begin{equation*}
\begin{split}
\inner{\bdel_i - \bdel_j}{\bdel_i' - \bdel_j'}
&= \underbrace{\sum_{k \in S_i \cup S_j} (\bdel_i(k) - \bdel_j(k))(\bdel_i'(k) - \bdel_j'(k))}_{D_0}\\
&\phantom{{}={}}+ \underbrace{\sum_{k \in (S_i \cup S_j)^c} (\bdel_i(k) - \bdel_j(k))(\bdel_i'(k) - \bdel_j'(k))}_{D_1}.
\end{split}
\end{equation*} We will bound both terms on the right-hand side. 
In this goal, we recall the event
$\cE_i = \{\frac{s}{2} \le \abs{S_i} \le 2s\}$ defined in \eqref{eq:Si_s}.
Using \eqref{eq:E_prob} we know that the event $\cE_i\cap \cE_j$
happens with probability at least $1-2e^{-s/8}$.
We notice that when $cE_i\cap \cE_j$ holds we have 
\begin{equation*}
\norm{\x_i - \x_j}^2 \le \abs{S_i} + \abs{S_j} \le 4s.
\end{equation*}
By definition of $\x_i$'s and $\z_i$'s, we know that
\begin{equation*}
\inner{\x_i - \x_j}{\bdel_i - \bdel_j} \le 2\sum_{k=1}^d\abs{\x_i(k) - \x_j(k)} \overset{(a)}{=} 2\norm{\x_i - \x_j}^2 \le 2(\abs{S_i} + \abs{S_j}) \le 8s
\end{equation*}
where to obtain $(a)$ we used the fact that
as $(\x_i(k))$ and $(\x_j(k))$ are Bernoulli random variables
$\norm{\x_i - \x_j}^2=\sum_{k=1}^d\abs{\x_i(k) - \x_j(k)}$.
Recall that $S_i$ is the support of $\x_i$.
First note that for every $k$ we have $\abs{\bdel_i(k) - \bdel_j(k)} \overset{a.s}{\le }2$.
Hence we obtain that on event $\cE_i \cap \cE_j$
\begin{equation*}
\abs{D_0 }\le 4 \abs{S_i \cup S_j} \le 4(\abs{S_i} + \abs{S_j}) \le 16s.
\end{equation*}
Putting the above together, this implies that when $\cE_i\cap \cE_j$ holds we have 
\begin{equation} \label{eq:bowtie_cross}
\bigl\lvert \norm{\x_i - \x_j}^2 + \inner{\x_i - \x_j}{\bdel_i - \bdel_j} + \inner{\x_i - \x_j}{\bdel_i' - \bdel_j'} + D_0\bigr\rvert \le 28s.
\end{equation}

We now aim at bounding $D_1$.
In this goal, we let $\b \in \{-1, 0, 1\}^d$ be the vector such that
\begin{equation*}
\b(k) \coloneqq
\begin{cases}
(\bdel_i(k) - \bdel_j(k))(\bdel_i'(k) - \bdel_j'(k))
&\textrm{if}\ k \in (S_1 \cup S_2)^c.\\
0 &\textrm{otherwise.}
\end{cases}
\end{equation*}
Moreover for $k \in (S_1 \cup S_2)^c$, by definition we have $\x_i(k) = \x_j(k) = 0$. 
For ease of notation we define the events
\begin{equation*}
\cM^k_{i,j} \coloneqq \{\x_i(k) = \x_j(k) =0\}.
\end{equation*}
We will first see that $\PP(\b(k) = 1 \mid \cM_{i,j}^k) = \PP(\b(k) = -1 \mid \cM_{i,j}^k)$
for all $k \in (S_1\cup S_2)^c$.
We will then lower bound the probability of $\b(k)=\pm 1$ conditioned on a few events. 

Recall the definition of $\cE_i, \cE_j$ in \eqref{eq:Si_s}
and the events defined in \eqref{eq:Fi}, and let
\begin{equation*}
\cF_i \coloneqq \biggl\{\frac{1}{2^{t+2}}\cdot mq\le
\abs{N_i} \le 2^{t+2}e^t \cdot mq\biggr\}
\quad\textrm{and}\quad
\cF'_i \coloneqq \biggl\{\frac{1}{2^{t+2}}\cdot mq\le
\abs{N'_i} \le 2^{t+2}e^t \cdot mq\biggr\}.
\end{equation*}
Further let
$\cH_{i,j} \coloneqq \cE_i \cap \cE_j \cap \cF_i \cap \cF_j \cap \cF_i' \cap \cF_j'$.

As conditionally on $\cM_{i,j}^k \cap \cH_{i,j}$
the laws of $\bdel_i(k),\bdel_j(k) \in \{0, 1\}$ are the same,
by symmetry we have
\begin{equation*}
\PP(\bdel_i(k) - \bdel_j(k) = 1 \mid \cM_{i,j}^k, \cH_{i,j})\\
= \PP(\bdel_i(k) - \bdel_j(k) = -1 \mid \cM_{i,j}^k, \cH_{i,j}).
\end{equation*}
By using Lemma~\ref{lm:del_pair}, we obtain that
\begin{equation*}
\begin{split}
p &\coloneqq \PP(\b(k) = 1 \mid \cM_{i,j}^k, \cH_{i,j})\\
&= \PP(\bdel_i(k) - \bdel_j(k) = 1,
\bdel_i'(k) - \bdel_j'(k) = 1 \mid \cM_{i,j}^k, \cH_{i,j})\\
&\phantom{{}={}}+ \PP(\bdel_i(k) - \bdel_j(k) = -1,
\bdel_i'(k) - \bdel_j'(k) = -1 \mid \cM_{i,j}^k, \cH_{i,j})\\
&\ge \frac{1}{2}\Phi^c\biggl(\frac{c_t\sqrt{mq}}{s\sigma}\biggr)^2
\biggl(1 - 2\exp\biggl(-\frac{C_t mq}{s}\biggr)\biggr)
\end{split}
\end{equation*}
for constants $c_t, C_t > 0$ that only depend on $t$.
Since for $k \in (S_i \cup S_j)^c$
the entries $\x_{i'}(k), i' \in N_i$ is independent of $N_i$ hence
$\z_i(k)$ only depend on the size of the neighborhood.
Therefore, $\b(k)$'s are independent of each other.
Applying Proposition~\ref{pp:lazy_tail} to $\sum_{k \in (S_1 \cup S_2)^c} \b(k)$ we obtain that
\begin{equation*}
\PP\biggl(\sum_{k \in (S_1 \cup S_2)^c} \b(k) \ge 28s
\biggm\vert \cH_{i,j}\biggr)
\ge c_{\beta} \exp\biggl(-C_\beta \frac{s^2}{p (d - 4s)}\biggr).
\end{equation*}
Further use the bound \eqref{eq:bowtie_cross}, we have that
\begin{equation*}
\begin{split}
\PP(i \bowtie j)
&= \PP(i \bowtie j \mid \cH_{i,j}) \PP(\cH_{i,j})
\ge \PP\biggl(\sum_{k \in (S_1 \cup S_2)^c} \b(k) \ge 28s
\biggm\vert \cH_{i,j}\biggr) \PP(\cH_{i,j})\\
&\ge c_{\beta} \exp\biggl(-C_\beta \frac{s^2}{p (d - 4s)}\biggr)\PP(\cH_{i,j}).
\end{split}
\end{equation*}
By a union bound, we know that $\PP(\cH_{i,j})$ is bounded away from $0$
if for a constant $c > 0$,
\begin{equation*}
\min\{s, mq\} \ge c.
\end{equation*}
Therefore, $\PP(\bowtie)$ is bounded away from $0$
if $p$ is bounded away from $0$ which is satisfied if
\begin{equation*}
\frac{mq}{s} \ge c'\quad\textrm{and}\quad\frac{mq}{s^2\sigma^2} \le C
\end{equation*}
for constants $c', C > 0$.
The lemma is hence proved.
\end{proof}

\section{PROOFS OF THE IMPOSSIBILITY RESULTS FOR NOISY FEATURE ALIGNMENT}
In Proposition \ref{pp:low1} we will prove that
if $\sigma^2\ge4{s}\big({1+\frac{K}{n}}\big)^{-2}$
then the probability of perfect recovery is bounded away from $0$.
We state the following information-theoretic proposition
before we move to the proof.
\begin{proposition} \label{pp:YH_TV}
Let $k\le n$ be any arbitrary integer and $\pi \in \cP(n)$ be an arbitrary permutation.
Let $\pi(\Y) = (\y_{\pi(1)},\dots,\y_{\pi(k)})$ be the noisy observations of
$\pi(\X)=(\x_{\pi(1)},\dots,\x_{\pi(k)})$ as defined in Definition~\ref{df:nirig}.
Choose $\G=(\g_1,\dots,\g_{k})\in \RR^{k \times d}$ to be an independent Gaussian
random matrix with i.i.d.\ entries $\g_{i}(j) \sim \cN(0, \sigma^2)$. 
Then, the total variation distance satisfies
\begin{equation*}
\TV(\pi(\Y), \G \mid \X) \le \frac{1}{2\sigma} \norm{\vec(\X)}
\end{equation*}
where we denoted 
\begin{equation*}
\TV(\pi(\Y),\G \mid \X)
\coloneqq \sup_{A \in \mathcal{B}(\mathbb{R}^{n \times d})}
\lvert\PP(\pi(\Y) \in A \mid \X) - \PP(\G \in A)\rvert.
\end{equation*}
\end{proposition}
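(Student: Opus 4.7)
The plan is to reduce the problem to a total variation bound between two multivariate Gaussians with the same covariance and then apply Pinsker's inequality.

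First I would observe that, conditionally on $\X$, the object $\pi(\Y)$ is simply $\pi(\X) + \pi(\boldsymbol{\epsilon})$, where $\pi(\boldsymbol{\epsilon}) = (\beps_{\pi(1)}, \ldots, \beps_{\pi(k)})$ has i.i.d.\ entries distributed as $\cN(0, \sigma^2)$ by Definition~\ref{df:nirig} (permutation of i.i.d.\ Gaussians is still i.i.d.\ Gaussians). Hence, viewing matrices as vectors in $\RR^{kd}$ via $\vec(\cdot)$, we have
\begin{equation*}
\vec(\pi(\Y)) \mid \X \sim \cN(\vec(\pi(\X)), \sigma^2 \I_{kd}),
\qquad
\vec(\G) \sim \cN(\0, \sigma^2 \I_{kd}).
\end{equation*}
So the conditional total variation between $\pi(\Y)$ and $\G$ is exactly the TV distance between two multivariate Gaussians sharing the covariance $\sigma^2 \I_{kd}$ but with means differing by $\vec(\pi(\X))$.

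Next I would invoke Pinsker's inequality, which states that for any two probability measures $P, Q$, $\TV(P, Q) \le \sqrt{\tfrac{1}{2}\KL(P \| Q)}$. For two Gaussians $\cN(\bmu_1, \bSigma)$ and $\cN(\bmu_2, \bSigma)$ with the same nondegenerate covariance, the KL divergence has the closed form $\tfrac{1}{2} (\bmu_1 - \bmu_2)^\top \bSigma^{-1} (\bmu_1 - \bmu_2)$. Applying this with $\bSigma = \sigma^2 \I_{kd}$ and the mean vectors above gives
\begin{equation*}
\KL(\cN(\vec(\pi(\X)), \sigma^2 \I_{kd}) \,\|\, \cN(\0, \sigma^2 \I_{kd}))
= \frac{\norm{\vec(\pi(\X))}^2}{2\sigma^2}.
\end{equation*}

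Finally I would note that permutations of the rows do not affect the Euclidean norm of the vectorization, so $\norm{\vec(\pi(\X))} = \norm{\vec(\X)}$. Substituting into Pinsker's bound yields
\begin{equation*}
\TV(\pi(\Y), \G \mid \X) \le \sqrt{\frac{1}{2} \cdot \frac{\norm{\vec(\X)}^2}{2\sigma^2}}
= \frac{\norm{\vec(\X)}}{2\sigma},
\end{equation*}
which is the desired inequality. There is no real obstacle here; the only mild subtlety is being careful that the randomness in $\pi$ is absent (the permutation is a fixed $\pi \in \cP(n)$ in the statement) so the conditional law given $\X$ is a simple translated Gaussian, and that the $\vec(\cdot)$ identification makes the Gaussian a genuine multivariate one on $\RR^{kd}$ so that Pinsker applies verbatim.
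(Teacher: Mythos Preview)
Your proposal is correct and follows essentially the same route as the paper: identify the conditional law of $\pi(\Y)$ given $\X$ as $\cN(\vec(\pi(\X)),\sigma^2\I_{kd})$, apply Pinsker's inequality, and plug in the closed-form KL divergence between two Gaussians with common covariance. The only cosmetic difference is that you spell out the permutation-invariance of the norm explicitly, whereas the paper states $\norm{\vec(\pi(\X))}^2=\norm{\vec(\X)}^2$ directly.
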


\begin{proof}
By Pinsker's inequality, we know that 
\begin{equation*}
\TV(\pi(\Y), \G \mid \X)
\le \sqrt{\frac{1}{2}\KL(\pi(\Y) \parallel \G \mid \X)}.
\end{equation*}
We remark that by definition of $\pi(\Y)$,
we have $\vec(\pi(\Y)) \sim \cN(\vec(\X_\pi), \sigma^2 \I_{kd})$.
Hence using the formula for the KL divergence between two Gaussian vectors
(see, e.g., \citep[Exercise 15.13(b)]{wainwright2019high}),
we have
\begin{equation*}
\KL(\pi(\Y) \parallel \G \mid \X)
= \frac{1}{2\sigma^2}\norm{\vec(\pi(\X))}^2
= \frac{1}{2\sigma^2}\norm{\vec(\X)}^2.
\end{equation*}
The claim is hence proved.
\end{proof}

With the previous proposition in place, we are ready to prove the results.
First denote by $i \bowtie j$ the following event:
\begin{equation*}
\{\inner{\y_i}{\y_j'} + \inner{\y_j}{\y_i'}
\ge \inner{\y_i}{\y_i'} + \inner{\y_j}{\y_j'}\}.
\end{equation*}
Note that $i \bowtie j$ is equivalent to
\begin{equation*}
\{\norm{\y_i - \y_j'}^2 + \norm{\y_i - \y_j'}^2
\le \norm{\y_i - \y_i'}^2 + \norm{\y_j - \y_j'}^2\}.
\end{equation*}
Hence when $i \bowtie j$ happens,
perfect recovery is not possible since if we swap $i$ and $j$,
the loss $\ell$ in \eqref{lin_algo} is not increased.

\begin{proposition} \label{pp:low1}
Let $\tilde{\pi}$ be the solution of \eqref{lin_algo}.
If $\sigma^2\ge2{s}\bigl({1+\frac{K}{n}}\bigr)^{-2}$ for any $K>4$ we have that
\begin{equation*}
\PP(\tilde{\pi} = \pi^*) \le e^{-\frac{K-4}{4}}.
\end{equation*}
Hence if $\sigma^2 \gg s$ we have that the probability of perfect recovery will converge to $0$.
\end{proposition}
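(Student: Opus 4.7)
The plan is to combine Proposition~\ref{pp:YH_TV} with a change-of-measure argument comparing the true model to one in which $\Y'$ is replaced by an independent Gaussian matrix $\G \in \RR^{n\times d}$ of i.i.d.\ $\cN(0,\sigma^2)$ entries.

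First, consider the analogous estimator $\hat\pi_G = \argmin_{\pi \in \cP(n)} \sum_i \norm{\y_i - \g_{\pi(i)}}^2$. Column exchangeability of $\G$, combined with the identity $\hat\pi_G(\Y, \tau\G) = \tau^{-1}\hat\pi_G(\Y, \G)$ for any transposition $\tau$, shows that the conditional distribution of $\hat\pi_G$ given $\Y$ is uniform on $\cP(n)$; hence $\PP(\hat\pi_G = \pi^*) = 1/n!$.

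Second, I apply Proposition~\ref{pp:YH_TV} with $k = n$ and $\pi = \pi^*$ to obtain $\TV(\Y' \mid \X, \G \mid \X) \le \norm{\vec(\X)}/(2\sigma)$. Since $\Y$ is conditionally independent of both $\Y'$ and $\G$ given $\X$, this lifts to $\TV((\Y,\Y') \mid \X,(\Y,\G) \mid \X) \le \norm{\vec(\X)}/(2\sigma)$, and the data-processing inequality applied to the functional $\tilde\pi$ yields
\begin{equation*}
\PP(\tilde\pi = \pi^* \mid \X) \le \frac{1}{n!} + \frac{\norm{\vec(\X)}}{2\sigma}.
\end{equation*}

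Third, I would control $\norm{\vec(\X)}^2 = \sum_{i,k}\x_i(k)$, a sum of $nd$ i.i.d.\ $\mathrm{Bernoulli}(s/d)$ variables with mean $ns$. Chernoff's inequality (Proposition~\ref{pp:chernoff}), with deviation tuned to the target form, provides $\PP(\norm{\vec(\X)}^2 > ns(1+K/n)^2) \le e^{-(K-4)/4}$. On the complementary event, the assumption $\sigma^2(1+K/n)^2 \ge 2s$ collapses the TV residual $\norm{\vec(\X)}/(2\sigma)$ into a manageable term, and summing the two contributions (together with the negligible $1/n!$) yields the bound $\PP(\tilde\pi = \pi^*) \le e^{-(K-4)/4}$.

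The hard part is step three: the naive bound $\norm{\vec(\X)}/(2\sigma)$ can scale like $\sqrt{n}$ in typical regimes, so extracting exactly the stated $e^{-(K-4)/4}$ form requires a delicate alignment between the Chernoff deviation and the $\sigma$-lower-bound so that the residual of the TV term on the good event is also of order $e^{-(K-4)/4}$. If the Pinsker-based TV route turns out too lossy, I would replace it by a direct second-moment / $\chi^2$-divergence argument via Cauchy--Schwarz: a short Gaussian integral gives $\EE_G[L^2 \mid \X] = \exp(\norm{\vec(\X)}^2/\sigma^2)$ for the likelihood ratio $L = p_{\Y'\mid\X}/p_\G$, and combining this with the Binomial MGF of $\norm{\vec(\X)}^2$ plus Stirling on $(n!)^{-1/2}$ should also produce a bound of the advertised form under the stated condition.
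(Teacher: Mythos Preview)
Your approach has a genuine gap that neither of your proposed remedies closes. Replacing the \emph{entire} matrix $\Y'$ by pure noise $\G$ gives, via Proposition~\ref{pp:YH_TV}, a conditional TV bound $\norm{\vec(\X)}/(2\sigma)\asymp\sqrt{ns}/\sigma$. Under the hypothesis $\sigma^2\gtrsim 2s$ this is $\Theta(\sqrt{n})$, so the inequality $\PP(\tilde\pi=\pi^*\mid\X)\le 1/n!+\norm{\vec(\X)}/(2\sigma)$ is vacuous. You correctly flag this in step three, but the Chernoff fix cannot work: on the event $\{\norm{\vec(\X)}^2\le ns(1+K/n)^2\}$ the residual $\norm{\vec(\X)}/(2\sigma)$ is still $\sqrt{ns}(1+K/n)/(2\sigma)\asymp\sqrt{n}$, not $e^{-(K-4)/4}$; and the Chernoff tail itself is $\exp(-\Theta(K^2 s/n))$, which does not match the target either. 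The $\chi^2$ fallback fails for the same reason: $\EE_G[L^2\mid\X]=\exp(\norm{\vec(\X)}^2/\sigma^2)\asymp e^{n/2}$, and the $1/\sqrt{n!}$ from uniformity cannot compensate.

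The paper's proof avoids the $\sqrt{n}$ blowup by \emph{localizing to a single pair}. It applies Proposition~\ref{pp:YH_TV} with $k=2$ and a triangle inequality through $(\g_1,\g_2)$ to get
\[
\TV\bigl((\y_1,\y_2),(\y_2,\y_1)\mid\x_{1:2}\bigr)\le \sigma^{-1}\sqrt{\norm{\x_1}^2+\norm{\x_2}^2},
\]
whose expectation is at most $\sqrt{2s}/\sigma=O(1)$. For the ``swap-is-no-worse'' set $A$, the events $\{(\y_1,\y_2)\in A\}$ and $\{(\y_2,\y_1)\in A\}$ partition the space up to a null set, so $1\le 2\PP((\y_1,\y_2)\in A)+\sqrt{2s}/\sigma$, yielding $\PP(1\bowtie 2)\ge K/(2n)$ under the stated hypothesis. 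The amplification to $e^{-(K-4)/4}$ then comes from \emph{independence over disjoint pairs}: the events $\{i\bowtie i{+}1\}$ for odd $i$ depend on disjoint blocks $(\x_i,\x_{i+1},\beps_i,\beps_{i+1},\beps_i',\beps_{i+1}')$, and any one of them forces $\tilde\pi\ne\pi^*$, giving
\[
\PP(\tilde\pi=\pi^*)\le\prod_{i\ \mathrm{odd}}\bigl(1-\PP(i\bowtie i{+}1)\bigr)\le\Bigl(1-\tfrac{K}{2n}\Bigr)^{n/2-2}\le e^{-(K-4)/4}.
\]
The missing idea in your plan is precisely this: apply the change of measure at the scale of a pair (so the TV cost is $O(1)$), and recover the global statement by independence across $\lfloor n/2\rfloor$ disjoint pairs rather than by a single global coupling.
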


\begin{proof}
First by triangle inequality and Proposition \ref{pp:YH_TV}
we have
\begin{equation*}
\begin{split}
&\TV((\y_1,\y_2),(\y_2,\y_1) \mid \x_{1:2})\\
&\qquad\le \TV((\y_1,\y_2),(\g_1,\g_2) \mid \x_{1:2})
+\TV((\y_2,\y_1),(\g_1,\g_2) \mid \x_{1:2})\\
&\qquad\le \frac{1}{\sigma}(\norm{\x_1}^2+\norm{\x_2}^2)^{1/2}.
\end{split}
\end{equation*}
This directly implies that if we write the following set
\begin{equation*}
A \coloneqq \{(\a,\b): \inner{\a}{\y_2'} + \inner{\b}{\y_1'}
\ge \inner{\a}{\y_1'} + \inner{\b}{\y_2'}\}
\end{equation*}
then we have
\begin{equation*}
\abs{\PP((\y_1,\y_2)\in A \mid \x_{1:2}) - \PP((\y_2,\y_1) \in A \mid \x_{1:2})}
\le \frac{1}{\sigma}(\norm{\x_1}^2+\norm{\x_2}^2)^{1/2}.
\end{equation*}
Now by Jensen inequality we have
\begin{equation} \label{eq:y12_bound}
\begin{split}
&\abs{\PP((\y_1,\y_2)\in A) - \PP((\y_2,\y_1) \in A)}\\
&\qquad\le \EE[\abs{\PP((\y_1,\y_2)\in A \mid \x_{1:2}) - \PP((\y_2,\y_1) \in A \mid \x_{1:2})}]\\
&\qquad\le \frac{1}{\sigma} \EE[(\norm{\x_1}^2+\norm{\x_2}^2)^{1/2}]
\le \frac{1}{\sigma} \sqrt{\EE[\norm{\x_1}^2] + \EE[\norm{\x_2}^2]}
\overset{(a)}{=} \frac{\sqrt{2s}}{\sigma}
\end{split}
\end{equation}
where to get (a) we used the fact that $\|\x_1\|^2,\|\x_2\|^2 \sim \Binom(d,\frac{s}{d}).$

Now note that for continuous random variables $(\y_1,\y_2)$ and $(\y'_1, \y'_2)$,
\begin{equation*}
\PP(\inner{\y_1}{\y_2'} + \inner{\y_2}{\y_1'} = \inner{\y_1}{\y_1'} + \inner{\y_2}{\y_2'}) = 0 
\end{equation*}
The events $\{(\y_1,\y_2)\in A\}$ and $\{(\y_2,\y_1)\in A\}$ are disjoint
except for a zero-probability event.
This implies that 
\begin{equation*}
\begin{split}
1&= \PP((\y_1,\y_2) \in A) + \PP((\y_2,\y_1) \in A)
\overset{(a)}{\le} 2\PP((\y_1,\y_2) \in A) + \frac{\sqrt{2s}}{\sigma}
\end{split}
\end{equation*}
where $(a)$ is a consequence of \eqref{eq:y12_bound}.
Hence when $\sigma^2 \ge {2s}\bigl({1+\frac{K}{n}}\bigr)^{-2}$
for some $\delta > 0$,
we obtain that
\begin{equation*}
\PP((\y_1,\y_2)\in A) \ge \frac{K}{2n}.
\end{equation*}
This implies that
\begin{equation*}
\PP(\tilde\pi(1) \ne 1\;\textrm{or}\;\tilde\pi(2)\ne 2)
\ge\frac{K}{2n}.
\end{equation*}
Using the fact that the random variables
$(\bbmone\{\tilde\pi(i) \ne i\;\textrm{or}\;\tilde\pi(i+1) \ne i+1\})
_{\substack{i~\textrm{odd}\\i\le n-1}}$ are independent we obtain that 
\begin{equation*}
\begin{split}
\PP(\exists i\;\textrm{s.t.}\;\tilde{\pi}(i)\ne i)
&= \PP(\exists i\;\textrm{odd s.t.}\;\tilde{\pi}(i)\ne i \;\textrm{or}\; \tilde{\pi}(i+1) \ne i+1)\\
&\ge 1 - \biggl(1-\frac{K}{2n}\biggr)^{n/2-2}
\ge 1 - \exp\biggl(-\frac{K-4}{4}\biggr).
\end{split}
\end{equation*}
Hence we proved the desired result.
\end{proof}
The following proposition complements the previous proposition
and shows that when $\sigma^2 \le s$ perfect recovery is still not possible
if $\frac{\sigma^2\sqrt{d}}{s}$ is sufficiently large.

\begin{proposition} \label{pp:low2}
Suppose that $\frac{1}{4}\sigma^2\le s\le \frac{d}{2}$.
Assume that $\tilde\pi$ is the solution of \eqref{lin_algo}.
If $\sigma^2 \ge \frac{\sqrt{21}s}{\sqrt{d}}$ and then
\begin{equation*}
\PP(\tilde{\pi} \ne \pi^*) \ge \frac{1}{8}.
\end{equation*}
If $s \gg b_n$ and $\sigma^2\gg\frac{s}{\sqrt{db_n}}$ for a sequence $(b_n)$
diverging to infinity at a rate slower than $b_n=O(\log n)$
then
\begin{equation*}
\lim_{n\to\infty} \PP(\tilde{\pi} = \pi^*) = 0.
\end{equation*}
\end{proposition}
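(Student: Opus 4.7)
The plan is to follow the pairwise-swap strategy of Proposition~\ref{pp:low1}: define, for each pair $i\ne j$, the event $i\bowtie j$ exactly as there, note that perfect recovery fails whenever $i\bowtie j$ holds for some pair, and lower-bound $\PP(i\bowtie j)$, which is the same as $\PP(X_{ij}\le 0)$ with $X_{ij}\coloneqq\inner{\y_i-\y_j}{\y_i'-\y_j'}$. The crucial difference from Proposition~\ref{pp:low1} is that now $\sigma^2\le s$, so the dominant fluctuation in $X_{ij}$ comes from the noise--noise interaction $\inner{\beps_i-\beps_j}{\beps_i'-\beps_j'}$ rather than from the signal--noise cross term, making the relevant noise scale $\sigma^2\sqrt{d}$ instead of $\sigma\sqrt{s}$.

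Writing $\beps_i-\beps_j=\sqrt{2}\sigma\g_1$, $\beps_i'-\beps_j'=\sqrt{2}\sigma\g_2$ with $\g_1,\g_2\sim\cN(\0,\I_d)$ independent and independent of $\X$, I would expand
\begin{equation*}
X_{ij}=\norm{\x_i-\x_j}^2+\sqrt{2}\sigma\inner{\x_i-\x_j}{\g_1+\g_2}+2\sigma^2\inner{\g_1}{\g_2}
\end{equation*}
and apply the orthogonal change of variables $\u=(\g_1+\g_2)/\sqrt{2}$, $\v=(\g_1-\g_2)/\sqrt{2}$ (again independent standard Gaussians) to obtain the identity $X_{ij}=\sigma^2\norm{\u+(\x_i-\x_j)/\sigma}^2-\sigma^2\norm{\v}^2$. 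Conditionally on $\X$, this gives $\PP(X_{ij}\le 0\mid\X)=\PP(U\le V\mid\X)$ where $U$ is a noncentral chi-squared with $d$ degrees of freedom and noncentrality $\lambda=\norm{\x_i-\x_j}^2/\sigma^2$, $V$ is an independent central chi-squared with $d$ degrees of freedom, so $\EE[V-U\mid\X]=-\lambda$ and $\sigma_0^2\coloneqq\Var(V-U\mid\X)=4d+4\lambda$.

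For part one, I bound $\norm{\x_i-\x_j}^2\le 4s$ by Markov on $\EE[\norm{\x_i-\x_j}^2]=2s(1-s/d)\le 2s$, which holds with probability at least $1/2$. On this event and under $\sigma^2\ge\sqrt{21}s/\sqrt{d}$, one has $\lambda\le 4s/\sigma^2\le 4\sqrt{d}/\sqrt{21}$, so the normalized mean shift $\lambda/\sigma_0\le 2/\sqrt{21}$ is a small constant. A quantitative anti-concentration bound for $V-U$ --- obtainable either by a second-moment/Paley--Zygmund argument on $(V-U+\lambda)^2$ using the bounded normalized fourth moment $\EE[(V-U+\lambda)^4]\le C\sigma_0^4$, or by a Berry--Esseen estimate on the sum $V-U=\sum_{k=1}^d(Z_k'^2-(Z_k+m_k)^2)$ of $d$ independent summands with bounded fourth moments --- then gives $\PP(V\ge U\mid\X)\ge 1/4$ on the Markov event, yielding $\PP(1\bowtie 2)\ge 1/8$ as required.

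For part two, the same calculation shows that on $\{\norm{\x_i-\x_j}^2\le 4s\}$ the normalized shift $\lambda/\sigma_0$ is at most $O(\sqrt{b_n})$ (using $\sigma^2\gg s/\sqrt{db_n}$), so a Gaussian lower-tail estimate via Berry--Esseen applied to $V-U$ yields $\PP(V\ge U\mid\X)\ge ce^{-c'b_n}$ for absolute constants $c,c'>0$. I then consider the $n/2$ disjoint pairs $(2k-1,2k)$; conditional on $\X$, their swap events are mutually independent because each involves a disjoint collection of noise vectors, and the conditioning events $\{\norm{\x_{2k-1}-\x_{2k}}^2\le 4s\}$ are themselves independent (disjoint indices in $\X$). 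The product formula then gives $\PP(\tilde{\pi}=\pi^*)\le(1-c_0 e^{-c'b_n})^{n/2}\to 0$ for some $c_0>0$, since $ne^{-c'b_n}\to\infty$ whenever $b_n=o(\log n)$. The main obstacle is quantifying the anti-concentration at the right scale: in part one, ensuring the fourth-moment or Berry--Esseen estimate is tight enough to pull out the concrete constant $1/8$ non-asymptotically; in part two, keeping the Berry--Esseen correction $O(1/\sqrt{d})$ strictly below the target tail $e^{-c'b_n}$, which requires $d$ to grow faster than $e^{2c'b_n}$ (a mild condition given $b_n=o(\log n)$). An alternative clean route is to apply Proposition~\ref{pr:inner_lower} to the orthogonal component $\inner{\g_1^\perp}{\g_2^\perp}$ obtained by decomposing $\g_1,\g_2$ along $\x_i-\x_j$ and its orthogonal complement, combined with standard Gaussian tail control on the signal--noise cross term.
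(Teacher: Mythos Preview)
Your proposal is correct and takes a genuinely different route from the paper.

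\textbf{What you do differently.} Your core identity $X_{ij}=\sigma^2\norm{\u+(\x_i-\x_j)/\sigma}^2-\sigma^2\norm{\v}^2$ is an elegant exact rewriting that the paper does not use. The paper instead keeps the three pieces of the expansion separate, defining
\[
A_0=\{-\inner{\beps_i-\beps_j}{\beps_i'-\beps_j'}\ge (a_1+a_2)s\},\quad
A_1=\{\norm{\x_i-\x_j}^2\le a_1 s\},\quad
A_2=\{\inner{\x_i-\x_j}{\beps_i-\beps_j+\beps_i'-\beps_j'}\le a_2 s\},
\]
and bounds $\PP(i\bowtie j)\ge \PP(A_0)-\PP(A_1^c)-\PP(A_2^c\mid A_1)$ term by term: Chernoff for $A_1^c$, Gaussian concentration for $A_2^c$, and for $A_0$ a Berry--Esseen bound in part one and Proposition~\ref{pr:inner_lower} (applied directly to $\inner{\beps_i-\beps_j}{\beps_i'-\beps_j'}$, with no orthogonal decomposition) in part two. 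Your Markov bound on $\norm{\x_i-\x_j}^2$ with success probability $1/2$ is replaced in the paper by Chernoff, giving exponentially small failure probability and hence more room to spare when assembling the constant $1/8$.

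\textbf{What each buys.} Your chi-squared reduction packages signal, cross term, and noise into a single noncentral-minus-central comparison, which is conceptually cleaner and makes the role of the noncentrality $\lambda=\norm{\x_i-\x_j}^2/\sigma^2$ transparent; but it pushes all the work into one anti-concentration estimate whose constants you must track carefully. The paper's three-event split is less pretty but more modular: each piece is a standard tail bound, and in part two Proposition~\ref{pr:inner_lower} delivers the needed lower bound $\PP(A_0)\gtrsim e^{-Cs^2/(d\sigma^4)}$ \emph{without} any Berry--Esseen correction, so the side condition $d\gg e^{2c'b_n}$ that you flag never arises. Your final ``alternative clean route'' via Proposition~\ref{pr:inner_lower} is in fact exactly how the paper handles part two, except that the paper applies it to the raw noise inner product rather than to an orthogonal component.
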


\begin{proof}
Recall that $i \bowtie j$ if
\begin{equation*}
\inner{\y_i}{\y_i'} + \inner{\y_j}{\y_j'}
\le \inner{\y_i}{\y_j'} + \inner{\y_j}{\y_i'};
\end{equation*}
which by definition of $\y_i,\y_j$ and of $\y'_i,\y'_j$ means that
\begin{equation*}
\inner{\x_i + \beps_i}{\x_i + \beps_i'}
+ \inner{\x_j + \beps_j}{\x_j + \beps_j'}
\le \inner{\x_i + \beps_i}{\x_j + \beps_j'}
+ \inner{\x_j + \beps_j}{\x_i + \beps_i'}.
\end{equation*}
Rearranging the terms yields $i \bowtie j$ if and only if 
\begin{equation*}
\norm{\x_i - \x_j}^2
+ \inner{\x_i - \x_j}{\beps_i - \beps_j + \beps_i' - \beps_j'}
\le - \inner{\beps_i - \beps_j}{\beps_i' - \beps_j'}.
\end{equation*}
Define the events
\begin{align}\label{ts11}
A_0 &\coloneqq \biggl\{- \inner{\beps_i - \beps_j}{\beps_i' - \beps_j'}
\ge (a_1 + a_2)s\biggr\},\\
A_1 &\coloneqq \biggl\{\norm{\x_i - \x_j}^2 \le a_1 s\biggr\}\nonumber,\\
A_2 &\coloneqq \biggl\{\inner{\x_i - \x_j}
{\beps_i - \beps_j + \beps_i' - \beps_j'}
\le a_2 s\biggr\}.\nonumber
\end{align}

We first remark that 
\begin{equation*}
\begin{split}
\PP(i \bowtie j)
&\ge \PP(A_0 \cap A_1 \cap A_2)
= \PP(A_0) - \PP(A_0 \cap A_1^c) - \PP(A_0 \cap A_1 \cap A_2^c)\\
&\ge \PP(A_0) - \PP(A_1^c) - \PP(A_1 \cap A_2^c)
\ge \PP(A_0) - \PP(A_1^c) - \PP(A_2^c \mid A_1).
\end{split}
\end{equation*}
Hence to obtain the desired result we will bound $\PP(A_0)$ from below
and $\PP(A_1^c),\PP(A_2^c \mid A_1)$ from above.

We first focus on the first result. 
In this goal, we remark that
$\inner{\beps_i - \beps_j}{\beps_i' - \beps_j'}
= \sum_{k=1}^d(\beps_i(k) - \beps_j(k))(\beps_i'(k) - \beps_j'(k))$
is a sum of i.i.d.\ random variables.
Hence to lower bound $\PP(A_0)$ we can use the central limit theorem.
We choose $a_1 = 7$ and $a_2 = 14$ in \eqref{ts11}. Moreover, for all  $k \in [d]$ we define 
\begin{equation*}
\b(k) \coloneqq (\beps_i(k) - \beps_j(k))
(\beps_i'(k) - \beps_j'(k)).
\end{equation*}

By definition of $\beps_i$ and $\beps_j$, we know that
\begin{align*}
\EE[\b(k)] &= 0,\\
\EE[\b(k)^2] &= \EE[(\beps_i(k) - \beps_j(k))^2]
\EE[(\beps_i'(k) - \beps_j'(k))^2] = 4\sigma^4,\\
\EE[\abs{\b(k)}^3] &= \EE[\abs{\beps_i(k) - \beps_j(k)}^3]
\EE[\abs{\beps_i'(k) - \beps_j'(k)}^3] = \frac{64\sigma^6}{\pi}.
\end{align*}
Therefore, by Berry--Esseen theorem~\citep{tyurin2010improvement}, we have
\begin{equation*}
\biggl\lvert\PP(A_0) - \Phi^c\biggl(
\frac{21s}{2\sigma^2\sqrt{d}}\biggr)\biggr\rvert
\le \frac{4}{\pi\sqrt{d}}.
\end{equation*}
Since $d \ge (21s/\sigma^2)^2$ and $s \ge 4\sigma^2$, we obtain that 
\begin{equation*}
\PP(A_0)
\ge \Phi^c\biggl(\frac{21s}{2\sigma^2\sqrt{d}}\biggr) - \frac{4}{\pi\sqrt{d}}
\ge \Phi^c\biggl(\frac{1}{4}\biggr) - \frac{4}{42\pi} \ge \frac{3}{10} - \frac{1}{30} = \frac{4}{15}.
\end{equation*}

\noindent We now aim to upper bound $\PP(A_1^c)$.
Since
\begin{equation*}
\EE\norm{\x_i - \x_j}^2 = 2d\cdot\frac{s}{d}\cdot\biggl(1 - \frac{s}{d}\biggr)
= 2s\biggl(1 - \frac{s}{d}\biggr),
\end{equation*}
using $\delta = 6$ in Proposition~\ref{pp:chernoff} yields
\begin{equation*}
\PP(A_1^c)
\le \exp\biggl(-\frac{\delta^2 \EE\norm{\x_i - \x_j}^2}{2+\delta}\biggr)
= \exp\biggl(- 9s\biggl(1 - \frac{s}{d}\biggr)\biggr) \le \exp\biggl(-\frac{9}{2}\biggr)
\end{equation*}
for $d \ge 2s$.

Finally we remark that conditional on $\x_i$ and $\x_j$, the random variable
$\inner{\x_i - \x_j}{\beps_i - \beps_j + \beps_i' - \beps_j'}$
is distributed as $\cN(0, 4\norm{\x_i - \x_j}^2\sigma^2)$.
Hence, by Gaussian concentration, we obtain that 
\begin{equation*}
\PP(A_2^c \mid \x_i, \x_j)
\le \exp\biggl(-\frac{49s^2}{2\norm{\x_i - \x_j}^2\sigma^2}\biggr)
\le \exp\biggl(-\frac{98s}{\norm{\x_i - \x_j}^2}\biggr).
\end{equation*}
Then, we have
\begin{equation*}
\PP(A_2^c \mid A_1) \le \exp(-7).
\end{equation*}

Putting them together, we have when $d \ge{(21s)^2/\sigma^4}$
\begin{equation*}
\PP(i \bowtie j) \ge \frac{4}{15} - \exp\biggl(-\frac{9}{2}\biggr) - \exp(-7) \ge \frac{1}{4}.
\end{equation*}
Now by a union bound argument we know that
\begin{equation*}
\PP(i \bowtie j) \le \PP(\hat\pi(i)\ne i)+\PP(\hat\pi(j)\ne j)
= 2\PP(\hat\pi(i)\ne i).
\end{equation*}
This directly implies the first result.

We will now establish the second one.
In this goal we note that as we have assumed that $s\gg b_n$ we know that
\begin{align*}
\max\biggl(\frac{\sqrt{sb_n}}{16(1+\sqrt{s/b_n})},\frac{\sqrt{sb_n}}{2(2\sqrt{s/b_n}+1)}\biggr)
&\sim \max\biggl( \frac{b_n}{16},\frac{b_n}{4} \biggr),\\
\frac{11(1+\sigma+\sqrt{s/b_n})^2sb_n}{2d\sigma^4}
&\sim \frac{11s^2}{2d\sigma^4}.
\end{align*}
Moreover we have assumed that $b_n\gg \frac{s^2}{d\sigma^4}$ 
hence there exist constants $C_1,C_2>0$ such that for $n$ large enough
we have 
\begin{align*}
&\exp\biggl(-\frac{C_2^2\sqrt{sb_n}}{16(C_1+\sqrt{s/b_n})}\biggr)
+\exp\biggl(-\frac{\sqrt{sb_n}C_1^2}{2(2\sqrt{s/b_n}+C_1)}\biggr)\\
&\qquad\ge\frac{1}{2}\exp\biggl(-\frac{11(C_1+C_2\sigma+\sqrt{s/b_n})^2sb_n}{2d\sigma^4}\biggr)
\end{align*}
and
\begin{align*}
&\exp\biggl(-\frac{11(C_1+C_2\sigma+\sqrt{s/b_n})^2sb_n}{2d\sigma^4}\biggr)
\le 1-\frac{\sqrt{3}}{2}.
\end{align*}
Set $a_1 \coloneqq C_1\frac{\sqrt{b_n}}{\sqrt{s}}+1$ and $a_2 \coloneqq C_2\sigma\frac{\sqrt{b_n}}{\sqrt{s}}$.
Using Proposition~\ref{pr:inner_lower} with $u=(a_1+a_2)s/\sqrt{d}$
we have
\begin{align*}
\PP(A_0) &\ge \biggl(1 - \exp\biggl({-\frac{(a_1+a_2)^2s^2}{d\sigma^4}}\biggr)\biggr)^2
\exp\biggl({-\frac{11(a_1+a_2)^2s^2}{2d\sigma^4}}\biggr)\\
&\ge  \biggl(1 - \exp\biggl({-\frac{(C_1+C_2\sigma+\sqrt{{s}/b_n})^2sb_n}{d\sigma^4}}\biggr)\biggr)^2 
\exp\biggl({-\frac{11(C_1+C_2\sigma+\sqrt{s/b_n})^2sb_n}{2d\sigma^4}}\biggr)\\
&\sim\biggl(1 - \exp\biggl({-\frac{s^2}{d\sigma^4}}\biggr)\biggr)^2
\exp\biggl({-\frac{11s^2}{2d\sigma^4}}\biggr).
\end{align*}
Write $\delta=a_1-1$  then we have\begin{align*}
\PP(A_1^c)
&\le \exp\biggl(-\frac{\delta^2 \EE\norm{\x_i - \x_j}^2}{2+\delta}\biggr)
= \exp\biggl(- \frac{(a_1-1)^2}{1+a_1}s\biggl(1 - \frac{s}{d}\biggr)\biggr)\\
&\overset{(a)}{\le}\exp\biggl(- \frac{(a_1-1)^2}{2(1+a_1)}s\biggr)
\le \exp\biggl(- \frac{\sqrt{sb_n}C_1^2}{2(2\sqrt{s/b_n}+C_1)}\biggr)
\sim  \exp\biggl(- \frac{b_nC_1^2}{4}\biggr)
\end{align*}
where (a) is due to the assumption that $d\ge 2s$. Now once again, conditionally on $\x_i$ and $\x_j$, the random variable
$\inner{\x_i - \x_j}{\beps_i - \beps_j + \beps_i' - \beps_j'}$
is distributed as $\cN(0, 4\norm{\x_i - \x_j}^2\sigma^2)$.
Hence we also have \begin{equation*}
\PP(A_2^c \mid \x_i, \x_j)
\le \exp\biggl(-\frac{a_2^2s^2}{16\norm{\x_i - \x_j}^2\sigma^2}\biggr)
\end{equation*}
and therefore
\begin{equation*}
\PP(A_2^c \mid A_1)
\le\exp\biggl(-\frac{a_2^2s}{16a_1\sigma^2}\biggr)\le \exp\biggl(-\frac{C_2^2\sqrt{sb_n}}{16(C_1+\sqrt{s/b_n})}\biggr)\sim\exp\biggl(-\frac{C_2^2b_n}{16}\biggr) .
\end{equation*}
Hence, for $n$ large enough we have
\begin{equation*}
\begin{split}
\PP(i \bowtie j)
&\ge \exp\biggl({-\frac{11(C_1+C_2\sigma+\sqrt{s/b_n})^2sb_n}{2d\sigma^4}}\biggr)\\
&\phantom{{}\ge{}}
\times\biggl(\biggl(1 - \exp\biggl({-\frac{(C_1+C_2\sigma+\sqrt{{s}/b_n})^2sb_n}{d\sigma^4}}\biggr)\biggr)^2
-\frac{1}{2}\biggr)
\\&\ge \frac{1}{4}\exp\biggl({-\frac{11(C_1+C_2\sigma+\sqrt{s/b_n})^2sb_n}{2d\sigma^4}}\biggr)
\sim\frac{1}{4}\exp\biggl({-\frac{11s^2}{2d\sigma^4}}\biggr).
\end{split}
\end{equation*}
Moreover for $n$ large enough we have
\begin{align*}
\PP(\exists i,j \;\textrm{s.t.}\; i \bowtie j)
&\ge \PP(\exists \;\textrm{odd}\; i \le n-1\;\textrm{s.t.}\; i \bowtie (i+1))\\
&\ge 1-(1 - \PP(1 \bowtie 2))^{\lfloor (n-1)/2\rfloor}\\
&\ge 1-\biggl(1+\frac{n}{8}\exp\biggl({-\frac{11(C_1+C_2\sigma+\sqrt{s/b_n})^2sb_n}{2d\sigma^4}}\biggr)\biggr)^{-1}.
\end{align*}
We have assumed that $\frac{s^2}{2d\sigma^4}\ll b_n=O(\log(n))$ which implies that 
\begin{equation*}
\biggl(1+\frac{n}{8}
\exp\biggl({-\frac{11(C_1+C_2\sigma+\sqrt{s/b_n})^2sb_n}{2d\sigma^4}}\biggr)\biggr)^{-1}
\sim\frac{1}{4} \exp\biggl({-\frac{11s^2}{2d\sigma^4}}\biggr)=o(1).
\end{equation*}
Hence we obtain the desired result.
\end{proof}

\section{ADDITIONAL EXPERIMENTS}
We include experiments on two additional datasets:
Marvel Universe Social Graph (\url{https://syntagmatic.github.io/marvel/})
and PubMed Diabetes \citep{namata2012query}.
The Marvel dataset consists of Marvel characters (heroes) as nodes in the graph.
The node features are the comic issues in which the hero appeared.
Two heroes are connected by an edge if they appeared in the same comic issue.
This follows our definition of a random intersection graph.
The PubMed dataset is created similarly to Cora and CiteSeer,
but with more articles and fewer features.
Each publication is described by a TF/IDF weighted word vector.
A summary of the two datasets can be found in Table~\ref{tb:stats_add}.

\begin{table}[h!]
\centering
\caption{Summary of datasets.}
\label{tb:stats_add}
\begin{tabular}[t]{c|c|c|c}
\hline
Dataset & \# Vertices & \# Edges & \# Features\\ 
\hline
Marvel & $6,444$ & $574,467$ & $12,849$\\
PubMed & $19,717$ & $88,648$ & $500$\\
\hline
\end{tabular}
\end{table}

We again add independent Gaussian noise to the node features
and find the matching using the linear method and the GNN.
The results are shown in Figure~\ref{fg:acc_add}.
Similar trends can be observed in the plots:
the linear method fails when the feature noise is large,
while the GNN can tolerate larger amounts of noise
with the help of the graph structure.
\begin{figure}[h!]
\centering
\subfigure[Marvel]{
\includegraphics[width=0.45\textwidth]{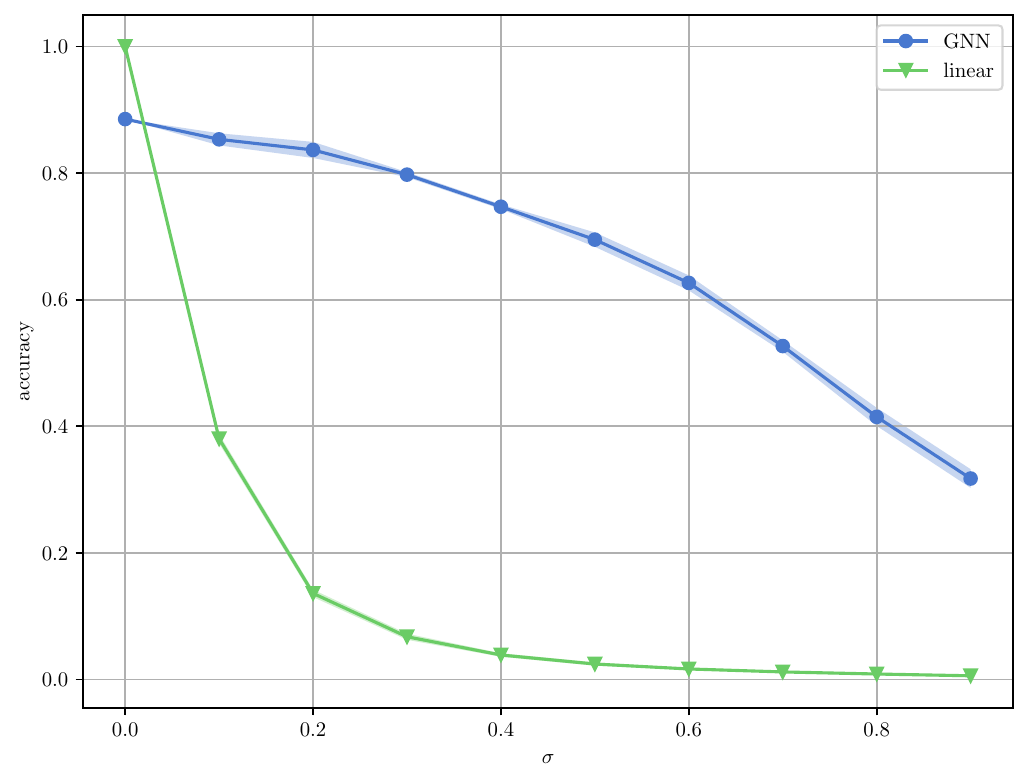}
}
\subfigure[PubMed]{
\includegraphics[width=0.45\textwidth]{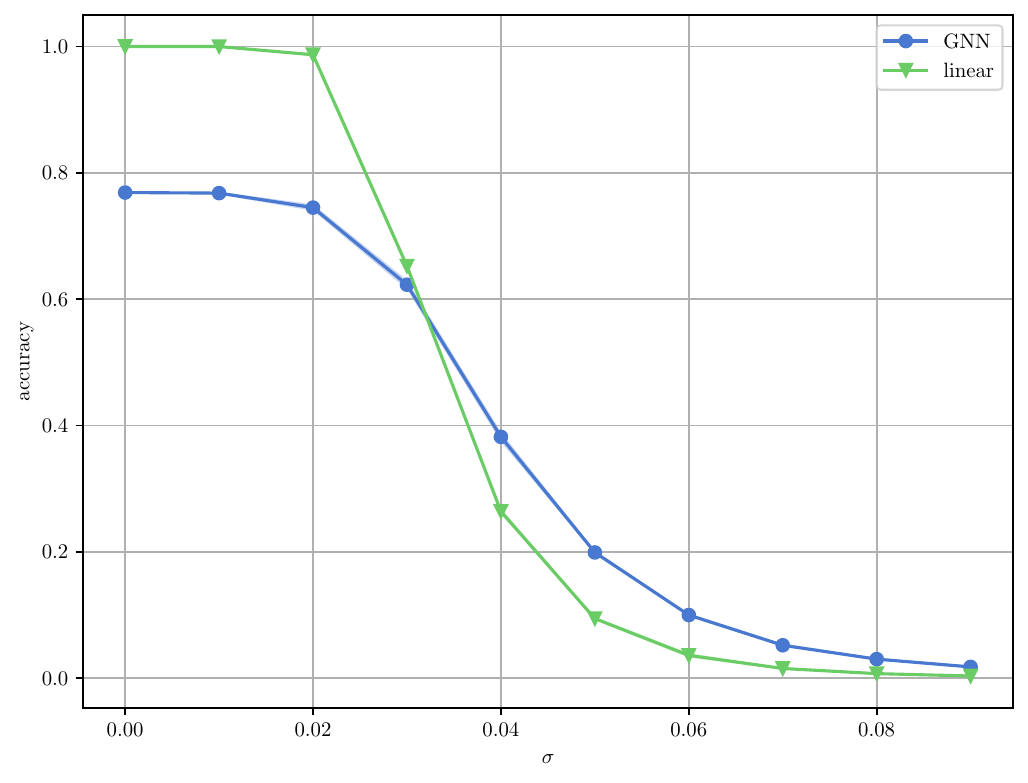}
}
\caption{Impact of the noise parameter on real-world datasets.}
\label{fg:acc_add}
\end{figure}


\begin{thebibliography}{61}
\providecommand{\natexlab}[1]{#1}
\providecommand{\url}[1]{\texttt{#1}}
\expandafter\ifx\csname urlstyle\endcsname\relax
  \providecommand{\doi}[1]{doi: #1}\else
  \providecommand{\doi}{doi: \begingroup \urlstyle{rm}\Url}\fi

\bibitem[Ajtai et~al.(1984)Ajtai, Koml{\'o}s, and Tusn{\'a}dy]{ajtai1984optimal}
Mikl{\'o}s Ajtai, J{\'a}nos Koml{\'o}s, and G{\'a}bor Tusn{\'a}dy.
\newblock On optimal matchings.
\newblock \emph{Combinatorica}, 4:\penalty0 259--264, 1984.

\bibitem[Baranwal et~al.(2021)Baranwal, Fountoulakis, and Jagannath]{baranwal2021graph}
Aseem Baranwal, Kimon Fountoulakis, and Aukosh Jagannath.
\newblock Graph convolution for semi-supervised classification: Improved linear separability and out-of-distribution generalization.
\newblock \emph{arXiv preprint arXiv:2102.06966}, 2021.

\bibitem[Bordes et~al.(2013)Bordes, Usunier, Garcia-Duran, Weston, and Yakhnenko]{bordes2013translating}
Antoine Bordes, Nicolas Usunier, Alberto Garcia-Duran, Jason Weston, and Oksana Yakhnenko.
\newblock Translating embeddings for modeling multi-relational data.
\newblock \emph{Advances in Neural Information Processing Systems}, 26, 2013.

\bibitem[Brennan et~al.(2020)Brennan, Bresler, and Nagaraj]{brennan2020phase}
Matthew Brennan, Guy Bresler, and Dheeraj Nagaraj.
\newblock Phase transitions for detecting latent geometry in random graphs.
\newblock \emph{Probability Theory and Related Fields}, 178\penalty0 (3-4):\penalty0 1215--1289, 2020.

\bibitem[Bubeck et~al.(2016)Bubeck, Ding, Eldan, and R{\'a}cz]{bubeck2016testing}
S{\'e}bastien Bubeck, Jian Ding, Ronen Eldan, and Mikl{\'o}s~Z R{\'a}cz.
\newblock Testing for high-dimensional geometry in random graphs.
\newblock \emph{Random Structures \& Algorithms}, 49\penalty0 (3):\penalty0 503--532, 2016.

\bibitem[Chung et~al.(2024)Chung, Saberi, and Austern]{chung2024statistical}
Alan Chung, Amin Saberi, and Morgane Austern.
\newblock Statistical guarantees for link prediction using graph neural networks.
\newblock \emph{arXiv preprint arXiv:2402.02692}, 2024.

\bibitem[Cullina and Kiyavash(2016)]{cullina2016improved}
Daniel Cullina and Negar Kiyavash.
\newblock Improved achievability and converse bounds for erdos-r{\'e}nyi graph matching.
\newblock \emph{ACM SIGMETRICS Performance Evaluation Review}, 44\penalty0 (1):\penalty0 63--72, 2016.

\bibitem[Cullina et~al.(2019)Cullina, Kiyavash, Mittal, and Poor]{cullina2019partial}
Daniel Cullina, Negar Kiyavash, Prateek Mittal, and H~Vincent Poor.
\newblock Partial recovery of {E}rd\"os-{R}\'enyi graph alignment via k-core alignment.
\newblock \emph{Proceedings of the ACM on Measurement and Analysis of Computing Systems}, 3\penalty0 (3):\penalty0 1--21, 2019.

\bibitem[Devroye et~al.(2011)Devroye, Gy{\"o}rgy, Lugosi, and Udina]{devroye2011high}
Luc Devroye, Andr{\'a}s Gy{\"o}rgy, G{\'a}bor Lugosi, and Frederic Udina.
\newblock {High-Dimensional Random Geometric Graphs and their Clique Number}.
\newblock \emph{Electronic Journal of Probability}, 16:\penalty0 2481 -- 2508, 2011.

\bibitem[Ding and Du(2023)]{ding2023matching}
Jian Ding and Hang Du.
\newblock Matching recovery threshold for correlated random graphs.
\newblock \emph{The Annals of Statistics}, 51\penalty0 (4):\penalty0 1718--1743, 2023.

\bibitem[Ding et~al.(2021)Ding, Ma, Wu, and Xu]{ding2021efficient}
Jian Ding, Zongming Ma, Yihong Wu, and Jiaming Xu.
\newblock Efficient random graph matching via degree profiles.
\newblock \emph{Probability Theory and Related Fields}, 179:\penalty0 29--115, 2021.

\bibitem[Duranthon and Zdeborova(2024)]{duranthon2024optimal}
O~Duranthon and Lenka Zdeborova.
\newblock Optimal inference in contextual stochastic block models.
\newblock \emph{Transactions on Machine Learning Research}, 2024.
\newblock ISSN 2835-8856.

\bibitem[Esser et~al.(2021)Esser, Chennuru~Vankadara, and Ghoshdastidar]{esser2021learning}
Pascal Esser, Leena Chennuru~Vankadara, and Debarghya Ghoshdastidar.
\newblock Learning theory can (sometimes) explain generalisation in graph neural networks.
\newblock \emph{Advances in Neural Information Processing Systems}, 34:\penalty0 27043--27056, 2021.

\bibitem[Fan et~al.(2019)Fan, Ma, Li, He, Zhao, Tang, and Yin]{fan2019graph}
Wenqi Fan, Yao Ma, Qing Li, Yuan He, Eric Zhao, Jiliang Tang, and Dawei Yin.
\newblock Graph neural networks for social recommendation.
\newblock In \emph{The World Wide Web Conference}, pages 417--426, 2019.

\bibitem[Fan et~al.(2023{\natexlab{a}})Fan, Mao, Wu, and Xu]{fan2023spectralI}
Zhou Fan, Cheng Mao, Yihong Wu, and Jiaming Xu.
\newblock Spectral graph matching and regularized quadratic relaxations i algorithm and gaussian analysis.
\newblock \emph{Foundations of Computational Mathematics}, 23\penalty0 (5):\penalty0 1511--1565, 2023{\natexlab{a}}.

\bibitem[Fan et~al.(2023{\natexlab{b}})Fan, Mao, Wu, and Xu]{fan2023spectralII}
Zhou Fan, Cheng Mao, Yihong Wu, and Jiaming Xu.
\newblock Spectral graph matching and regularized quadratic relaxations ii: Erd{\H{o}}s-r{\'e}nyi graphs and universality.
\newblock \emph{Foundations of Computational Mathematics}, 23\penalty0 (5):\penalty0 1567--1617, 2023{\natexlab{b}}.

\bibitem[Garg et~al.(2020)Garg, Jegelka, and Jaakkola]{garg2020generalization}
Vikas Garg, Stefanie Jegelka, and Tommi Jaakkola.
\newblock Generalization and representational limits of graph neural networks.
\newblock In \emph{International Conference on Machine Learning}, pages 3419--3430. PMLR, 2020.

\bibitem[Gilmer et~al.(2017)Gilmer, Schoenholz, Riley, Vinyals, and Dahl]{gilmer2017neural}
Justin Gilmer, Samuel~S Schoenholz, Patrick~F Riley, Oriol Vinyals, and George~E Dahl.
\newblock Neural message passing for quantum chemistry.
\newblock In \emph{International Conference on Machine Learning}, pages 1263--1272. PMLR, 2017.

\bibitem[Grover and Leskovec(2016)]{grover2016node2vec}
Aditya Grover and Jure Leskovec.
\newblock node2vec: Scalable feature learning for networks.
\newblock In \emph{Proceedings of the 22nd ACM SIGKDD International Conference on Knowledge Discovery and Data Mining}, pages 855--864, 2016.

\bibitem[Hamilton et~al.(2017)Hamilton, Ying, and Leskovec]{hamilton2017inductive}
Will Hamilton, Zhitao Ying, and Jure Leskovec.
\newblock Inductive representation learning on large graphs.
\newblock In \emph{Advances in Neural Information Processing Systems}, volume~30, 2017.

\bibitem[Jegelka(2022)]{jegelka2022gnn}
Stefanie Jegelka.
\newblock \emph{Theory of graph neural networks: representation and learning}, volume~7, pages 5450--5476.
\newblock EMS Press, 2022.

\bibitem[Kahane(1985)]{kahane1985some}
Jean-Pierre Kahane.
\newblock \emph{Some random series of functions}, volume~5.
\newblock Cambridge University Press, 1985.

\bibitem[Kawamoto et~al.(2018)Kawamoto, Tsubaki, and Obuchi]{kawamoto2018mean}
Tatsuro Kawamoto, Masashi Tsubaki, and Tomoyuki Obuchi.
\newblock Mean-field theory of graph neural networks in graph partitioning.
\newblock \emph{Advances in Neural Information Processing Systems}, 31, 2018.

\bibitem[Keriven et~al.(2020)Keriven, Bietti, and Vaiter]{keriven2020convergence}
Nicolas Keriven, Alberto Bietti, and Samuel Vaiter.
\newblock Convergence and stability of graph convolutional networks on large random graphs.
\newblock \emph{Advances in Neural Information Processing Systems}, 33:\penalty0 21512--21523, 2020.

\bibitem[Kipf and Welling(2016)]{kipf2016semi}
Thomas~N Kipf and Max Welling.
\newblock Semi-supervised classification with graph convolutional networks.
\newblock In \emph{International Conference on Learning Representations}, 2016.

\bibitem[Kuhn(1955)]{kuhn1955hungarian}
Harold~W Kuhn.
\newblock The hungarian method for the assignment problem.
\newblock \emph{Naval Research Logistics Quarterly}, 2\penalty0 (1-2):\penalty0 83--97, 1955.

\bibitem[Kunisky and Niles-Weed(2022)]{kunisky2022strong}
Dmitriy Kunisky and Jonathan Niles-Weed.
\newblock Strong recovery of geometric planted matchings.
\newblock In \emph{Proceedings of the 2022 Annual ACM-SIAM Symposium on Discrete Algorithms (SODA)}, pages 834--876. SIAM, 2022.

\bibitem[Liu and R{\'a}cz(2023{\natexlab{a}})]{liu2023phase}
Suqi Liu and Mikl{\'o}s~Z R{\'a}cz.
\newblock Phase transition in noisy high-dimensional random geometric graphs.
\newblock \emph{Electronic Journal of Statistics}, 17\penalty0 (2):\penalty0 3512--3574, 2023{\natexlab{a}}.

\bibitem[Liu and R{\'a}cz(2023{\natexlab{b}})]{liu2023probabilistic}
Suqi Liu and Mikl{\'o}s~Z R{\'a}cz.
\newblock A probabilistic view of latent space graphs and phase transitions.
\newblock \emph{Bernoulli}, 29\penalty0 (3):\penalty0 2417--2441, 2023{\natexlab{b}}.

\bibitem[Loukas(2020)]{loukas2020graph}
Andreas Loukas.
\newblock What graph neural networks cannot learn: depth vs width.
\newblock In \emph{International Conference on Learning Representations}, 2020.

\bibitem[Lu(2021)]{lu2021learning}
Wei Lu.
\newblock Learning guarantees for graph convolutional networks on the stochastic block model.
\newblock In \emph{International Conference on Learning Representations}, 2021.

\bibitem[Magner et~al.(2020)Magner, Baranwal, and Hero]{magner2020power}
Abram Magner, Mayank Baranwal, and Alfred~O Hero.
\newblock The power of graph convolutional networks to distinguish random graph models.
\newblock In \emph{2020 IEEE International Symposium on Information Theory (ISIT)}, pages 2664--2669. IEEE, 2020.

\bibitem[Mao et~al.(2021)Mao, Rudelson, and Tikhomirov]{mao2021random}
Cheng Mao, Mark Rudelson, and Konstantin Tikhomirov.
\newblock Random graph matching with improved noise robustness.
\newblock In \emph{Conference on Learning Theory}, pages 3296--3329. PMLR, 2021.

\bibitem[Maskey et~al.(2022)Maskey, Levie, Lee, and Kutyniok]{maskey2022generalization}
Sohir Maskey, Ron Levie, Yunseok Lee, and Gitta Kutyniok.
\newblock Generalization analysis of message passing neural networks on large random graphs.
\newblock \emph{Advances in Neural Information Processing Systems}, 35:\penalty0 4805--4817, 2022.

\bibitem[Maskey et~al.(2023)Maskey, Levie, and Kutyniok]{maskey2023transferability}
Sohir Maskey, Ron Levie, and Gitta Kutyniok.
\newblock Transferability of graph neural networks: an extended graphon approach.
\newblock \emph{Applied and Computational Harmonic Analysis}, 63:\penalty0 48--83, 2023.

\bibitem[Morris et~al.(2019)Morris, Ritzert, Fey, Hamilton, Lenssen, Rattan, and Grohe]{morris2019weisfeiler}
Christopher Morris, Martin Ritzert, Matthias Fey, William~L Hamilton, Jan~Eric Lenssen, Gaurav Rattan, and Martin Grohe.
\newblock Weisfeiler and {L}eman go neural: Higher-order graph neural networks.
\newblock In \emph{Proceedings of the AAAI Conference on Artificial Intelligence}, volume~33, pages 4602--4609, 2019.

\bibitem[Namata et~al.(2012)Namata, London, Getoor, and Huang]{namata2012query}
Galileo Namata, Ben London, Lise Getoor, and Bert Huang.
\newblock Query-driven active surveying for collective classification.
\newblock In \emph{International Workshop on Mining and Learning with Graphs}, Edinburgh, Scotland, 2012.

\bibitem[Paley and Zygmund(1932)]{paley1932some}
Raymond~EAC Paley and Antoni Zygmund.
\newblock On some series of functions,(3).
\newblock In \emph{Mathematical Proceedings of the Cambridge Philosophical Society}, volume~28, pages 190--205. Cambridge University Press, 1932.

\bibitem[Penrose(2003)]{penrose2003random}
Mathew Penrose.
\newblock \emph{{Random Geometric Graphs}}.
\newblock Oxford University Press, 05 2003.

\bibitem[Racz and Sridhar(2021)]{racz2021correlated}
Miklos Racz and Anirudh Sridhar.
\newblock Correlated stochastic block models: Exact graph matching with applications to recovering communities.
\newblock \emph{Advances in Neural Information Processing Systems}, 34:\penalty0 22259--22273, 2021.

\bibitem[R{\'a}cz and Sridhar(2023)]{racz2023matching}
Mikl{\'o}s~Z R{\'a}cz and Anirudh Sridhar.
\newblock Matching correlated inhomogeneous random graphs using the k-core estimator.
\newblock In \emph{2023 IEEE International Symposium on Information Theory (ISIT)}, pages 2499--2504. IEEE, 2023.

\bibitem[Ruiz et~al.(2021)Ruiz, Wang, and Ribeiro]{ruiz2021graphon}
Luana Ruiz, Zhiyang Wang, and Alejandro Ribeiro.
\newblock Graphon and graph neural network stability.
\newblock In \emph{ICASSP 2021-2021 IEEE International Conference on Acoustics, Speech and Signal Processing (ICASSP)}, pages 5255--5259. IEEE, 2021.

\bibitem[Ruiz et~al.(2023)Ruiz, Chamon, and Ribeiro]{ruiz2023transferability}
Luana Ruiz, Luiz~FO Chamon, and Alejandro Ribeiro.
\newblock Transferability properties of graph neural networks.
\newblock \emph{IEEE Transactions on Signal Processing}, 2023.

\bibitem[Scarselli et~al.(2018)Scarselli, Tsoi, and Hagenbuchner]{scarselli2018vapnik}
Franco Scarselli, Ah~Chung Tsoi, and Markus Hagenbuchner.
\newblock The vapnik--chervonenkis dimension of graph and recursive neural networks.
\newblock \emph{Neural Networks}, 108:\penalty0 248--259, 2018.

\bibitem[Schlichtkrull et~al.(2018)Schlichtkrull, Kipf, Bloem, Van Den~Berg, Titov, and Welling]{schlichtkrull2018modeling}
Michael Schlichtkrull, Thomas~N Kipf, Peter Bloem, Rianne Van Den~Berg, Ivan Titov, and Max Welling.
\newblock Modeling relational data with graph convolutional networks.
\newblock In \emph{The Semantic Web: 15th International Conference, ESWC 2018, Heraklion, Crete, Greece, June 3--7, 2018, Proceedings 15}, pages 593--607. Springer, 2018.

\bibitem[Sen et~al.(2008)Sen, Namata, Bilgic, Getoor, Galligher, and Eliassi-Rad]{sen2008collective}
Prithviraj Sen, Galileo Namata, Mustafa Bilgic, Lise Getoor, Brian Galligher, and Tina Eliassi-Rad.
\newblock Collective classification in network data.
\newblock \emph{AI Magazine}, 29\penalty0 (3):\penalty0 93--93, 2008.

\bibitem[Sharan and Ideker(2006)]{sharan2006modeling}
Roded Sharan and Trey Ideker.
\newblock Modeling cellular machinery through biological network comparison.
\newblock \emph{Nature Biotechnology}, 24\penalty0 (4):\penalty0 427--433, 2006.

\bibitem[Singer(1996)]{singer1996random}
Karen~B Singer.
\newblock \emph{Random intersection graphs}.
\newblock PhD thesis, Johns Hopkins University, 1996.

\bibitem[Tyurin(2010)]{tyurin2010improvement}
Ilya~S Tyurin.
\newblock An improvement of upper estimates of the constants in the lyapunov theorem.
\newblock \emph{Russian Mathematical Surveys}, 65\penalty0 (3):\penalty0 201--202, 2010.

\bibitem[Wainwright(2019)]{wainwright2019high}
Martin~J. Wainwright.
\newblock \emph{High-Dimensional Statistics: A Non-Asymptotic Viewpoint}.
\newblock Cambridge Series in Statistical and Probabilistic Mathematics. Cambridge University Press, 2019.

\bibitem[Wang and Wang(2024)]{wang2024optimal}
Haixiao Wang and Zhichao Wang.
\newblock Optimal exact recovery in semi-supervised learning: a study of spectral methods and graph convolutional networks.
\newblock In \emph{Proceedings of the 41st International Conference on Machine Learning}, pages 51614--51649, 2024.

\bibitem[Wang et~al.(2022)Wang, Wu, Xu, and Yolou]{wang2022random}
Haoyu Wang, Yihong Wu, Jiaming Xu, and Israel Yolou.
\newblock Random graph matching in geometric models: the case of complete graphs.
\newblock In \emph{Conference on Learning Theory}, pages 3441--3488. PMLR, 2022.

\bibitem[Wang et~al.(2018)Wang, Lv, Lan, and Zhang]{wang2018cross}
Zhichun Wang, Qingsong Lv, Xiaohan Lan, and Yu~Zhang.
\newblock Cross-lingual knowledge graph alignment via graph convolutional networks.
\newblock In \emph{Proceedings of the 2018 Conference on Empirical Methods in Natural Language Processing}, pages 349--357, 2018.

\bibitem[Wu et~al.(2022)Wu, Xu, and Sophie]{wu2022settling}
Yihong Wu, Jiaming Xu, and H~Yu Sophie.
\newblock Settling the sharp reconstruction thresholds of random graph matching.
\newblock \emph{IEEE Transactions on Information Theory}, 68\penalty0 (8):\penalty0 5391--5417, 2022.

\bibitem[Yan et~al.(2021)Yan, Zhang, and Tong]{yan2021bright}
Yuchen Yan, Si~Zhang, and Hanghang Tong.
\newblock Bright: A bridging algorithm for network alignment.
\newblock In \emph{Proceedings of the Web Conference 2021}, pages 3907--3917, 2021.

\bibitem[Yang et~al.(2016)Yang, Cohen, and Salakhudinov]{yang2016revisiting}
Zhilin Yang, William Cohen, and Ruslan Salakhudinov.
\newblock Revisiting semi-supervised learning with graph embeddings.
\newblock In \emph{International Conference on Machine Learning}, pages 40--48. PMLR, 2016.

\bibitem[Young and Scheinerman(2007)]{young2007random}
Stephen~J Young and Edward~R Scheinerman.
\newblock Random dot product graph models for social networks.
\newblock In \emph{International Workshop on Algorithms and Models for the Web-Graph}, pages 138--149. Springer, 2007.

\bibitem[Yu et~al.(2023)Yu, Xu, and Lin]{yu2023seedgnn}
Liren Yu, Jiaming Xu, and Xiaojun Lin.
\newblock Seedgnn: graph neural network for supervised seeded graph matching.
\newblock In \emph{International Conference on Machine Learning}, pages 40390--40411. PMLR, 2023.

\bibitem[Zeng et~al.(2021)Zeng, Li, Hou, Li, and Feng]{zeng2021comprehensive}
Kaisheng Zeng, Chengjiang Li, Lei Hou, Juanzi Li, and Ling Feng.
\newblock A comprehensive survey of entity alignment for knowledge graphs.
\newblock \emph{AI Open}, 2:\penalty0 1--13, 2021.

\bibitem[Zhang and Zhou(2020)]{zhang2020non}
Anru~R Zhang and Yuchen Zhou.
\newblock On the non-asymptotic and sharp lower tail bounds of random variables.
\newblock \emph{Stat}, 9\penalty0 (1):\penalty0 e314, 2020.

\bibitem[Zhou et~al.(2020)Zhou, Cui, Hu, Zhang, Yang, Liu, Wang, Li, and Sun]{zhou2020graph}
Jie Zhou, Ganqu Cui, Shengding Hu, Zhengyan Zhang, Cheng Yang, Zhiyuan Liu, Lifeng Wang, Changcheng Li, and Maosong Sun.
\newblock Graph neural networks: A review of methods and applications.
\newblock \emph{AI Open}, 1:\penalty0 57--81, 2020.

\end{thebibliography}
\end{document}